    \def\GenerateShortVersion{}
    \def\GenerateSingleColumn{}
\def\LongVersion{}
\def\LongVersionEnd{}
\long\def\ShortVersion#1\ShortVersionEnd{}
\def\ShortVersion{}
\def\ShortVersionEnd{}
\long\def\LongVersion#1\LongVersionEnd{}
\def\DoubleColumn{}
\def\DoubleColumnEnd{}
\long\def\SingleColumn#1\SingleColumnEnd{}
\def\SingleColumn{}
\def\SingleColumnEnd{}
\long\def\DoubleColumn#1\DoubleColumnEnd{}
\newcommand{\code}[1]{\textsf{#1}}
\newcommand{\mln}{\Phi}
\newcommand{\sentence}{\Gamma}
\newcommand{\generalsentence}{{\widehat{\sentence}}}
\newcommand{\recursivesentence}{{\widetilde{\sentence}}}
\newcommand{\fotwoformula}{\psi}
\newcommand{\formula}{\alpha}
\newcommand{\weight}{w}
\newcommand{\vecweight}{\mathbf{\weight}}
\newcommand{\negweight}{\bar{w}}
\newcommand{\world}{\omega}
\newcommand{\wfomc}{WFOMC}
\newcommand{\symwfomc}{\ensuremath{\mathsf{WFOMC}}}
\newcommand{\fotwo}{\ensuremath{\mathbf{FO}^2}\xspace}
\newcommand{\ctwo}{\ensuremath{\mathbf{C}^2}\xspace}
\newcommand{\sctwo}{\ensuremath{\mathbf{SC}^2}}
\newcommand{\indicator}{\mathbbm{1}}
\newcommand{\domain}{\Delta}
\newcommand{\vecn}{\mathbf{n}}
\newcommand{\vecg}{\mathbf{g}}
\newcommand{\vecy}{\mathbf{y}}
\newcommand{\vecx}{\mathbf{x}}
\newcommand{\nat}{\mathbb{N}}
\newcommand{\real}{\mathbb{R}}
\newcommand{\pro}{\mathbb{P}}
\newcommand{\extformula}{\varphi}
\newcommand{\pair}[1]{\{#1\}}
\newcommand{\fomodels}[2]{\mathcal{M}_{#1, #2}}
\newcommand{\ufotwo}{\ensuremath{\mathbf{UFO}^2}\xspace}
\newcommand{\structure}{\mathcal{A}}
\newcommand{\typeweight}[1]{\langle \weight, \negweight\rangle(#1)}
\newcommand{\sumweight}{\mathcal{W}}
\newcommand{\relaxcelltype}[2]{{#1}_{\mid {#2}}}
\newcommand{\proj}[2]{\langle#1\rangle_{#2}}
\newcounter{ct}
\newtheorem{definition}{Definition}
\newtheorem{example}{Example}
\newtheorem{remark}{Remark}
\newtheorem{theorem}{Theorem}
\newtheorem{lemma}{Lemma}
\newtheorem{corollary}{Corollary}
\newenvironment{proofs}{%
  \proof}{\endproof}
\algrenewcommand\textproc{\code}
\def\BibTeX{{\rm B\kern-.05em{\sc i\kern-.025em b}\kern-.08em
    T\kern-.1667em\lower.7ex\hbox{E}\kern-.125emX}}
\begin{document}

\title{On Exact Sampling in the Two-Variable Fragment \\of First-Order Logic
% {\footnotesize \textsuperscript{*}Note: Sub-titles are not captured in Xplore and
% should not be used}
% \thanks{Identify applicable funding agency here. If none, delete this.}
}

% \author{\IEEEauthorblockN{1\textsuperscript{st} Given Name Surname}
% \IEEEauthorblockA{\textit{dept. name of organization (of Aff.)} \\
% \textit{name of organization (of Aff.)}\\
% City, Country \\
% email address or ORCID}
% \and
% \IEEEauthorblockN{2\textsuperscript{nd} Given Name Surname}
% \IEEEauthorblockA{\textit{dept. name of organization (of Aff.)} \\
% \textit{name of organization (of Aff.)}\\
% City, Country \\
% email address or ORCID}
% \and
% \IEEEauthorblockN{3\textsuperscript{rd} Given Name Surname}
% \IEEEauthorblockA{\textit{dept. name of organization (of Aff.)} \\
% \textit{name of organization (of Aff.)}\\
% City, Country \\
% email address or ORCID}
% \and
% \IEEEauthorblockN{4\textsuperscript{th} Given Name Surname}
% \IEEEauthorblockA{\textit{dept. name of organization (of Aff.)} \\
% \textit{name of organization (of Aff.)}\\
% City, Country \\
% email address or ORCID}
% \and
% \IEEEauthorblockN{5\textsuperscript{th} Given Name Surname}
% \IEEEauthorblockA{\textit{dept. name of organization (of Aff.)} \\
% \textit{name of organization (of Aff.)}\\
% City, Country \\
% email address or ORCID}
% \and
% \IEEEauthorblockN{6\textsuperscript{th} Given Name Surname}
% \IEEEauthorblockA{\textit{dept. name of organization (of Aff.)} \\
% \textit{name of organization (of Aff.)}\\
% City, Country \\
% email address or ORCID}
% }
\author[1,2]{Yuanhong Wang}
\author[1,2]{Juhua Pu}
\author[3,4]{Yuyi Wang}
\author[5]{Ond\v{r}ej Ku\v{z}elka}

\affil[1]{State Key Laboratory of Software Development Environment, \protect\\ Beihang University, China}
\affil[2]{Beihang Hangzhou Innovation Institute Yuhang, China}
\affil[3]{CRRC Zhuzhou Institute, China }
\affil[4]{ETH Zurich, Switzerland }
\affil[5]{Czech Technical University in Prague, Czech Republic}

\maketitle

\begin{abstract}
  In this paper, we study the sampling problem for first-order logic proposed recently by Wang et al.---how to efficiently sample a model of a given first-order sentence on a finite domain? We extend their result for the universally-quantified subfragment of two-variable logic $\mathbf{FO}^2$ ($\mathbf{UFO}^2$) to the entire fragment of $\mathbf{FO}^2$. Specifically, we prove the domain-liftability under sampling of $\mathbf{FO}^2$, meaning that there exists a sampling algorithm for $\mathbf{FO}^2$ that runs in time polynomial in the domain size. We then further show that this result continues to hold even in the presence of counting constraints, such as $\forall x\exists_{=k} y: \varphi(x,y)$ and $\exists_{=k} x\forall y: \varphi(x,y)$, for some quantifier-free formula $\varphi(x,y)$. Our proposed method is constructive, and the resulting sampling algorithms have potential applications in various areas, including the uniform generation of combinatorial structures and sampling in statistical-relational models such as Markov logic networks and probabilistic logic programs.
\end{abstract}

% \begin{IEEEkeywords}
% component, formatting, style, styling, insert
% \end{IEEEkeywords}

\section{Introduction}

Let $\sentence$ denote a function-free first-order sentence formed over a vocabulary $\mathcal{P}$, and let $\domain$ be a finite domain.
A model of $\sentence$ interprets each predicate in $\mathcal{P}$ over $\domain$ such that the interpretation satisfies $\sentence$.
We use $\fomodels{\sentence}{\domain}$ to denote the set of all models of $\sentence$ over $\domain$.
The \textit{uniform first-order model sampling problem} on $\sentence$ over $\domain$ is to uniformly generate a model $\mu$ of $\sentence$ according to the probability $\pro[\mu] = 1/|\fomodels{\sentence}{\domain}|$.
The weighted variant of this problem adds nonnegative weights to atomic facts and their negations in the models; the total weight of a model is the product of its facts' weights.
The problem is then to sample a model according to a probability strictly proportional to its weight.

We investigate the \textit{symmetric weighted first-order model sampling problem (WFOMS)} for the two-variable fragment \fotwo{} of first-order logic.
The term ``symmetric'' refers to the property that the weights are determined solely by the relation symbol.
% of the positive and negative fact, and thus can be represented by weighting functions $\weight$ and $\negweight$ that assign weights to each relation symbol for positive ($\weight$) or negative ($\negweight$) facts.
In this paper, we focus on studying the \textit{data complexity} of WFOMS---the complexity of sampling a model when the $\sentence$ and $\weight$ and $\negweight$ are fixed, and the domain is considered as an input.
In particular, we are interested in designing a \textit{domain-lifted} weighted model sampler for \fotwo{}, which runs in time polynomial in the size of the domain.

The WFOMS was first considered by \citet{wangDomainLiftedSamplingUniversal2022} who showed that the data complexity of WFOMS is in polynomial time for formulas of the universally-quantified subfragment \ufotwo{} of \fotwo{}.
The subfragment \ufotwo{}, comprising of sentences of the form $\forall x\forall y: \fotwoformula(x,y)$ with some quantifier-free formula $\fotwoformula(x,y)$, is proved to admit a lifted weighted model sampler, and then identified to be \textit{domain-liftable under sampling}.

Symmetric weighted model sampling problems have a wide range of practical applications.
For example, many problems related to the generation of combinatorial structures can easily be formulated as WFOMS and solved using the techniques developed for this problem.
There are also applications of WFOMS in the realm of \textit{statistical-relational learning (SRL)}~\cite{raedt2016statistical}.
It is known that probabilistic inference in many SRL models is reducible to weighted first-order model counting (WFOMC)~\cite{vandenbroeckLiftedProbabilisticInference2011,vandenbroeck2014Proc.FourteenthInt.Conf.Princ.Knowl.Represent.Reason.}, and the same reduction can also be applied to the corresponding sampling problems.

Among the various applications of WFOMS, the input first-order sentences are usually complex and go beyond the fragment of \ufotwo{}.
For instance, even the very simple problem of uniformly generating graphs with no isolated vertices necessitates the utilization of the existentially-quantified formula $\forall x\exists y: E(x,y)$ to encode the constraint that every vertex must have at least one incident edge.
However, directly extending the approach described in~\cite{wangDomainLiftedSamplingUniversal2022} to \fotwo{} is infeasible.
As the authors showed, their technique would at some point need to solve \#\P-hard problems: ``\textit{\dots applying our sampling algorithm on an \fotwo{} sentence with existential quantifiers is intractable (not domain-lifted) unless \FP=\#\P,\dots}''.
We stress here that they did not show the intractability of WFOMS for the \fotwo{} fragment, but rather the infeasibility of their specific method, indicating that a distinct sampling approach is required.
Moreover, the standard Skolemization techniques used in automated reasoning~\cite{robinson2001handbook} and WFOMC~\cite{vandenbroeck2014Proc.FourteenthInt.Conf.Princ.Knowl.Represent.Reason.} to eliminate existential quantifiers beforehand are not applicable to WFOMS, as they introduce either functions or negative weights, which make the resulting sampling problem ill-defined.
This further complicates the extension of the WFOMS approach to more complex formulas beyond \ufotwo{}.

\subsection{Our Contribution}

In this paper, we present a novel sampling algorithm for the full \fotwo{}.
The algorithm employs a completely different approach than Skolemization,  based on the \textit{domain recursion} scheme.
The basic idea is to consider one object from the domain at a time, and then sample the value of all related atomic facts, resulting in a new WFOMS over a smaller domain with the object removed. 
The new WFOMS has an identical form to the original one but possibly contains fewer existentially-quantified formulas.
The algorithm then runs recursively on the reduced sampling problems until the domain becomes singleton or all existentially-quantified formulas are eliminated.
We prove that the data complexity of our algorithm is in PTIME, meaning that the entire fragment of \fotwo{} is domain-liftable under sampling.

We also show how to further extend the result to the cases, where we include \textit{counting constraints}.
Specifically, our generalized algorithm can be applied to the \fotwo{} sentences with additional counting constraints of the form $\forall x\exists_{=k} y: \extformula(x,y)$ and $\exists_{=k} x\forall y: \extformula(x,y)$, where $\extformula$ is a quantifier-free formula and $k$ is a natural number.
This extension, originally proposed by ~\citet{kuusistoWeightedModelCounting2018} and~\citet{kuzelkaWeightedFirstorderModel2021} for first-order counting problems, is mainly motivated by the connection of WFOMS to the uniform generation of combinatorial structures.
For example, our algorithm can be applied to efficiently solve the uniform sampling problem of $k$-regular graphs, a problem that has been widely studied in the combinatorics community~\cite{cooperSamplingRegularGraphs2007,gaoUniformGenerationRandom2015}.
This problem can be formulated as a WFOMS on the following sentence:
\DoubleColumn
\begin{align*}
  &\forall x\forall y: (E(x,y)\Rightarrow E(y,x))\land \\
  &\forall x: \neg E(x,x)\land \\
  &\forall x\exists_{=k}y: E(x,y),
\end{align*}
\DoubleColumnEnd
\SingleColumn
\begin{equation*}
  \forall x\forall y: (E(x,y)\Rightarrow E(y,x))\land \forall x: \neg E(x,x)\land \forall x\exists_{=k}y: E(x,y),
\end{equation*}
\SingleColumnEnd
where $\forall x\exists_{=k} y: E(x,y)$ expresses that every vertex $x$ has exactly $k$ connected edges. 

\subsection{Related Work}

% 1. UFO2 paper as well as the lifted inference.
% 2. domain recursion
% 3. sampling on propositional level
% 4. construction problem of FO2

The symmetric weighted first-order model sampling problem was first proposed and studied in~\cite{wangDomainLiftedSamplingUniversal2022}.
The approach, as well as the formal \textit{liftability} notions considered in that study, were derived from the literature on \textit{lifted inference}~\cite{poole2003first,richardsonMarkovLogicNetworks2006,vandenbroeckLiftedProbabilisticInference2011}.
In lifted inference, the goal is to perform probabilistic inference in SRL models in a way that takes advantage of the \textit{symmetries} in the high-level structure of the models.
The symmetry also exists in WFOMS and is a vital property leveraged by this paper to prove the liftability under sampling of \fotwo{}.
We note here that the importance of symmetry for lifted inference (and its reduced WFOMC) has also been extensively discussed by~\citet{beameSymmetricWeightedFirstOrder2015}.

The domain recursion approach adopted in this paper is similar to the \textit{domain recursion rule} used in weighted first-order model counting~\cite{broeckCompletenessFirstOrderKnowledge2011,kazemi2016new,kazemiDomainRecursionLifted2017,toth2022ArXivPrepr.ArXiv221101164}.
The domain recursion rule for \wfomc{} is a technique that utilizes a gradual grounding process on the input first-order sentence, where only one element of the domain is grounded at a time. 
As each element is grounded, the partially grounded sentence is simplified until the element is entirely removed, resulting in a new \wfomc{} problem with a smaller domain. 
With the domain recursion rule, one can apply the principle of induction on the domain size, and compute \wfomc{} by dynamic programming.
A closely related work to this paper is the approach presented by \citet{kazemi2016new}, where they used the domain recursion rule to compute WFOMC without Skolemization~\cite{vandenbroeck2014Proc.FourteenthInt.Conf.Princ.Knowl.Represent.Reason.}, which introduces negative weights.
However, it is important to note that their approach can be only applied to some specific first-order formulas, whereas the domain recursion scheme presented in this paper, mainly designed for eliminating the existentially-quantified formulas, supports the entire \fotwo{} fragment.

It is also worth mentioning that sampling from \textit{propositional} logic formula (Boolean formula) is a relatively well-studied area~\cite{gomesModelCounting2009,chakrabortyScalableNearlyUniform2013,chakrabortyParallelScalableUniform2015}.
However, many real-world problems can be represented more naturally and concisely in first-order logic, and suffer from a significant increase in formula size when grounded out to propositional logic. 
For example, a formula of the form $\forall x\exists y: \extformula$ is encoded as a Boolean formula of the form $\bigwedge_{i=1}^n\bigvee_{j=1}^n l_{i,j}$, whose length is quadratic in the domain size $n$. 
Since even finding a solution to a such large ground formula is challenging, most sampling approaches for propositional logic instead focus on designing approximate samplers. 
We also note that these approaches are not polynomial-time in the length of the input formula, and rely on access to an efficient SAT solver.
An alternative strand of research~\cite{guo2019uniform,he2021perfect,feng2021sampling} on combinatorial sampling, focuses on the development of near-uniform and efficient sampling algorithms. 
However, these approaches can only be employed for specific Boolean formulas that satisfy a particular technical requirement known as the Lovász Local Lemma. 
The WFOMS problems studied in this paper do not typically meet the requisite criteria for the application of these techniques.

\section{Preliminaries}

In this section, we briefly review the main necessary technical concepts that we will use in the paper

\subsection{Symmetric weighted first-order model sampling}

We consider the function-free fragment of first-order logic.
An \emph{atom} of arity $k$ takes the form $P(x_1,\dots,x_k)$ where $P/k$ is from a vocabulary of predicates (also called relations), and $x_1,\dots,x_k$ are logical variables from a vocabulary of variables.
A \emph{literal} is an atom or its negation. 
A \emph{formula} is formed by connecting one or more literals together using conjunction or disjunction. 
A formula may optionally be surrounded by one or more quantifiers of the form $\forall x$ or $\exists x$, where $x$ is a logical variable. 
A logical variable in a formula is said to be \emph{free} if it is not bound by any quantifier.
A formula with no free variables is called a \emph{sentence}. 
The vocabulary of a formula $\formula$ is taken to be $\mathcal{P}_\formula$.
% A formula is \emph{ground} if it contains no logical variables. 
% The \emph{groundings} of a quantifier-free formula is the set of formulas obtained by instantiating the free variables with any possible combination of constants from $\domain$.
% A \emph{possible world} $\world$ is represented as the set of ground atoms that are true in $\world$.
% The set of all true (resp. false) atoms in a possible world $\world$ is denoted by $\world_T$ (resp. $\world_F$).
% A possible world $\world$ is called a \textit{model} of a sentence $\sentence$ if it satisfies $\sentence$ (i.e., $\world \models \sentence$), according to the usual semantics of first-order logic. 

Given a vocabulary $\mathcal{P}$, a $\mathcal{P}$-structure $\structure$ interprets each predicate in $\mathcal{P}$ over a given domain.
We often interchangeably view a structure as a set of ground literals and their conjunction.
Given a $\mathcal{P}$-structure $\structure$ and $\mathcal{P}'\subseteq\mathcal{P}$, we write $\proj{\structure}{\mathcal{P}'}$ for the $\mathcal{P}'$-reduct of $\structure$.
We follow the standard semantics of first-order logic for determining whether a structure is a model of a formula.
We denote the set of all models of a sentence $\sentence$ over the domain $\domain$ by $\fomodels{\sentence}{\domain}$. 
% We call a subset $\mathcal{P}$ of the vocabulary $\mathcal{P}_\sentence$ of a sentence $\sentence$ an \textit{independent support} of $\sentence$, if for any domain $\domain$ and any model $\mu$ of $\sentence$ over $\domain$, $\mathcal{P}$ in $\mu$ fully determines $\mathcal{P}_\sentence \setminus\mathcal{P}$.
The two-variable syntactic fragment of first-order logic (\fotwo{}) is obtained by restricting the variable vocabulary to $\pair{x,y}$. 
% \ctwo{} is an extension to \fotwo{} by introducing the counting quantifiers ``exists exactly $k$'' $\exists_{=k}$.
% We will provide the details of $\ctwo{}$ later in the paper.

The \textit{first-order model counting problem}~\cite{vandenbroeckLiftedProbabilisticInference2011} asks, when given a domain $\domain$ and a sentence $\sentence$, how many models $\sentence$ has over $\domain$.
The \textit{weighted first-order model counting problem (WFOMC)} adds a pair of weighting functions $(\weight, \negweight)$ to the input, that both map the set of all predicates in $\sentence$ to a set of weights: $\mathcal{P}_\sentence \to \real$.
Given a set $L$ of literals, the weight of $L$ is defined as
\begin{equation*}
  \typeweight{L} := \prod_{l\in L_T}\weight(\mathsf{pred}(l)) \cdot \prod_{l\in L_F}\negweight(\mathsf{pred}(l))
\end{equation*}
where $L_T$ (resp. $L_F$) denotes the set of true ground (resp. false) literals in $L$, and $\mathsf{pred}(l)$ maps a literal $l$ to its corresponding predicate name.
The value of $\symwfomc(\sentence, \domain, \weight, \negweight)$ is then the sum of the weight $\typeweight{\mu}$ over all models of $\sentence$ over $\domain$.

Recently, the model counting problem was extended to the sampling regime by~\cite{wangDomainLiftedSamplingUniversal2022}, and the \textit{symmetric weighted first-order model sampling problem (WFOMS)} defined therein is the main focus of this paper.

\begin{definition}[\textbf{Symmetric weighted first-order model sampling}]
    \label{def:swfos}
  Let $(\weight, \negweight)$ be a pair of weighting function: $\mathcal{P}_\sentence\to\real_{\ge 0}$~\footnote{The non-negative weights ensures that the sampling probability of a model is well-defined.}.
  The symmetric weighted first-order model sampling problem on $\sentence$ over a domain $\domain$ under $(\weight, \negweight)$ is to generate a model $G(\sentence, \domain, \weight, \negweight)$ of $\sentence$ over $\domain$ such that
  \begin{equation}
      \label{eq:WMS}
      \begin{aligned}
      \pro[G(\sentence, \domain, \weight, \negweight)=\mu] = \frac{\typeweight{\mu}}{\symwfomc(\sentence, \domain, \weight, \negweight)}
      \end{aligned}
  \end{equation}
  for every $\mu\in\fomodels{\sentence}{\domain}$.
\end{definition}

Following the terminology in~\cite{wangDomainLiftedSamplingUniversal2022},  we call a probabilistic algorithm that realizes a solution to the WFOMS a \textit{weighted model sampler} (WMS).
A WMS is \textit{domain-lifted} (or simply lifted) if the model generation algorithm runs in time polynomial in the size of the domain $\domain$.
A sentence, or class of sentences, is \textit{domain-liftable} (or simply \textit{liftable}) \textit{under sampling} if it admits a domain-lifted WMS.
% Several sampling problems in combinatorics can be directly solved given access to a UMS or WMS, as long as the characterizing properties of the structure in the question can be encoded as a first-order logic sentence.

\begin{example}
  \label{ex:non-isolated_graph}
  The WMS of the sentence 
  \begin{equation*}
    \left(\forall x\forall y: (E(x,y)\Rightarrow E(y,x))\land \neg E(x,x)\right)\land \left(\forall x\exists y E(x,y)\right)
  \end{equation*}
  over a domain of size $n$ under the weighting $\weight(E) = \negweight(E) = 1$ uniformly samples undirected graphs with no isolated vertices. 
\end{example}

For technical purposes, when the domain is fixed, we allow the input sentence of the \wfomc{} (and WFOMS) to contain some ground literals, e.g., $\sentence = (\forall x\forall y: fr(x,y) \land sm(x) \Rightarrow sm(y))\land sm(e_1)\land \neg sm(e_3)$ over a fixed domain of $\{e_1, e_2, e_3\}$.
The \wfomc{} problem on such sentences is also known as \textit{conditional} WFOMC~\cite{vandenbroeckConditioningFirstorderKnowledge2012,vandenbroeckComplexityApproximationBinary2013}.
We define the probability of a sentence $\Phi$ conditional on another sentence $\sentence$ over a domain $\domain$ under $(\weight, \negweight)$ as
\begin{equation*}
  \pro[\Phi\mid\sentence; \domain, \weight, \negweight] := \frac{\symwfomc(\Phi\land\sentence, \domain, \weight, \negweight)}{\symwfomc(\sentence, \domain, \weight, \negweight)}.
\end{equation*}
With a slight abuse of notation, we also write the probability of a set $L$ of ground literals conditional on a sentence $\sentence$ over a domain $\domain$ under $(\weight, \negweight)$ in the same form:
\begin{equation*}
  \pro[L \mid \sentence; \domain, \weight, \negweight] := \pro\left[\bigwedge_{l\in L}l \mid \sentence; \domain, \weight, \negweight\right].
\end{equation*}
Then, the required sampling probability of $G(\sentence, \domain, \weight, \negweight)$ in the WFOMS can be written as $\pro[G(\sentence, \domain, \weight, \negweight) = \mu] = \pro[\mu\mid \sentence; \domain, \weight, \negweight]$.
When the context is clear, we omit $\domain$ and $(\weight, \negweight)$ in the conditional probability.

We call a set $L$ of ground literals \textit{valid} in a WFOMS $(\sentence, \domain, \weight, \negweight)$, if there exists a model $\mu\in\fomodels{\sentence}{\domain}$ that includes $L$.
A \textit{skeleton} of a WFOMS $(\sentence, \domain, \weight, \negweight)$ is a subset $\mathcal{P}$ of the vocabulary $\mathcal{P}_\sentence$, such that the interpretation for $\mathcal{P}$ fully determines $\mathcal{P}_\sentence \setminus\mathcal{P}$ in the models of $\sentence$, and for any predicate $P\in\mathcal{P}_\sentence\setminus\mathcal{P}$, $\weight(P) = \negweight(P)= 1$.
Using the notion of skeleton, a WFOMS $(\sentence,\domain,\weight, \negweight)$ can be reduced to randomly generating a valid $\mathcal{P}$-structure $G(\sentence, \domain, \weight, \negweight)$ such that
\begin{equation*}
  \pro[G(\sentence, \domain,\weight, \negweight) = \proj{\mu}{\mathcal{P}}] = \pro[\proj{\mu}{\mathcal{P}}\mid \sentence; \domain, \weight, \negweight]
\end{equation*}
for every $\mu\in\fomodels{\sentence}{\domain}$, where $\mathcal{P}$ is a skeleton of the problem.

In this paper, we often convert complicated WFOMS problems into simpler ones, which are commonly referred to as \textit{reductions}. 
The essential property of such reductions is \textit{soundness}.
\begin{definition}[\textbf{Soundness}]
  \label{de:soundness}
  A reduction of a WFOMS of $(\sentence, \domain, \weight, \negweight)$ to $(\sentence', \domain', \weight', \negweight')$ is sound iff there exists a polynomial-time deterministic function $f$ mapping from $\fomodels{\sentence'}{\domain'}$ to $\fomodels{\sentence}{\domain}$, and for every model $\mu\in\fomodels{\sentence}{\domain}$, 
  \begin{equation}
    \pro[\mu\mid \sentence; \domain, \weight, \negweight] = \sum_{\substack{\mu'\in\fomodels{\sentence'}{\domain'}:\\ f(\mu') = \mu}}\pro[\mu'\mid\sentence'; \domain', \weight', \negweight'].
  \end{equation}
\end{definition}

A general mapping function $f$ used most in this paper is the projection $f(\mu') = \proj{\mu'}{\mathcal{P}_\sentence}$, where $\mathcal{P}_\sentence$ is a skeleton of $(\sentence', \domain', \weight', \negweight')$.
In this case, the mapping function is bijective and preserves the weight of the mapped models.
Through a sound reduction, we can easily transform a WMS $G'$ of $(\sentence', \weight', \negweight', \domain')$ to a WMS $G$ of $(\sentence, \weight, \negweight, \domain)$ by $G(\sentence, \domain, \weight, \negweight) = f(G'(\sentence', \domain', \weight', \negweight'))$.
Note that the soundness is transitive, i.e., if the reductions from a WFOMS $\mathfrak{S}_1$ to $\mathfrak{S}_2$ and from $\mathfrak{S}_2$ to $\mathfrak{S}_3$ are both sound, the reduction from $\mathfrak{S}_1$ to $\mathfrak{S}_3$ is also sound.

\subsection{Types and Tables}

We define a \textit{1-literal} as an atomic predicate or its negation using only the variable $x$, and a \textit{2-literal} as an atomic predicate or its negation using both variables $x$ and $y$.
An atom like $R(x,x)$ or its negation is considered a 1-literal, even though $R$ is a binary relation.
A 2-literal is always of the form $R(x,y)$ and $R(y,x)$, or their respective negations.

Let $\mathcal{P}$ be a finite vocabulary.
A \textit{1-type} over $\mathcal{P}$ is a maximally consistent set of 1-literals formed by $\mathcal{P}$.
Denote the set of all 1-types over $\mathcal{P}$ as $U_\mathcal{P}$.
The size of $U_\mathcal{P}$ is finite and only depends on the size of $\mathcal{P}$.
We often view a 1-type $\tau$ as a conjunction of its elements, whence $\tau(x)$ is simply a formula in the single variable $x$.

Let $\structure$ be a structure over $\mathcal{P}$. 
A domain element $e \in \code{dom}(\structure)$ \textit{realizes} the 1-type $\tau$ if $\structure\models\tau(e)$.
Note that every element of $\structure$ realizes exactly one 1-type over $\mathcal{P}$, which we call the \textit{1-type of the element}.
The \textit{cardinality} of a 1-type is the number of elements realizing it.

A \textit{2-table} over $\mathcal{P}$ is a maximally consistent set of 2-literals formed by $\mathcal{P}$.
We often identify a 2-table $\pi$ with a conjunction of its elements and write it as a formula $\pi(x,y)$.
Denote $T_\mathcal{P}$ the set of all 2-tables over $\mathcal{P}$, whose size also only depends on the size of $\mathcal{P}$.
Given a $\mathcal{P}$-structure $\structure$ over a domain $\domain$, the \textit{2-table of an element tuple} $(a,b) \in \domain^2$ is the unique 2-table $\pi$ that $(a, b)$ satisfies in $\structure$: $\structure\models \pi(a,b)$.
It is worth noting that the 1-types together with the 2-tables fully characterize a structure.

\begin{example}
  Consider the vocabulary $\mathcal{P} = \{F/2, G/1\}$ and the structure 
  \DoubleColumn
  $\{F(a,a), G(a), \neg F(b,b), G(b), F(a,b), \linebreak \neg F(b,a)\}$ 
  \DoubleColumnEnd
  \SingleColumn
  \begin{equation*}
      \{F(a,a), G(a), \neg F(b,b), G(b), F(a,b), \linebreak \neg F(b,a)\} 
  \end{equation*}
  \SingleColumnEnd
  over the domain $\{a, b\}$.
  The 1-type of the elements $a$ and $b$ are $F(x,x)\land G(x)$ and $\neg F(x,x)\land G(x)$ respectively.
  The cardinalities of these two 1-types are both $1$, while that of the other 1-types $F(x,x)\land\neg G(x)$ and $\neg F(x,x)\land \neg G(x)$ are both $0$.
  The 2-table of the element tuples $(a,b)$ and $(b,a)$ are $F(x,y) \land \neg F(y,x)$ and $\neg F(x,y) \land F(y,x)$ respectively.
\end{example}

% Let $\gamma$ be either a 1-type or a 2-table over $\mathcal{P}$. 
% Let $\gamma_T$ and $\gamma_F$ be the sets of positive and negative literals in $\gamma$. 
% With a slight abuse of notations, we  the weight of $\gamma$, denoted by $\typeweight{\gamma}$, is the product $\prod_{l\in\gamma_T}\weight(\code{pred}(l))\cdot\prod_{l\in\gamma_F}\negweight(\code{pred}(l))$.

\subsection{Universally Quantified \fotwo{} is Liftable under Sampling}

As an elementary attempt to the symmetric weighted first-order model sampling problem, \citet{wangDomainLiftedSamplingUniversal2022} provided a positive result of the data complexity for the \textit{universally quantified fragment of \fotwo{}} (\ufotwo) of the form $\forall x\forall y: \fotwoformula(x,y)$, where $\fotwoformula(x,y)$ is a quantifier-free formula\footnote{They went a bit beyond this fragment, e.g., \ufotwo{} with cardinality constraints, which we also handle later in this paper.}.

The proof of this result established a general framework for designing a WMS.
Therefore, We summarize the main ideas of their argument here and refer the reader to their paper for the complete proof and technical details.
We note that the approach presented here is slightly different from the original one in~\cite{wangDomainLiftedSamplingUniversal2022}.
The main divergence is that, instead of using the notion of count distribution~\cite{kuzelkaWeightedFirstorderModel2021}, we perform the sampling of 1-types by a random partition on the domain, which keeps in line with our sampling algorithm for \fotwo{}.
\begin{theorem}[Proposition~1 in \cite{wangDomainLiftedSamplingUniversal2022}]
  \label{th:ufo_liftable}
  The fragment $\ufotwo{}$ is domain-liftable under sampling.
\end{theorem}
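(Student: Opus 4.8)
The plan is to prove that \ufotwo{} is domain-liftable under sampling by exhibiting a lifted weighted model sampler (WMS) that factors the generation of a model into two stages: first sampling the \emph{1-type assignment} to all domain elements, and then sampling the \emph{2-table} for every pair of distinct elements. The key structural fact that makes this work for the universal fragment $\forall x\forall y: \fotwoformula(x,y)$ is that, once the 1-type of each element is fixed, the constraints on distinct pairs $(a,b)$ \emph{decouple}: the sentence $\fotwoformula$ imposes a constraint only on the two elements $a,b$ (together with their self-loop literals), and there is no cross-pair interaction. Hence, conditioned on the 1-type assignment, the 2-tables of the $\binom{|\domain|}{2}$ pairs become mutually independent and can be sampled pair-by-pair in polynomial time.

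First I would set up the combinatorial skeleton. Because the weights are symmetric, the weight and validity of a structure depend only on the \emph{cardinality vector} $\vecn = (n_1,\dots,n_u)$ recording how many elements realize each 1-type $\tau_1,\dots,\tau_u \in U_{\mathcal{P}}$, and on the chosen 2-tables. I would write $\symwfomc(\sentence,\domain,\weight,\negweight)$ as a sum over cardinality vectors $\vecn$ with $\sum_i n_i = |\domain|$, where each term is a product of: a multinomial coefficient $\binom{|\domain|}{n_1,\dots,n_u}$ counting the ways to assign 1-types to named elements; the product of per-element 1-type weights; and a factor for the pairwise interactions. The pairwise factor is itself a product, over unordered pairs of 1-types $\{\tau_i,\tau_j\}$, of a local sum over the \emph{admissible} 2-tables connecting an element of type $\tau_i$ to one of type $\tau_j$ (those 2-tables $\pi$ for which $\fotwoformula$ is satisfied on that pair), each weighted by its 2-literal weight. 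This yields a closed-form, polynomially-computable expression for the WFOMC that is the backbone of the sampler.

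Next I would describe the sampler itself and argue its soundness in the sense of Definition~\ref{de:soundness}. The sampler proceeds in two phases. In Phase~1, it samples a cardinality vector $\vecn$ with probability proportional to its contribution to the total WFOMC sum, then randomly partitions the domain into classes of the prescribed sizes to obtain a concrete 1-type assignment; both steps are polynomial since there are only polynomially many cardinality vectors (their number is $O(|\domain|^{u-1})$ for fixed $u$) and the per-vector weight is computable in closed form. In Phase~2, the sampler visits each distinct pair $(a,b)$ independently and draws its 2-table $\pi$ with probability proportional to the local weight of $\pi$ among the 2-tables admissible for the 1-types of $a$ and $b$. The self-loop 1-literals are already determined by the 1-type assignment. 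I would verify that the product of these local and phase-1 probabilities equals $\typeweight{\mu}/\symwfomc(\sentence,\domain,\weight,\negweight)$, which is exactly the required sampling distribution in Equation~\eqref{eq:WMS}. The total running time is polynomial in $|\domain|$: Phase~1 is $\mathrm{poly}(|\domain|)$ and Phase~2 performs $\binom{|\domain|}{2}$ constant-time (for fixed vocabulary) samplings.

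The main obstacle is establishing the independence/decoupling claim rigorously, namely that after conditioning on the 1-type assignment the pairwise 2-table choices are genuinely independent and that the normalizing constant factorizes as a product of per-pair local sums. This requires carefully checking that for the \emph{universal} sentence there is no constraint linking three or more elements and no constraint linking the 2-table of $(a,b)$ to that of $(a,c)$, so that the joint admissibility set is a Cartesian product over pairs. This is precisely where \ufotwo{} is special and where a naïve extension to existential quantifiers would fail, since $\forall x \exists y$ couples each element to the \emph{collection} of all its outgoing 2-tables and destroys the per-pair product structure. Once the decoupling is justified, the remaining verifications---that each sampling step draws from a correctly normalized distribution and that the product telescopes to the target probability---are routine.
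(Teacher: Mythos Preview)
Your proposal is correct and follows essentially the same two-phase approach as the paper: sample the 1-type assignment via a cardinality (configuration) vector followed by a random partition, then sample the 2-tables pair-by-pair using the decoupling available in the purely universal fragment. The only minor divergence is that you justify Phase~1 by deriving an explicit closed-form product formula for the \ufotwo{} WFOMC over cardinality vectors, whereas the paper instead appeals to existing liftability results for (conditional) WFOMC to compute $\mathfrak{P}_1$; both routes are valid and yield the same polynomial-time guarantee.
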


\begin{proofs}
Suppose that we wish to randomly sample models from some input \ufotwo{} sentence $\sentence=\forall x\forall y:\fotwoformula(x,y)$ over a domain $\domain = \{e_1, e_2, \dots, e_n\}$ under weights $(\weight, \negweight)$.
Given a $\mathcal{P}_\sentence$-structure $\structure$ over $\domain$, we denote $\tau_i$ the 1-type of the $i$th element and $\pi_{i,j}$ the 2-table of the tuple of the $i$th and $j$th elements.
The structure $\structure$ is fully characterized by the ground 1-types $\tau_i(e_i)$ and 2-tables $\pi_{i,j}(e_i, e_j)$.
We can write the sampling probability of $\structure$ as
\DoubleColumn
\begin{align*}
  &\pro[\structure\mid\sentence] = \pro\left[\bigwedge_{i\in[n]}\tau_i(e_i)\land\bigwedge_{i,j\in[n]: i<j}\pi_{i,j}(e_i, e_j)\mid\sentence\right]=\\
  &\underbrace{\pro\left[\bigwedge_{i\in[n]}\tau_i(e_i)\mid \sentence\right]}_{\mathfrak{P}_1}\cdot \underbrace{\pro\left[\bigwedge_{\substack{i,j\in[n]:\\ i<j}}\pi_{i,j}(e_i, e_j)\mid \sentence\land \bigwedge_{i\in[n]}\tau_i(e_i)\right]}_{\mathfrak{P}_2},
\end{align*}
\DoubleColumnEnd
\SingleColumn
\begin{align*}
  \pro[\structure\mid\sentence] &= \pro\left[\bigwedge_{i\in[n]}\tau_i(e_i)\land\bigwedge_{i,j\in[n]: i<j}\pi_{i,j}(e_i, e_j)\mid\sentence\right]\\
  &=\underbrace{\pro\left[\bigwedge_{i\in[n]}\tau_i(e_i)\mid \sentence\right]}_{\mathfrak{P}_1}\cdot \underbrace{\pro\left[\bigwedge_{\substack{i,j\in[n]:\\ i<j}}\pi_{i,j}(e_i, e_j)\mid \sentence\land \bigwedge_{i\in[n]}\tau_i(e_i)\right]}_{\mathfrak{P}_2},
\end{align*}
\SingleColumnEnd
where $[n]$ denotes the set of $\{1, 2,\dots, n\}$.
This decomposition naturally gives rise to a two-phase sampling algorithm: 
\begin{enumerate}
  \item sample 1-types $\tau_i$ according to the probability $\mathfrak{P}_1$, and
  \item randomly assign 2-table $\pi_{i,j}$ to each element tuple according to $\mathfrak{P}_2$.
\end{enumerate}
The sampling of 1-types can be achieved through a random partition of the domain $\domain$, resulting in $|U_{\mathcal{P}_\sentence}|$ disjoint subsets of $\domain$; each subset contains the elements assigned to its corresponding 1-type.
The symmetry property of the weighting function ensures that the satisfaction and weight of the models are not affected by permutations of the domain elements. Therefore, any partitions of the domain with the same partition size have the same probability to be sampled.
This further decomposes the sampling problem of 1-types into two stages: the stochastic generation of partition size and the random partitioning of the domain according to the sampled size.
Randomly partitioning the domain according to the sampled size is straightforward, and we will demonstrate that sampling a partition size can be done in time polynomial in the domain size.

Recall that the number $|U_{\mathcal{P}_\sentence}|$ of all 1-types only depends on the input sentence, and thus enumerating all possible partition sizes is computationally tractable.
For any partition size $(n_1, n_2, \dots, n_{|U_{\mathcal{P}_\sentence}|})$, there are totally $\binom{n}{n_1,n_2,\dots,n_{|U_{\mathcal{P}_\sentence}|}}$ partitions of the domain with the same sampling probability.
It will turn out that the sampling probability, which is of the form $\mathfrak{P}_1$, can be computed in time polynomial in the domain size.
The reason for this is: we expand $\mathfrak{P}_1$ into $\symwfomc(\sentence\land\bigwedge_{i=1}^n \eta_i(e_i),\domain,\weight, \negweight) / \symwfomc(\sentence, \domain, \weight, \negweight)$; the numerator \wfomc{} can be viewed as a \wfomc{} of $\sentence$ conditional on the unary facts in all $\eta_i(e_i)$, whose complexity is polynomial in the domain size by~\cite{vandenbroeckConditioningFirstorderKnowledge2012}; and the denominator \wfomc{} can be also efficiently computed due to the liftability (in terms of \wfomc{}) of $\sentence$ by~\cite{vandenbroeckLiftedProbabilisticInference2011}.
Finally, the sampling probability of the partition size $(n_1,n_2,\dots,n_{|U_{\mathcal{P}_\sentence}|})$ is given by $\mathfrak{P}_1\cdot \binom{n}{n_1,n_2,\dots,n_{|U_{\mathcal{P}_\sentence|}}}$.

For sampling $\pi_{i,j}$ according to $\mathfrak{P}_2$,
we first ground out $\sentence$ over the domain $\domain$: 
\begin{equation*}
    \bigwedge_{i,j\in[n]: i<j}\fotwoformula(e_i, e_j)\land \fotwoformula(e_j, e_i).
\end{equation*}
Let $\fotwoformula'_{i,j}(x,y)$ be the simplified formula of $\fotwoformula(x,y)\land \fotwoformula(y,x)$ obtained by replacing the unary ground literals with their truth value given by the 1-types $\tau_i$ and $\tau_j$.
Then the probability $\mathfrak{P}_2$ can be written as
\begin{equation*}
    \pro[\bigwedge_{i,j\in[n]:i<j}\pi_{i,j}(e_i, e_j)\mid \bigwedge_{i,j\in[n]: i<j}\fotwoformula_{i,j}'(e_i, e_j)].
\end{equation*}
Note that in this probability, each ground 2-tables $\pi_{i,j}(e_i,e_j)$ are independent in the sense that they do not share any ground literals.
The independence also holds for the ground formulas $\fotwoformula_{i,j}'(e_i, e_j)$.
It follows that the probability $\mathfrak{P}_2$ can be factorized into
\begin{equation*}
  \prod_{i,j\in[n]: i<j} \pro[\pi_{i,j}(e_i,e_j)\mid\fotwoformula_{i,j}'(e_i, e_j)].
\end{equation*}
Hence, the sampling of each $\pi_{i,j}$ can be solved separately by randomly choosing a model of its respective ground formula $\fotwoformula'(e_i, e_j)$ according to the probability $\pro[\pi_{i,j}(e_i,e_j)\mid\fotwoformula_{i,j}'(e_i, e_j)]$.
% \lnote{Yes, I defined it just above Def.~\ref{def:swfos}}
The overall computational complexity is clearly polynomial in the domain size.

The procedure for both sampling $\tau_i$ and $\pi_{i,j}$ is polynomial in the domain size, which completes the liftability under sampling of \ufotwo{}.
\end{proofs}

Extending the approach above to the case of \fotwo{} would requires a novel and more sophisticated strategy, especially for the sampling of 2-tables, as decoupling the grounding of $\forall x\exists y: \extformula(x,y)$ to the form of $\bigwedge_{i,j\in[n]:i<j} \fotwoformula_{i,j}(e_i,e_j)$ is impossible even conditioning on the sampled 1-types.
% One would wonder whether the rejection sampling could come into play.
% \lnote{Remove this claim unless we can give a simple counterexample in a few words..}
% The answer is negative following from a simple counterexample of sampling from the \fotwo{} sentence $\forall x\exists y: E(x,y)\land red(x)$, which can be expressed to that for any vertex in a colored graph, it is either non-isolated or colored by red.
% The rejection probability, proportional to the fraction of random colored graphs that contains at least one vertex being neither non-isolated nor colored by red, 

\subsection{Notations}
We will use $[n]$ to denote the set of $\{1, 2, \dots, n\}$.
The notation $\{x_i\}_{i\in[n]}$ represents the set of terms $\{x_1, x_2, \dots, x_n\}$, and $(x_i)_{i\in[n]}$ the vector of $(x_1, x_2,\dots, x_n)$.
We also use the bold symbol $\vecx$ to denote a vector $(x_i)_{i\in[n]}$, and denote by $\vecx^\vecy$ the product over element-wise power of two vectors $\vecx^\vecy = \prod_{i\in[n]}x_i^{y_i}$.
The notation $\oplus$ is used to denote the concatenation of two vectors.
Using the vector notation, we often write the multinomial coefficient $\binom{N}{x_1, x_2, \dots, x_n}$ as $\binom{N}{\vecx}$.

\section{Sampling Algorithm for \fotwo{}}

We now show the domain-liftability under sampling of the \fotwo{} fragment by providing a lifted WMS for it.
It is common for logical algorithms to operate on normal form representations instead of arbitrary sentences.
The normal form of \fotwo{} used in our sampling algorithm is the \textit{Scott normal form (SNF)}~\cite{scott1962decision}; an \fotwo{} sentence is in SNF, if it is written as:
\begin{equation}
  \label{eq:scott_form}
  \sentence = \forall x\forall y: \fotwoformula(x,y)\land\bigwedge_{k \in[m]} \forall x\exists y: \extformula_k (x,y),
\end{equation}
where $\fotwoformula$ and $\extformula_k$ are quantifier-free formulas.
It is well-known that one can convert any \fotwo{} sentence $\sentence$ in polynomial time into a formula $\sentence_S$ in SNF such that $\sentence$ and $\sentence_S$ are equisatisfiable~\cite{gradel1997Bull.Symb.Log.}.
The principal idea is to substitute, starting from the atomic level and working upwards, any subformula $\varphi(x) = Qy: \phi(x,y)$, where $Q\in\{\forall,\exists\}$ and $\phi$ is quantifier-free, with an atomic formula $A_{\varphi}$, where $A_\varphi$ is a fresh predicate symbol.
This novel atom $A_\varphi(x)$ is then separately ``axiomatized'' to be equivalent to $\varphi(x)$.
The weight of $A_\varphi$ is set to be $\weight(A_\varphi) = \negweight(A_\varphi) = 1$.
It follows from reasoning similar to one by~\citet{kuusistoWeightedModelCounting2018} that such reduction is not only equisatisfiable but also sound (according to Definition~\ref{de:soundness}).
\begin{lemma}
  \label{le:snf_sound}
  For any WFOMS of $\mathfrak{S} = (\sentence, \domain, \weight, \negweight)$ where $\sentence$ is an \fotwo{} sentence, there exists a WFOMS $\mathfrak{S}' = (\sentence', \domain, \weight', \negweight')$, where $\sentence'$ is in SNF and independent of $\domain$, such that the reduction from $\mathfrak{S}$ to $\mathfrak{S}'$ is sound.
\end{lemma}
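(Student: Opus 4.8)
The plan is to exhibit the standard Scott renaming as the reduction and to verify that it meets Definition~\ref{de:soundness} with the mapping $f$ taken to be the projection onto the original vocabulary, $f(\mu') = \proj{\mu'}{\mathcal{P}_\sentence}$. The crux is the observation that the fresh predicates introduced during renaming form a co-skeleton: their interpretation is forced by the definitional axioms, and they all carry weight $\weight'(A)=\negweight'(A)=1$. Consequently $\mathcal{P}_\sentence$ is a skeleton of $\mathfrak{S}'$, the projection $f$ is a weight-preserving bijection between $\fomodels{\sentence'}{\domain}$ and $\fomodels{\sentence}{\domain}$, and the right-hand sum in the soundness identity collapses to a single term whose value coincides with the left-hand side.

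First I would make the renaming precise. Working from the atomic level upward, whenever a subformula of the shape $Qy\colon \chi(x,y)$ (with $Q\in\{\forall,\exists\}$ and $\chi$ quantifier-free, the roles of $x$ and $y$ possibly swapped) occurs, I replace it by a fresh unary atom $A(x)$ and conjoin the axiom $\forall x\colon\bigl(A(x)\leftrightarrow Qy\colon\chi(x,y)\bigr)$, setting $\weight'(A)=\negweight'(A)=1$ and leaving $\weight',\negweight'$ equal to $\weight,\negweight$ on $\mathcal{P}_\sentence$. I would then check that each such biconditional is itself expressible in SNF: for $Q=\exists$ it splits into the universal conjunct $\forall x\forall y\colon(\chi(x,y)\Rightarrow A(x))$ and the conjunct $\forall x\exists y\colon(A(x)\Rightarrow\chi(x,y))$, and the case $Q=\forall$ is dual. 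Iterating eliminates all quantifiers except the outermost $\forall x\forall y$ block and one $\forall x\exists y$ block per introduced atom, so after collecting conjuncts the result $\sentence'$ has the form of~\eqref{eq:scott_form} and manifestly does not depend on $\domain$.

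Next I would establish the bijection and weight preservation. For any $\mu\in\fomodels{\sentence}{\domain}$ there is exactly one interpretation of the fresh predicates consistent with the definitional axioms---namely $A(e)$ true iff $\mu\models (Qy\colon\chi(e,y))$---and this unique extension $\mu'$ satisfies $\sentence'$, because under the axioms the renamed body is logically equivalent to $\sentence$; conversely, any $\mu'\models\sentence'$ projects to a model of $\sentence$. Hence $f$ restricts to a bijection onto $\fomodels{\sentence}{\domain}$. Since every fresh literal contributes weight $1$ and the two weightings agree on $\mathcal{P}_\sentence$, we obtain $\typeweight{\mu}=\langle\weight',\negweight'\rangle(\mu')$; summing over all models gives $\symwfomc(\sentence,\domain,\weight,\negweight)=\symwfomc(\sentence',\domain,\weight',\negweight')$. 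Therefore the single surviving term $\pro[\mu'\mid\sentence']$ equals $\pro[\mu\mid\sentence]$, which is exactly the required soundness identity.

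The main obstacle, as I see it, is the bookkeeping of the iterated renaming rather than any single step: I must confirm that every intermediate sentence stays within \fotwo{} and that the freshly added biconditionals never reintroduce domain-dependence or negative weights, both of which would break well-definedness of the sampling problem. Rather than track the whole chain at once, I would prove soundness of one renaming step via the skeleton/projection argument above and then invoke the transitivity of soundness noted after Definition~\ref{de:soundness} to compose the steps. The one point genuinely needing care is the uniqueness of the fresh-predicate extension, since it is precisely what forces the right-hand sum in Definition~\ref{de:soundness} to reduce to a single term.
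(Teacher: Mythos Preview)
Your proposal is correct and essentially identical to the paper's own argument (given in the appendix on Scott normal forms): introduce fresh unit-weight predicates $A_\varphi$ with definitional biconditionals, split each biconditional into its $\forall\forall$ and $\forall\exists$ halves, take the projection $f=\proj{\cdot}{\mathcal{P}_\sentence}$ as the mapping, argue it is a weight-preserving bijection because the fresh predicates are determined by the axioms, and then compose the single-step soundness via transitivity. The only cosmetic difference is that you frame the uniqueness of the extension in the language of skeletons, whereas the paper simply states the bijection directly.
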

The proof is clear, as every novel predicate (e.g., $P_\varphi$) introduced in the SNF transformation is axiomatized to be equivalent to the subformula ($\varphi(x)$), whose quantifiers are to be eliminated, and thus the interpretation of the predicate is fully determined by the subformula in every model of the resulting SNF sentence (see the details in Appendix~\ref{sub:snf}).

\subsection{An Intuitive Example}

We start with an intuitive example of how to generate an undirected graph of size $n$ without any isolated vertex uniformly at random, to illustrate the basic idea of our sampling algorithm.
This graph structure can be expressed by an \fotwo{} sentence in SNF,
\DoubleColumn
\begin{align*}
  \sentence_G &:= \left(\forall x\forall y: (E(x,y) \Rightarrow E(y,x))\land \neg E(x,x)\right)\land\\
  &\qquad \left(\forall x\exists y: E(x,y)\right),
\end{align*}
\DoubleColumnEnd
\SingleColumn
\begin{equation*}
  \sentence_G := \left(\forall x\forall y: (E(x,y) \Rightarrow E(y,x))\land \neg E(x,x)\right)\land \left(\forall x\exists y: E(x,y)\right),
\end{equation*}
\SingleColumnEnd
and the sampling problem corresponds to a WFOMS on $\sentence_G$ under $\weight(E) = \negweight(E) = 1$ over a domain of vertices $V = \{v_i\}_{i\in[n]}$.
% We fix the natural order of vertices, and say $v_j$ a predecessor of $v_i$ for if $j<i$.
In this sentence, the only realizable 1-type is $\neg E(x,x)$, and the realizable 2-tables are $\pi^1(x,y) = E(x,y)\land E(y,x)$ and $\pi^2(x,y) = \neg E(x,y)\land \neg E(y,x)$ representing the connectedness of two vertices.

\begin{figure}
  \DoubleColumn
  \centerline{\includegraphics[width=.47\textwidth]{figs/non-isolated_graph.png}}
  \DoubleColumnEnd
  \SingleColumn
  \centerline{\includegraphics[width=.67\textwidth]{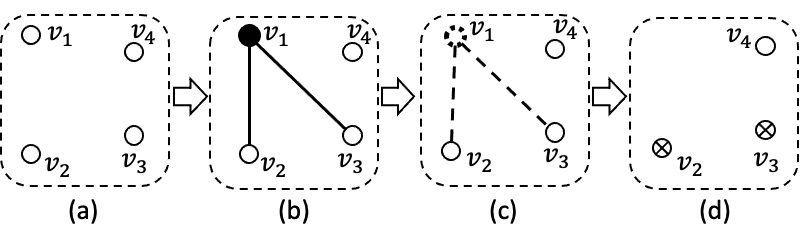}}
  \SingleColumnEnd
  \caption{A sampling step for an undirected graph with no isolated vertices: (a) begin with an initial graph that has no edges, and in the more general sampling problem, $V_\forall = V_\exists = \{v_1,v_2,v_3,v_4\}$; (b) sample edges for the vertex $v_1$; (c) remove the vertex $v_1$ with its sampled edges; (d) and obtain a graph with some vertices already non-isolated ($v_2$ and $v_3$), resulting in a new sampling problem with $V_\forall' = \{v_2,v_3,v_4\}$ and $V_\exists' = \{v_4\}$.}
  \label{fig:non-isolated_graph}
\end{figure}

We first apply the following transformation
on $\sentence_G$ resulting in $\sentence_{GT}$:
\begin{enumerate}
  \item introduce an auxiliary \textit{Tseitin predicate} $Z/1$ that indicates the non-isolation of vertices, and append $\forall x: Z(x)\Leftrightarrow \exists y: E(x,y)$ to $\sentence_G$, and
  \item remove $\forall x\exists y: E(x,y)$,
\end{enumerate}
and set the weight of the predicate $Z$ to $\weight(Z)=\negweight(Z) = 1$.
Then we consider a bit more general WFOMS of $(\sentence_{GT}\land\bigwedge_{v\in V_\exists}Z(v), V_\forall, \weight, \negweight)$, where $V_\exists\subseteq V_\forall\subseteq V$ and $V_\exists$ represents the set of vertices that should be non-isolated in the graph induced by $V_\forall$.
The original WFOMS on $\sentence_G$ can be clearly reduced to the more general problem by setting $V_\exists=V_\forall=V$,
and the reduction is sound with the mapping function $f(\mu') = \proj{\mu'}{\{E\}}$.

Let $\generalsentence_G$ denote the sentence $\sentence_{GT}\land\bigwedge_{v\in V_\exists} Z(v)$.
For the more general WFOMS, 
since $\{E\}$ is its skeleton, the problem is equivalent to sampling an $\{E\}$-structure $\structure$ over $V_\forall$ according to the probability $\pro[\structure\mid \generalsentence_G]$.
Given an $\{E\}$-structure $\structure$ over $V_\forall$, denote by $\structure_i$ the \textit{substructure} of $\structure$ concerning the vertex $v_i\in V_\forall$, which consists of the 2-tables of all vertex tuples containing $v_i$ and other vertices in $V_\forall$:
\begin{equation*}
  \structure_i := \bigcup_{v_j\in V_\forall: j\neq i} \pi_{i,j}(v_i, v_j),
\end{equation*}
where $\pi_{i,j}$ is the 2-table of $(v_i, v_j)$.
Following the domain recursion scheme, we choose a vertex $v_t$ from $V_\forall$ and decompose the sampling probability of $\structure$ into
\begin{equation*}
  \pro[\structure\mid \generalsentence_G] 
  =\pro\left[\structure\mid\generalsentence_G\land\structure_t\right]\cdot\pro[\structure_t\mid\generalsentence_G].
\end{equation*}
The decomposition leads to two successive subproblems of the general WFOMS: the first one is to sample a valid substructure $\structure_t$ from $\generalsentence_G$; the other can be viewed as a new WFOMS on $\generalsentence_G$ given the sampled $\structure_t$.

We first show that the new WFOMS can be also reduced to the general WFOMS but with the smaller domain
$V_\forall' = V_\forall \setminus \{v_t\}$ and 
\begin{equation*}
  V_\exists' = \{v_i\mid v_i\in V_\exists: v_i\neq v_t \land \pi_{t,i} = \pi^2 \}.
  % \label{eq:v_exists}
\end{equation*}
The reduction is obviously sound because every model of the WFOMS $(\sentence_{GT}\land\bigwedge_{v\in V_\exists'}Z(v), V_\forall', \weight, \negweight)$ can be mapped to a unique model of the WFOMS $(\generalsentence_G\land\structure_t, V_\forall, \weight, \negweight)$, and vice versa, without affecting the weight of the models.
Thus, the decomposition can be performed recursively on any WFOMS on $\generalsentence_G$ over $V_\forall$.
% Then, for any WFOMS on $\generalsentence_G$ over $V_\forall$, the sampling algorithm can be performed in a recursive manner.
Specifically, the algorithm takes $V_\forall$ and $V_\exists$ as input, 
\begin{enumerate}
  \item selects a vertex $v_t$ from $V_\forall$, 
  \item samples its substructure $\structure_t$ according to the probability $\pro[\structure_t\mid\generalsentence_G]$, and
  \item obtains a new problem with updated $V_\forall'$ and $V_\exists'$ for recursion.
\end{enumerate}
The recursion procedure terminates when all substructures $\structure_i$ are sampled ($V_\forall$ contains a single vertex), or the problem degenerates to a WFOMS on \ufotwo{} sentence ($V_\exists$ is empty).
The number of recursions is less than $|V|$, the total number of vertices.
% \footnote{The algorithm can also terminate when $V_\exists$ is empty and the sampling problem degenerates to the trivial uniform generation of undirected graphs.}
An example of a recursion step is shown in Figure~\ref{fig:non-isolated_graph}.

The remaining problem is to sample the substructure $\structure_t$ according to $\pro[\structure_t\mid\generalsentence_G]$.
Recall that $\structure_t$ determines the edges between $v_t$ and vertices in $V_\forall'$.
Let $V_1 = V_\forall' \setminus V_\exists$ and $V_2 = V_\forall' \setminus V_1$.
The sampling of $\structure_t$ can be realized by performing two random binary partitions on $V_1$ and $V_2$ respectively, resulting in $\{V_{11}, V_{12}\}$ and $\{V_{21}, V_{22}\}$, where the vertices in $V_{11}$ and $V_{21}$ will be connected to $v_t$, while the vertices in $V_{12}$ and $V_{22}$ will not be connected to it.
It can be demonstrated that the sampling probability of a substructure $\structure_t$ only depends on the size $(|V_{11}|, |V_{12}|, |V_{21}|, |V_{22}|)$.
% of its corresponding partitions.
The proof of this claim can be found in Section~\ref{sub:sampling_algorithm}, where the more general case of \fotwo{} sampling is addressed.
As a result, the sampling of $\structure_t$ can be accomplished by a random generation of the partition size, followed by two random partitions of the sampled size on $V_1$ and $V_2$ respectively.
We use the enumerative sampling method to generate a partition size.
The number of all possible partition sizes is clearly polynomial in $|V_\forall|$, and it will be shown in Section~\ref{sub:sampling_algorithm} that the sampling probability of each partition size can be computed in time polynomial in $|V_\forall'|$. 
Therefore, the complexity of the sampling algorithm is polynomial in the number of vertices.
This, together with the complexity of the recursion procedure, which is also polynomial in the number of vertices, implies that the whole sampling algorithm is lifted.

\subsection{A More General Sampling Problem}
\label{sub:generalproblem}

W.l.o.g.\footnote{
Any SNF sentence can be transformed into such form by introducing an auxiliary predicate $R_k$ with weights $\weight(R_k)=\negweight(R_k)=1$ for each $\extformula_k(x,y)$, append $\forall x\forall y: R_k(x,y) \Leftrightarrow \extformula_k(x,y)$ to the sentence, and replacing $\extformula_k(x,y)$ with $R_k(x,y)$. The transformation is obviously sound when viewing it as a reduction in WFOMS.}
, we suppose that each formula $\extformula_k(x,y)$ in the SNF sentence \eqref{eq:scott_form} is an atomic formula $R_k(x,y)$, where $R_k$ is a binary predicate in $\mathcal{P}_{\fotwoformula(x,y)}$, and its weights $\weight(R_k) = \negweight(R_k) = 1$.
We first construct the following sentence from the SNF one:
\DoubleColumn
\begin{equation}
  \begin{aligned}
  \sentence_T := &\forall x\forall y: \fotwoformula(x,y) \\
  &\land \bigwedge_{k\in [m]} \forall x: Z_k(x) \Leftrightarrow (\exists y: R_k(x,y)),
  \end{aligned}
  \label{eq:tseitin_reduction}
\end{equation}
\DoubleColumnEnd
\SingleColumn
\begin{equation}
  \sentence_T := \forall x\forall y: \fotwoformula(x,y) \land \bigwedge_{k\in [m]} \forall x: Z_k(x) \Leftrightarrow (\exists y: R_k(x,y)),
  \label{eq:tseitin_reduction}
\end{equation}
\SingleColumnEnd
where each $Z_k/1$ is a Tseitin predicate with the weight $\weight(Z_k) = \negweight(Z_k) = 1$.
% Note that if we append the conjunction over all Tseitin atoms $\bigwedge_{i\in [n]} \bigwedge_{j\in [m]} Z_j(e_i)$ to $\sentence_T$ and view it as a reduction on the WFOMS of $\sentence$, the reduction is sound.
% And if we see $\sentence_T$ as a reduction on the WFOMS of the \ufotwo{} sentence $\forall x\forall y:\fotwoformula(x,y)$, the reduction is also sound.
% Thus the evidences $Z_i(e_i)$'s characterize the sampling problem, bridging the WFOMS between \fotwo{} and \ufotwo{} fragments.

We then consider a more general WFOMS problem on the following sentence
% \ledit{
\begin{equation}
  \generalsentence := \sentence_T \land \bigwedge_{i\in [n]} \mathcal{C}_i
  \label{eq:general_sentence}
\end{equation}
over a domain of $\{e_i\}_{i\in[n]}$,
where each $\mathcal{C}_i$ is a conjunction over a subset of the ground atoms $\{Z_k(e_i)\}_{k\in[m]}$.
% \lnote{I used $\beta_i(x)$ here in the same spirit of 1-types and 2-tables.. I called it block type in the next subsection.. I can change the index $i$ for existential quantified formulas to some specific symbol.}
We call $\mathcal{C}_i$ the \textit{existential constraint} on the element $e_i$ and allow $\mathcal{C}_i = \top$, which means $e_i$ is not existentially quantified.
% }
% \ledit{
The more general WFOMS can be regarded as a conditional sampling problem, where the existential constraint serves as unary facts that condition the problem.
% }
The original WFOMS on $\sentence$ can be reduced to a more general problem by setting all existential constraints to be $\bigwedge_{k\in [m]} Z_k(x)$.
On the other hand, the WFOMS on the \ufotwo{} sentence $\forall x\forall y: \fotwoformula(x,y)$ is also reducible to the problem with $\mathcal{C}_i = \top$ for all $i\in[n]$. 
% On the other hand, if we set all $\beta_i(x)$'s to be $\top$, the problem is reducible to the WFOMS on the \ufotwo{} sentence $\forall x\forall y: \fotwoformula(x,y)$, which has been proved to be liftable under sampling in the previous literature~\cite{wangDomainLiftedSamplingUniversal2022}.
It is easy to check that these two reductions are both sound.
The main idea of our sampling algorithm is to use the domain recursion scheme to \textit{gradually remove the existential constraints until we eventually end up with a WFOMS problem on a \ufotwo{} sentence}.

\subsection{Partitioning the Domain}

Unless stated otherwise, in the rest of this section, 1-types and 2-tables are defined over  $\mathcal{P}_\sentence$, where $\sentence$ is the sentence in SNF.
Please bear in mind that the Tseitin predicates $Z_k$ are not in these 1-types.

We introduce the concepts of block and cell types as extensions of 1-types. These types are utilized in a manner akin to 1-types in the sampling algorithm for \ufotwo{}.
% }
% \ledit{We next define the notions of \textit{blocks} and \textit{cells} for partitioning the domain. }
% \ledit{
Consider a sentence $\generalsentence$ of the form \eqref{eq:general_sentence} with Tseitin predicates $Z_k$.
% }
A \textit{block type} $\beta$ is a subset of the atoms $\{Z_k(x)\}_{k\in[m]}$.
The number of the block types is $2^m$, where $m$ is the number of existentially-quantified formulas.
We often represent a block type as $\beta(x)$ and view it as a conjunctive formula over the atoms within the block.
% \ledit{
With the notion of block type, we can write $\generalsentence = \sentence_T\land\bigwedge_{i\in[n]} \beta_i(e_i)$, where the grounding $\beta_i(e_i)$ is exactly the existential constraint $\mathcal{C}_i$ imposed on $e_i$.
We call $\beta_i$ the block type of $e_i$.
% }
We fix the order of all block types and denote by $\beta^i$ the $i$th block type.
The domain $\domain$ is then partitioned by the blocks $\{B_{\beta^i}\}_{i\in[2^m]}$,
where each subset $B_{\beta^i} \subseteq \domain$ contains precisely the domain elements with the block type $\beta^i$.
It is important to note that the block types only indicate which Tseitin atoms should hold for a given element, and the Tseitin atoms not covered by the block types are left unspecified.
In contrast, the 1-types explicitly determine the truth values of all unary and reflexive atoms, excluding the Tseitin atoms.

The blocks are further partitioned into \textit{cells}.
A \textit{cell type} $\eta = (\beta, \tau)$ is a pair of a block type $\beta$ and a 1-type $\tau$.
We also write a cell type as a conjunctive formula of $\eta(x) = \beta(x) \land \tau(x)$.
Given a $\mathcal{P}_\sentence$-structure $\structure$, the \textit{cell type of an element $e$} is the combination of its block type (which is given by the sentence $\generalsentence$) and its realizing 1-type in $\structure$.
Each block $B_{\beta}$ is partitioned by the \textit{cells} $\{C^\structure_\eta \mid \eta=(\beta, \tau), \tau\in U_{\mathcal{P}_\sentence}\}$,
where each cell $C^\structure_\eta \subseteq B_\beta$ contains precisely the domain elements that are of cell type $\eta$.

For brevity, we denote by $N_u = |U_{\mathcal{P}_\sentence}|$, the number of all 1-types and $N_c = 2^m \times N_u$, the number of all cell types.
We fix a linear order of 1-types as well as cell types, and let $\tau^i$ and $\eta^j$ be the $i$th 1-type and $j$th cell type respectively.
Given a $\mathcal{P}_\generalsentence$-structure $\structure$ with the cell partition $\{C_{\eta^i}^\structure\}_{i\in[N_c]}$, 
we call the size $\left(|C^\structure_{\eta^i}|\right)_{i\in[N_c]}$ of the cell partition the \textit{cell configuration} of $\structure$, and $\left(|C^\structure_{(\beta, \tau^i)}|\right)_{i\in[N_u]}$ the \textit{cell configuration of $\structure$ in the block $\beta$}.
A $\mathcal{P}_\generalsentence$-structure will have a unique cell configuration (in a block).

We will often care about the set of all cell configurations over a set of elements, which is defined as the \textit{configuration space}. 
\begin{definition}[\textbf{Configuration Space}]
  Given a nonnegative integer $M$ and a positive integer $m$, we define the configuration space $\mathcal{T}_{M, m}$ as
  \begin{equation*}
    \mathcal{T}_{M,m} = \left\{(n_i)_{i\in[m]}\mid \sum_{i\in[m]} n_i = M, n_1, n_2, \dots, n_m\in\nat\right\}.
  \end{equation*}
\end{definition}
The size of $\mathcal{T}_{M, m}$ is given by $\binom{M + m - 1}{m - 1}$, which is clearly polynomial in $M$ (while exponential in $m$).

\subsection{The Sampling Algorithm}
\label{sub:sampling_algorithm}

We now describe our algorithm for the WFOMS of $(\generalsentence, \domain, \weight, \negweight)$ where $\generalsentence$ is of the form \eqref{eq:general_sentence} and $\domain = \{e_i\}_{i\in[n]}$, and prove that the algorithm is domain-lifted (i.e. runs in time polynomial in the domain size).
It can be easily verified that $\mathcal{P}_\sentence$ is a skeleton of the WFOMS problem.
Hence, this WFOMS problem is equivalent to sampling a valid $\mathcal{P}_\sentence$-structure $\structure$ according to the probability $\pro[\structure\mid\generalsentence]$.

Given a $\mathcal{P}_\sentence$-structure $\structure$ over $\domain$, let $\tau_i$ be the 1-type of the element $e_i$, and denote by $\eta_i = (\beta_i, \tau_i)$ the cell type of $e_i$.
% and $\pi_{i,j}$ the 2-table of $(e_i, e_j)$.
% Denote by $\structure_i$ the union of ground 2-tables of all element tuples consisting of the element $e_i$:
% \begin{equation*}
%   \structure_i := \bigcup_{j\in[n]: j\neq i} \pi_{i,j}(e_i, e_j).
% \end{equation*}
Using the notation of conditional probability, we decompose the sampling probability as
\begin{align*}
    \pro[\structure\mid \generalsentence]=\pro\left[\bigwedge_{i\in[n]}\tau_i(e_i)\mid \generalsentence\right] \cdot \pro\left[\structure \mid \generalsentence\land\bigwedge_{i\in[n]}\tau_i(e_i)\right]
\end{align*}
% \begin{equation}
%   \begin{aligned}
%      &\pro\left[\structure \mid \bigwedge_{i\in[n]}\eta_i(e_i)\land \generalsentence\right]\cdot \pro\left[\bigwedge_{i\in[n]}\eta_i(e_i)\mid \generalsentence\right] = \\
%      &\qquad \pro\left[\structure \mid \bigwedge_{i\in[n]}\eta_i(e_i)\land \sentence_T\right]\cdot \pro\left[\bigwedge_{i\in[n]}\eta_i(e_i)\mid \generalsentence\right]
%   \end{aligned}
%   \label{eq:sampling_cell_type}
% \end{equation}
Following a similar idea to the one used for sampling models from \ufotwo{} sentences, our proposed algorithm is divided into two phases---we first sample the 1-type for each element, and then handle the sampling of the structure conditional on the sampled 1-types.

\subsubsection{Sampling 1-Types}

\label{sub:sampling_cell_types}

% We first consider the sampling of $\eta_i$'s according to the probability $\pro\left[\bigwedge_{i\in[n]}\eta_i(e_i)\mid\generalsentence\right]$.
% \lnote{I slightly change the sampling problem from cell types to 1-types. I'm not sure if the presentation is more clear now.}
Let us first consider the sampling of 1-types according to $\pro[\bigwedge_{i\in[n]}\tau_i(e_i)\mid\generalsentence]$.
Note that the block type of each element has been determined by the sentence $\generalsentence$, and thus the problem is equivalent to sampling the cell type of each element~\footnote{In the special case where there is a single block, e.g., $\generalsentence$ is of the standard SNF or in \ufotwo{}, the sampling problem can be simplified.}.
% which can be achieved by randomly partitioning each block into cells.
% The sampling probability $\pro\left[\bigwedge_{i\in[n]}\tau_i(e_i)\mid\generalsentence\right]$ can be written as
% \begin{equation}
%   \frac{\symwfomc(\sentence_T\land \bigwedge_{i\in[n]}\eta_i(e_i),\domain,\weight, \negweight)}{\symwfomc(\generalsentence, \domain, \weight, \negweight)}
%   \label{eq:sampling_cell_type}
% \end{equation}
% Sampling the 1-types 
% % according to the probability $\pro\left[\bigwedge_{i\in[n]}\tau_i(e_i)\mid\generalsentence\right]$
% is equivalent to sampling the cell types,
% %  with the unnormalized weight $\symwfomc(\sentence_T\land \bigwedge_{i\in[n]}\eta_i(e_i),\domain,\weight, \negweight)$
% which can be achieved by randomly partitioning each block to cells.
% % The block type of each element is determined by the sentence $\generalsentence$ itself, and the remaining problem is to further partition each block into cells.
Due to the symmetry of the WFOMS problem, the sampling probability of a cell partition is completely determined by its corresponding cell configuration. 
Therefore, the problem of randomly partitioning cells is further reduced to a problem of sampling cell configurations. 
% This is similar to the process of reducing the sampling of 1-types to the sampling of 1-types cardinality in the proof of Theorem~\ref{th:ufo_liftable}.

The sampling algorithm for cell configurations is outlined in Algorithm~\ref{alg:sampling_cell_types}. 
The algorithm begins by sampling a random cell configuration in Line~1-14, which is then used to randomly partition each block into cells in Line~15-21. 
While the partitioning process is relatively straightforward, in the following discussion, we will focus specifically on how to sample a cell configuration.

\begin{algorithm}[!htb] 
  \caption{\code{OneTypeSampler}($\generalsentence, \domain, \weight, \negweight$)} 
  \label{alg:sampling_cell_types}
  % \SetKwProg{generate}{Function \emph{generate}}{}{end}
  % \begin{flushleft}
  %   \textbf{INPUT:} A sentence $\generalsentence$ of the form \eqref{eq:general_sentence}, a domain $\domain$ of size $n$ and a weighting $(\weight, \negweight)$\\
  %   \textbf{OUTPUT:} $\eta_1,\eta_2,\dots,\eta_n$\\
  % \end{flushleft}
  \begin{algorithmic}[1]
  \State $W\gets \symwfomc(\generalsentence, \domain, \weight, \negweight)$
  \State Obtain the blocks $B_{\beta^1}, B_{\beta^2}, \dots, B_{\beta^{2^m}}$ from $\generalsentence$
%   \State // $\code{Prod}(\dots)$ is the Cartesian product
  \For{$\left(\vecn_{\beta^i}\right)_{i\in[2^m]}\in\code{Prod}\left(\mathcal{T}_{|B_{\beta^1}|, N_u}, \dots, \mathcal{T}_{|B_{\beta^{2^m}}|, N_u}\right)$}
    \State $\vecn \gets \bigoplus_{i\in[2^m]} \vecn_{\beta^i}$
    % \State // Compute the weighted sum over all models with the cell configuration
    \State Compute $\sumweight_{\vecn}$ by \eqref{eq:weight_cell_configuration}
    % \State // $\binom{|B_{\beta^t}|}{\vecn_{\beta^t}}$'s are multinomial coefficients
    \State $W' \gets \sumweight_{\vecn} \cdot \prod_{t=1}^{2^m}\binom{|B_{\beta^t}|}{\vecn_{\beta^t}}$
    \State // $\code{Uniform}(0,1)$ produces a uniformly random number over $[0,1]$
    \If{$\code{Uniform}(0, 1) < \frac{W'}{W}$}
      \State $\vecn^* \gets \vecn$
      \State \textbf{break}
    \Else
      \State $W \gets W - W'$
    \EndIf
  \EndFor
  \For{$i\in[2^m]$}
    \State Fetch the cell configuration $\vecn^*_{\beta^i}$ in $\beta^i$ from $\vecn^*$
    \State Randomly partition $B_{\beta^i}$ into $\{C_{\beta^i, \tau^j}\}_{j\in[N_u]}$ according to $\vecn^*_{\beta^i}$
    \For{$j\in[N_u]$}
      \State Assign the 1-type $\tau^j$ to all elements in $C_{\beta^i, \tau^j}$ 
    \EndFor
  \EndFor
  \end{algorithmic}
\end{algorithm}

The basic idea is again based on enumerative sampling.
Let $B_{\beta^1}, B_{\beta^2}, \dots, B_{\beta^{2^m}}$ be the blocks defined by $\generalsentence$.
Any cell configuration $\vecn$ can be viewed as a concatenation of $2^m$ cell configurations $\vecn_{\beta^1}, \vecn_{\beta^2},\dots,\vecn_{\beta^{2^m}}$ in the blocks, and each $\vecn_{\beta^i}$ is from the configuration space $\mathcal{T}_{|B_{\beta^i}|, N_u}$. 
Hence, in the algorithm, the enumeration of all cell configurations is performed by applying the Cartesian product function $\code{Prod}$ on the configuration spaces $\mathcal{T}_{|B_{\beta^1}|, N_u}, \mathcal{T}_{|B_{\beta^2}|, N_u},\dots, \mathcal{T}_{|B_{\beta^{2^m}}|, N_u}$.
% contains precisely all possible cell configurations in the block $B_{\beta^i}$.

For the computation of the sampling probability, we first observe that any cell partitions that produce the same configuration $\vecn$ will have the same sampling probability.
We use $\mathcal{W}_\vecn$ to denote the sampling weight (the numerator \wfomc{} of the probability) of any such cell partition.
Then the sampling probability of a cell configuration $\vecn$ can be derived from $\mathcal{W}_\vecn\cdot \prod_{i\in[2^m]}\binom{|B_{\beta^i}|}{\vecn_{\beta^i}}$, where the latter product equates to the total number of the partitions giving rise to $\vecn$.

The value of $\mathcal{W}_\vecn$ will play a crucial role in the remaining sampling algorithm.
In this context, we provide its formal definition.
Given a nonnegative integer vector $\vecn$ of size $N_c$, let $\widetilde{n} = \sum_{i\in[N_c]} n_i$, and $\mathcal{W}_\vecn$ be defined as
\begin{equation}
  \mathcal{W}_\vecn := \symwfomc(\sentence_T \land\bigwedge_{i\in[\widetilde{n}]}\widetilde{\eta_i}(\widetilde{e}_i), \widetilde{\domain}, \weight, \negweight),
  \label{eq:weight_cell_configuration}
\end{equation}
where $\{\eta_i\}_{i\in[\widetilde{n}]}$ is a set of cell types that gives rise to the configuration $\vecn$, $\widetilde{\domain}$ is a domain of size $\widetilde{n}$, and $\widetilde{e}_i$ is the $i$th element in $\widetilde{\domain}$.
Recall that a cell type can be represented by a conjunction of unary atoms, and thus each $\widetilde{\eta}_i(\widetilde{e}_i)$ in $\mathcal{W}_\vecn$ can be interpreted as a set of unary facts.
The value of $\sumweight_{\vecn}$ is then equal to the \wfomc{} of the \fotwo{} sentence $\sentence_T$ conditional on a set of unary facts.
Such conditional counting problems have been thoroughly studied by~\citet{vandenbroeckConditioningFirstorderKnowledge2012}, and the computational complexity has been shown to be polynomial in the domain size $\widetilde{n}$ and the number of facts.
Since the number of facts is clearly polynomial in $\widetilde{n}$, computing $\mathcal{W}_\vecn$ can be done in time polynomial in $\widetilde{n}$.

\begin{lemma}
  The complexity of $\code{OneTypeSampler}(\cdot, \cdot, \cdot, \cdot)$ in Algorithm~\ref{alg:sampling_cell_types} is polynomial in the size of the input domain.
  \label{lemma:complexity_cell_type}
\end{lemma}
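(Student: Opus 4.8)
The plan is to bound the running time of \code{OneTypeSampler} by accounting separately for the enumeration loop (Lines 3--14) and the partitioning loop (Lines 15--21), and then to argue that each individual operation performed inside these loops is polynomial in the domain size $n$. The crucial observation is that the number of existentially-quantified formulas $m$, the number of 1-types $N_u$, and the number of cell types $N_c = 2^m \cdot N_u$ are all fixed constants determined solely by the input sentence $\generalsentence$ (equivalently, by $\mathcal{P}_\sentence$), and are therefore independent of the domain size $n$. Since the paper treats the sentence and weights as fixed and the domain as the only varying input, any quantity depending only on $m$ and $N_u$ is to be regarded as a constant for the purposes of the complexity analysis.

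First I would analyze the enumeration loop. The set over which the loop ranges is the Cartesian product $\code{Prod}\bigl(\mathcal{T}_{|B_{\beta^1}|, N_u}, \dots, \mathcal{T}_{|B_{\beta^{2^m}}|, N_u}\bigr)$. By the size formula for the configuration space stated just before this lemma, each factor $\mathcal{T}_{|B_{\beta^i}|, N_u}$ has size $\binom{|B_{\beta^i}| + N_u - 1}{N_u - 1}$, which is polynomial in $|B_{\beta^i}| \le n$ because $N_u$ is a constant. The product therefore has size at most $\prod_{i\in[2^m]} \binom{|B_{\beta^i}| + N_u - 1}{N_u - 1}$, a product of $2^m$ (constantly many) polynomials in $n$, hence itself polynomial in $n$. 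So the loop executes a polynomial number of iterations.

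Next I would bound the cost of a single iteration. The only nontrivial step is Line 5, the computation of $\sumweight_{\vecn}$ via equation \eqref{eq:weight_cell_configuration}. As the text preceding the lemma establishes, $\sumweight_{\vecn}$ equals the \wfomc{} of the \fotwo{} sentence $\sentence_T$ conditional on a set of unary facts (the ground cell types $\widetilde{\eta_i}(\widetilde{e}_i)$), over a domain of size $\widetilde{n} = \sum_i n_i \le n$. By the result of \citet{vandenbroeckConditioningFirstorderKnowledge2012}, conditional \wfomc{} for \fotwo{} is computable in time polynomial in both $\widetilde{n}$ and the number of conditioned facts; since the number of facts is polynomial in $\widetilde{n} \le n$, the computation of $\sumweight_{\vecn}$ is polynomial in $n$. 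The multinomial coefficients $\binom{|B_{\beta^t}|}{\vecn_{\beta^t}}$ in Line 6, the arithmetic in Lines 6, 8, and 12, and the draw in Line 8 are all polynomial-time operations on numbers of polynomial bit-length. The precomputation of $W$ in Line 1 is a single \wfomc{} of $\generalsentence$ over $\domain$, which is again polynomial by the liftability of \fotwo{} for counting. The partitioning loop (Lines 15--21) performs $2^m \cdot N_u$ passes, constantly many, each assigning 1-types to at most $n$ elements and drawing a random partition of the sampled size, which is clearly polynomial. Multiplying the polynomial iteration count by the polynomial per-iteration cost, and adding the polynomial partitioning cost, yields an overall running time polynomial in $n$, which completes the proof.

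The main obstacle I anticipate is not any single computation but rather making precise that the factors depending on $m$ and $N_u$ are genuinely constant. Because the number of cell types $N_c = 2^m \cdot N_u$ grows exponentially in $m$, one must be careful to invoke the data-complexity convention---$\generalsentence$ and $(\weight,\negweight)$ are fixed---so that $2^m$ is a constant multiplier rather than a term in $n$; this is exactly the sense of domain-liftability under sampling defined earlier. The remaining subtlety is ensuring the appeal to \citet{vandenbroeckConditioningFirstorderKnowledge2012} is legitimate, i.e., that $\sumweight_{\vecn}$ really is an instance of conditional \wfomc{} with a polynomially-bounded number of unary facts; this has been argued in the paragraph immediately preceding the lemma and can simply be cited.
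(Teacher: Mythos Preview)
Your proposal is correct and follows essentially the same approach as the paper: bound the number of iterations of the enumeration loop by $\prod_{i=1}^{2^m} |\mathcal{T}_{|B_{\beta^i}|, N_u}|$, observe this is polynomial in $n$ since $2^m$ and $N_u$ are constants, and then appeal to the polynomial-time computability of $\sumweight_{\vecn}$ established just before the lemma. Your version is more thorough---you also account for Line~1, the arithmetic in Lines~6--12, and the partitioning loop---whereas the paper's proof leaves these routine steps implicit.
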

\begin{proof}
  For each block $B_{\beta^i}$, there are totally $|\mathcal{T}_{|B_{\beta^i}|, N_u}|$ possible cell configurations in $B_{\beta^i}$.
  Traversing over all blocks in the loop at line~3 in Algorithm~\ref{alg:sampling_cell_types} will enumerate a total of $\prod_{i=1}^{2^m} |\mathcal{T}_{|B_{\beta^i}|, N_u}|$ possible cell configurations.
  Even though this number may appear daunting, it is polynomial in the domain size by the definition of configuration space.
  The remaining complexity of the algorithm is derived from the computation of $\sumweight_{\vecn}$, which has been shown to be polynomial-time in $\sum_{i\in [N_c]} n_i$.
  Finally, the value of $\sum_{i\in [N_c]} n_i$ is equal to the domain size, completing the proof.
\end{proof}

\subsubsection{Domain Recursive Sampling}

Now, let us consider the sampling problem of $\mathcal{P}_\sentence$-structures conditional on the sampled 1-types $\eta_i$.
We rewrite the sampling probability $\pro[\structure\mid\generalsentence\land\bigwedge_{i\in[n]}\tau_i(e_i)]$ as $\pro[\structure\mid\sentence_T\land\bigwedge_{i\in[n]}\eta_i(e_i)]$ for better presentation.

Let $\pi_{i,j}$ be the 2-table of the element tuple $(e_i, e_j)$, 
and denote by $\structure_i$ the set of ground 2-tables over all element tuples involved in the element $e_i$:
\begin{equation*}
  \structure_i := \bigcup_{j\in[n]: j\neq i} \pi_{i,j}(e_i, e_j).
\end{equation*}
Following the idea of domain recursion, we select an element $e_t$ from $\domain$ and decompose the sampling probability as
\begin{align*}
  \pro\left[\structure\mid \sentence_T\land \bigwedge_{i\in[n]}\eta_i(e_i)\right]=\pro\left[\structure \mid \sentence_T\land\bigwedge_{i\in[n]}\eta_i(e_i)\land\structure_t \right]\cdot \pro\left[\structure_t\mid\sentence_T\land\bigwedge_{i\in[n]}\eta_i(e_i)\right].
\end{align*}
We first demonstrate that for any valid substructure $\structure_t$ of the sampling problem, the WFOMS specified by the probability $\pro[\structure\mid\sentence_T\land\bigwedge_{i\in[n]}\eta_i(e_i)\land\structure_t]$ can be reduced to a WFOMS of the same form as the original problem on $\sentence_T\land\bigwedge_{i\in[n]}\eta_i(e_i)$, but over a smaller domain $\domain' = \domain \setminus {e_t}$.
% This decomposition has a convenient form that allows for the recursive sampling of $\structure_i$'s. Specifically, we will demonstrate that for any valid substructure $\structure_t$ in the sampling problem $(\bigwedge_{i\in[n]}\eta_i(e_i)\land\sentence_T, \domain,\weight, \negweight)$, the WFOMS specified by the probability in Equation~\eqref{eq:domain_recursive} can be reduced to a WFOMS of the same form as the original problem on $\bigwedge_{i\in[n]}\eta_i(e_i)\land\sentence_T$, but over a smaller domain $\domain' = \domain \setminus {e_t}$.

Given a 2-table $\pi$ and a block type $\beta$, let $\relaxcelltype{\beta}{\pi}$ be a new block type:
\begin{equation*}
  \relaxcelltype{\beta}{\pi} = \beta \setminus \{Z_k(x)\mid k\in[m]: R_k(y,x)\in\pi(x,y)\}
  % \label{eq:relax_block_type}
\end{equation*}
% \lnote{Find a more concise notation for relaxation.}
We call $\relaxcelltype{\beta}{\pi}$ the \textit{relaxed} block type of $\beta$ under $\pi$, as it removes a part of the existential constraint that is already satisfied by the relations in $\pi$.
We can also apply the relaxation under $\pi$ on a cell type $\eta = (\beta, \tau)$, resulting in $\relaxcelltype{\eta}{\pi} = (\relaxcelltype{\beta}{\pi}, \tau)$.
% Recall that the substructure $\structure_t$ consists of all 2-tables $\pi_{n,1}, \pi_{n,2},  \dots, \pi_{n, n-1}$ concerning the element $e_n$.
% for all $i\in[n-1]$, $\eta_{i\mid\pi_{n,i}}$ presents the relaxed cell type of the element $e_i$.
Let 
\begin{equation}
  \recursivesentence = \sentence_T\land\bigwedge_{i\in[n]}\eta_i(e_i)\land \structure_t,
  \label{eq:recursive_sentence}
\end{equation}
and
\begin{equation}
  \recursivesentence' = \sentence_T\land \bigwedge_{i\in[n]\setminus \{t\}}\relaxcelltype{\eta_i}{\pi_{t,i}}(e_i).
  \label{eq:reduced_sentence}
\end{equation}
We have the following lemma.
\begin{lemma}
  \label{lemma:modular}
  If $\structure_t$ is valid w.r.t. the WFOMS of $(\sentence_T\land\bigwedge_{i\in[n]}\eta_i(e_i),\domain,\weight, \negweight)$, i.e., $\recursivesentence$ is satisfiable, the reduction from the WFOMS of $(\recursivesentence, \domain, \weight, \negweight)$ to $(\recursivesentence', \domain', \weight, \negweight)$ is sound.
\end{lemma}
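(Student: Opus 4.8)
The plan is to exhibit an explicit weight-preserving bijection $f$ between $\fomodels{\recursivesentence'}{\domain'}$ and $\fomodels{\recursivesentence}{\domain}$ and then read off soundness directly from Definition~\ref{de:soundness}. Given $\mu'\in\fomodels{\recursivesentence'}{\domain'}$, I would define $\mu=f(\mu')$ over $\domain=\domain'\cup\{e_t\}$ by: (i) copying every $\mathcal{P}_\sentence$-literal of $\mu'$ that lives inside $\domain'$; (ii) reinstating $e_t$ with the fixed $1$-type $\tau_t$ coming from $\eta_t$ and the fixed $2$-tables recorded in $\structure_t$; and (iii) setting each Tseitin atom to the value forced by $\sentence_T$ over the enlarged domain, i.e. $Z_k(e_i)$ becomes true in $\mu$ iff it was true in $\mu'$ or $R_k(e_i,e_t)$ holds in $\structure_t$. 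This map is clearly deterministic and computable in polynomial time.

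First I would verify that $f(\mu')\models\recursivesentence$. The conjunct $\structure_t$ and the $1$-types $\tau_i(e_i)$ hold by construction, and the universal part $\forall x\forall y:\fotwoformula(x,y)$ splits into pairs inside $\domain'$, which are inherited from $\mu'$ since $\fotwoformula$ does not mention the Tseitin atoms $Z_k$, and tuples touching $e_t$, which are completely determined by the fixed data $\tau_t$, $\{\tau_j\}$ and $\structure_t$ and are satisfiable precisely because $\structure_t$ is valid, i.e. because $\recursivesentence$ is satisfiable. The Tseitin biconditionals hold by the update rule (iii). The heart of the argument is the block-type conjuncts $\beta_i(e_i)$: for $i\neq t$ I would split $\beta_i$ into the kept part $\relaxcelltype{\beta_i}{\pi_{t,i}}$, whose atoms are already asserted in $\mu'$ and retain a witness inside $\domain'$, and the removed part, consisting of exactly those $k$ with $R_k(e_i,e_t)$ true, which are now witnessed by $e_t$; for $i=t$, validity forces $Z_k(e_t)$ true for every $k\in\beta_t$ because all relations emanating from $e_t$ are already fixed by $\structure_t$ and $\tau_t$.

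Next I would construct the inverse by restricting $\mu\in\fomodels{\recursivesentence}{\domain}$ to $\domain'$ and recomputing each $Z_k(e_i)$ as $\exists y\in\domain': R_k(e_i,y)$. The relaxed block types hold in this restriction exactly because relaxation deletes precisely those existential obligations whose only possible witness was the removed element $e_t$: for a kept $k$ we have $R_k(e_i,e_t)$ false, so the witness for $Z_k(e_i)$ in $\mu$ already lay in $\domain'$. Composing the restriction with $f$ recovers $\mu$, since the update rule (iii) reproduces $Z_k(e_i)=\exists y\in\domain:R_k(e_i,y)$, and composing in the other order is the identity; hence $f$ is a bijection.

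Finally, weight preservation is routine: the only literals on which $\mu$ and $\mu'$ can differ are the flipped $Z_k(e_i)$ and the fresh literals attached to $e_t$ (its $1$-type $\tau_t$, its $2$-tables $\structure_t$, and the atoms $Z_k(e_t)$). Since $\weight(Z_k)=\negweight(Z_k)=\weight(R_k)=\negweight(R_k)=1$ and the fresh $e_t$-literals carry the same truth values for every $\mu'$, I obtain $\typeweight{f(\mu')}=c\cdot\typeweight{\mu'}$ for a constant $c$ independent of $\mu'$, and therefore $\symwfomc(\recursivesentence,\domain,\weight,\negweight)=c\cdot\symwfomc(\recursivesentence',\domain',\weight,\negweight)$. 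Substituting into the soundness identity, where the sum collapses to the single term $\mu'=f^{-1}(\mu)$, yields $\pro[\mu\mid\recursivesentence]=\typeweight{\mu}/(cW')=\typeweight{\mu'}/W'=\pro[\mu'\mid\recursivesentence']$, as required. I expect the main obstacle to be the careful bookkeeping of the Tseitin atoms, namely confirming that relaxation removes exactly the existential obligations discharged by $e_t$ and that $e_t$'s own obligations are guaranteed by the validity hypothesis rather than needing to be re-established on the reduced domain.
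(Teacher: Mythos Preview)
Your proposal is correct and follows essentially the same strategy as the paper: both exhibit a bijection between the two model sets by adjoining the fixed data $\tau_t(e_t)\cup\structure_t$ to a model over $\domain'$, verify bijectivity via the explicit restriction inverse, and then read off soundness from the resulting constant-factor weight relation. The only cosmetic difference is that the paper works at the level of $\mathcal{P}_\sentence$-skeleton structures (letting the Tseitin atoms be determined implicitly by the skeleton), whereas you carry the $Z_k$ explicitly and update them via rule~(iii); this makes your bookkeeping slightly heavier but the argument is otherwise identical.
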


\begin{proof}
  Let $\mathfrak{S}_1$ and $\mathfrak{S}_2$ be the WFOMS of $(\recursivesentence, \domain, \weight, \negweight)$ and $(\recursivesentence', \weight, \domain', \negweight)$ respectively.
  Both of $\mathfrak{S}_1$ and $\mathfrak{S}_2$ have $\mathcal{P}_\sentence$ as their skeleton.
  Thus, the problem of $\mathfrak{S}_1$ (resp. $\mathfrak{S}_2$) is equivalent to sampling a $\mathcal{P}_\sentence$-structure over the domain $\domain$ (resp. $\domain'$).
  The mapping function in the reduction can be defined on $\mathcal{P}_\sentence$-structures over $\domain'$.
  We argue that the mapping function is $f(\structure') = \structure'\cup \structure_t\cup \tau_t(e_t)$.
  The function $f$ is clearly deterministic and polynomial-time.

  To simplify the rest arguments of the proof, we will first show that $f$ is bijective, i.e., 
  for any valid $\mathcal{P}_\sentence$-structure $\structure$ of $\mathfrak{S}_1$, there exists a unique valid structure $\structure'$ of $\mathfrak{S}_2$ such that $f(\structure') = \structure$.
  Let the respective structure to be $\structure' = \structure \setminus \structure_t \setminus \{\tau_t(e_t)\}$, and the uniqueness is clear.
  % We can write $\structure$ as $\structure = (\structure_t \cup \tau_t(e_t)) \cup \structure_{-t}$, and the unique structure $\structure'$ is $\structure' = \bigcup_{i\in[n]: i\neq t}\tau_i(e_i)\cup \structure_{-t}$.
  Next, we demonstrate that $\structure'$ is valid w.r.t. $\mathfrak{S}_2$.
  Ground out $\recursivesentence$ into $\domain$:
  \begin{align*}
    \structure_t\land&\bigwedge_{i\in[n]}\eta_i(e_i)\land \bigwedge_{i,j\in[n]} \fotwoformula(e_i, e_j)\land \Lambda,
  \end{align*}
  where $\Lambda = \bigwedge_{k\in[m]}\bigwedge_{i\in[n]} \left(Z_k(e_i)\Leftrightarrow \bigvee_{j\in[n]} R_k(e_i, e_j)\right)$.
  By replacing the ground Tseitin atoms in cell types $\eta_i(e_i)$ in $\Lambda$ with their corresponding truth assignments and then discarding $\Lambda$, we obtain a ground formula without Tseitin atoms:
  \DoubleColumn
  \begin{equation}
    \begin{aligned}
      \structure_t\land\bigwedge_{i\in[n]}\tau_i(e_i)&\land \bigwedge_{i,j\in[n]} \fotwoformula(e_i,e_j)\\
      &\quad \land \bigwedge_{i\in[n]}\bigwedge_{\substack{k\in[m]: \\Z_k(x)\in\beta_i}} \bigvee_{j\in[n]} R_k(e_i, e_j).
    \end{aligned}
    \label{eq:ground}
  \end{equation}
  \DoubleColumnEnd
  \SingleColumn
  \begin{equation}
      \structure_t\land\bigwedge_{i\in[n]}\tau_i(e_i)\land \bigwedge_{i,j\in[n]} \fotwoformula(e_i,e_j) \land \bigwedge_{i\in[n]}\bigwedge_{\substack{k\in[m]: \\Z_k(x)\in\beta_i}} \bigvee_{j\in[n]} R_k(e_i, e_j).
    \label{eq:ground}
  \end{equation}
  \SingleColumnEnd
  It can be easily shown that $\structure$ is valid w.r.t. $\mathfrak{S}_1$, iff $\structure$ satisfies the formula~\eqref{eq:ground}.
  It follows that the structure $\structure'$ satisfies the following formula
  \DoubleColumn
  \begin{equation}
    \begin{aligned}
      \bigwedge_{i\in[n]\setminus \{t\}}&\tau_i(e_i)\land \bigwedge_{i,j\in[n]\setminus \{t\}} \fotwoformula(e_i,e_j)\\
      &\quad \land \bigwedge_{i\in[n]\setminus \{t\}}\bigwedge_{\substack{k\in[m]: \\Z_k(x)\in\relaxcelltype{\beta_{i}}{\pi_{t,i}}}} \bigvee_{j\in[n]\setminus\{t\}} R_k(e_i, e_j),
    \end{aligned}
    \label{eq:recursive_ground}
  \end{equation}
  \DoubleColumnEnd
  \SingleColumn
  \begin{equation}
      \bigwedge_{i\in[n]\setminus \{t\}}\tau_i(e_i)\land \bigwedge_{i,j\in[n]\setminus \{t\}} \fotwoformula(e_i,e_j) \land \bigwedge_{i\in[n]\setminus \{t\}}\bigwedge_{\substack{k\in[m]: \\Z_k(x)\in\relaxcelltype{\beta_{i}}{\pi_{t,i}}}} \bigvee_{j\in[n]\setminus\{t\}} R_k(e_i, e_j),
    \label{eq:recursive_ground}
  \end{equation}
  \SingleColumnEnd
  which is obtained from \eqref{eq:ground} by substituting the ground atoms in $\structure_t$ and $\tau_t(e_t)$.
  The formula~\eqref{eq:recursive_ground} is nothing else but the grounding of $\recursivesentence'$ over $\domain'$ followed by the same replacement of ground Tseitin atoms in the relaxed cell types $\relaxcelltype{\eta_{i}}{\pi_{t,i}}$. 
  So we can conclude that $\structure'$ is also valid w.r.t. $\mathfrak{S}_2$.

  Now, we are prepared to demonstrate the consistency of sampling probability through the mapping function.
  Since $f$ is bijective, it remains to be shown that
  \begin{equation*}
    \pro[f(\structure') \mid\recursivesentence; \domain, \weight, \negweight] = \pro[\structure' \mid \recursivesentence'; \domain', \weight, \negweight]
  \end{equation*}
  for any valid structure $\structure'$ of $\mathfrak{S}_2$.
  By the definition of the mapping function $f$, we have
  \begin{equation*}
    \typeweight{f(\structure')} = \typeweight{\structure'}\cdot\typeweight{\structure_t}\cdot\typeweight{\tau_t}.
  \end{equation*}
  Moreover, due to the bijection of $f$ and the fact that $\mathcal{P}_\sentence$ is a skeleton of $\mathfrak{S}_1$ and $\mathfrak{S}_2$, we have
  \DoubleColumn
  \begin{equation}
    \begin{aligned}
      &\symwfomc(\recursivesentence, \domain, \weight, \negweight) =\sum_{\mu\in\fomodels{\recursivesentence}{\domain}}\typeweight{\proj{\mu}{\mathcal{P}_\sentence}}\\
      &=\sum_{\mu'\in\fomodels{\recursivesentence'}{\domain'}}\typeweight{f(\proj{\mu'}{\mathcal{P}_\sentence})}\\
      &=\typeweight{\structure_t}\cdot\typeweight{\tau_t}\cdot \sum_{\mu'\in\fomodels{\recursivesentence'}{\domain'}}\typeweight{\proj{\mu'}{\mathcal{P}_\sentence}}\\
      &=\typeweight{\structure_t}\cdot\typeweight{\tau_t}\cdot\symwfomc(\recursivesentence', \domain', \weight, \negweight).
    \end{aligned}
    \label{eq:modular_wfomc}
  \end{equation}
  \DoubleColumnEnd
  \SingleColumn
  \begin{equation}
    \begin{aligned}
      \symwfomc(\recursivesentence, \domain, \weight, \negweight) &=\sum_{\mu\in\fomodels{\recursivesentence}{\domain}}\typeweight{\proj{\mu}{\mathcal{P}_\sentence}}\\
      &=\sum_{\mu'\in\fomodels{\recursivesentence'}{\domain'}}\typeweight{f(\proj{\mu'}{\mathcal{P}_\sentence})}\\
      &=\typeweight{\structure_t}\cdot\typeweight{\tau_t}\cdot \sum_{\mu'\in\fomodels{\recursivesentence'}{\domain'}}\typeweight{\proj{\mu'}{\mathcal{P}_\sentence}}\\
      &=\typeweight{\structure_t}\cdot\typeweight{\tau_t}\cdot\symwfomc(\recursivesentence', \domain', \weight, \negweight).
    \end{aligned}
    \label{eq:modular_wfomc}
  \end{equation}
  \SingleColumnEnd
  Finally, by the definition of conditional probability, we can write
  \DoubleColumn
  \begin{equation}
    \begin{aligned}
      &\pro[f(\structure')\mid\recursivesentence;\domain, \weight, \negweight]=\frac{\typeweight{f(\structure')}}{\symwfomc(\recursivesentence, \domain, \weight, \negweight)}\\
      &=\frac{\typeweight{\structure'}\cdot\typeweight{\structure_t}\cdot\typeweight{\tau_t}}{\typeweight{\structure'}\cdot\typeweight{\structure_t}\cdot\symwfomc(\recursivesentence', \domain', \weight, \negweight)}\\
      &=\frac{\typeweight{\structure'}}{\symwfomc(\recursivesentence', \domain', \weight, \negweight)}\\
      &=\pro[\structure'\mid\recursivesentence';\domain', \weight, \negweight],
    \end{aligned}
    \label{eq:consistent_weight}
  \end{equation}
  \DoubleColumnEnd
  \SingleColumn
  \begin{equation}
    \begin{aligned}
      \pro[f(\structure')\mid\recursivesentence;\domain, \weight, \negweight]&=\frac{\typeweight{f(\structure')}}{\symwfomc(\recursivesentence, \domain, \weight, \negweight)}\\
      &=\frac{\typeweight{\structure'}\cdot\typeweight{\structure_t}\cdot\typeweight{\tau_t}}{\typeweight{\structure'}\cdot\typeweight{\structure_t}\cdot\symwfomc(\recursivesentence', \domain', \weight, \negweight)}\\
      &=\frac{\typeweight{\structure'}}{\symwfomc(\recursivesentence', \domain', \weight, \negweight)}\\
      &=\pro[\structure'\mid\recursivesentence';\domain', \weight, \negweight],
    \end{aligned}
    \label{eq:consistent_weight}
  \end{equation}
  \SingleColumnEnd
  and thus complete the proof.
\end{proof}

% \lnote{The rest of the section hasn't been revised.}
With the sound reduction presented above, what remains to the algorithm is the sampling of $\structure_t$ given the probability $\pro\left[\structure_t \mid \sentence_T\land \bigwedge_{i\in[n]} \eta_i(e_i) \right]$.

Recall that $\structure_t$ consists of the ground 2-tables of all tuples comprising $e_t$ and the elements in $\domain'$.
% , so sampling $\structure_t$ involves determining those 2-tables.
We follow a similar approach as in the cell type sampling and accomplish the sampling of $\structure_t$ through random partitions on cells.
Let ${C_{\eta^1}, C_{\eta^2}, \dots, C_{\eta^{N_c}}}$ be the cell partition of $\domain'$ corresponding to the sampled cell types $\eta_1, \eta_2, \dots, \eta_{n-1}$.
Let $N_b$ be the number of all 2-tables, and fix the linear order of 2-tables $\pi^1, \pi^2, \dots, \pi^{N_b}$.
Any substructure $\structure_t$ can be viewed as partitions on each cell into $N_b$ disjoint subsets; each subset corresponds to a 2-table $\pi^j$ and precisely contains the elements that realize $\pi^j$ in combination with $e_t$.

Given a substructure $\structure_t$, we use $\left\{G_{\eta^i, \pi^j}^{\structure_t}\right\}_{j\in[N_b]}$ to denote the refined partition on $C_{\eta^i}$, and $\vecg_{\eta^i}^{\structure_t} = \left(|G_{\eta^i, \pi^j}^{\structure_t}|\right)_{j\in[N_b]}$ its corresponding cardinality vector.
Let $\vecg^{\structure_t} = \bigoplus_{i\in[N_c]}\vecg_{\eta^i}^{\structure_t}$ be the concatenation of cardinality vectors over all cells, which is called the \textit{2-table configuration} of $\structure_t$.

We first assume that $\structure_t$ is valid in the sampling problem $(\sentence_T\land \bigwedge_{i\in[n]}\eta_i(e_i),\domain,\weight, \negweight)$.
It will turn out that the sampling probability of $\structure_t$ is completely determined by its corresponding 2-table configuration $\vecg^{\structure_t}$.
To begin with, as stated by~\eqref{eq:modular_wfomc}, we can write the sampling weight $\symwfomc(\sentence_T\land\bigwedge_{i\in[n]}\eta_i(e_i)\land\structure_t, \domain, \weight, \negweight)$ as
\begin{equation}
  \symwfomc(\recursivesentence', \domain', \weight, \negweight)\cdot \typeweight{\tau_t}\cdot \typeweight{\structure_t}
  \label{eq:sampling_a_n}
\end{equation}
where $\recursivesentence'$, defined as~\eqref{eq:reduced_sentence}, is the reduced sentence by the 2-tables in $\structure_t$.
Let $\vecn^{\structure_t}$ be the cell configuration corresponding to the ground cells in $\recursivesentence'$.
The value of $\symwfomc(\recursivesentence',\domain', \weight, \negweight)$ is exactly $\sumweight_{\vecn^{\structure_t}}$, which is formally defined in Section~\ref{sub:sampling_cell_types}.
Denote by $\vecweight = \left(\typeweight{\pi^i}\right)_{i\in N_b}$, the weight vector of 2-tables.
We can then write \eqref{eq:sampling_a_n} as
\begin{equation}
  \sumweight_{\vecn^{\structure_t}}\cdot \typeweight{\tau_t} \cdot \prod_{i\in[N_c]} \vecweight^{\vecg_{\eta^i}^{\structure_t}}.
  \label{eq:structure_weight}
\end{equation}
In the equation above, $\tau_t$ has already been decided in the cell type $\eta_t$ (by \code{OneTypeSampler}), and the last term only depends on the 2-table configurations $\vecg^{\structure_t}$.
It is easy to check that the cell configuration of $\vecn^{\structure_t}$ is also fully determined by $\vecg^{\structure_t}$.
% By the definition of relaxation on a cell type under a 2-table, for any cell type $\eta^i$ and any 2-table $\pi^j$, the reduction of $\recursivesentence'$ under $\structure_t$ moves $|G_{\eta^i, \pi^j}^{\structure_t}|$ domain elements from the cell $\eta^i$ to $\relaxcelltype{\eta^i}{\pi^j}$.
To illustrate this, let $n^{\structure_t}_{\eta}$ be the cardinality of cell type $\eta$ in $\vecn^{\structure_t}$,
and $g_{\eta, \pi}^{\structure_t}$ the cardinality of $\pi$ in $\vecg_{\eta}^{\structure_t}$, i.e., $|G_{\eta, \pi}^{\structure_t}|$.
For any cell type $\eta$, the value of $n^{\structure_t}_{\eta}$ be can computed by 
\begin{equation}
  n^{\structure_t}_{\eta} = \sum_{i\in[N_c], j\in[N_b]: \relaxcelltype{\eta^i}{\pi^j} = \eta} g_{\eta^i, \pi^j}^{\structure_t}.
  \label{eq:configuration_reduction}
\end{equation}

By the argument above, the sampling probability~\eqref{eq:structure_weight} of a valid substructure $\structure_t$ is completely determined by $\vecg^{\structure_t}$.
Thus, we can sample $\structure_t$, in the same spirit of sampling 1-types in Section~\ref{sub:sampling_cell_types}, by first sampling a 2-table configuration $\vecg^{\structure_t}$, and then partitioning the cells accordingly.
We can then simply apply the enumerative sampling method, as the number of possible 2-table configurations is clearly polynomial in the domain size.
For any 2-table configuration $\vecg^{\structure_t}$, its sampling weight can be computed by multiplying \eqref{eq:structure_weight} by $\prod_{i\in[N_c]}\binom{n_{\eta^i}}{\vecg_{\eta^i}^{\structure_t}}$, where $n_{\eta^i}$ is the size of $C_{\eta^i}$.

So far in our discussion, we have been always assuming that the substructure $\structure_t$ is valid in the WFOMS $(\sentence_T\land \bigwedge_{i\in[n]}\eta_i(e_i), \domain,\weight, \negweight)$, or it should not be sampled.
We guarantee this assumption by imposing some constraints on the 2-table configuration $\vecg^{\structure_t}$.
We call a 2-table $\pi$ \textit{coherent} with a 1-types tuple $(\tau, \tau')$ if, for some domain elements $a$ and $b$, the interpretation of $\tau(a)\cup\pi(a,b)\cup\tau'(b)$ satisfies the formula $\fotwoformula(a,b)\land\fotwoformula(b,a)$.
Then, the first constraint is that any 2-table $\pi_{t,i}$ in $\structure_t$ must be coherent with $\tau_t$ and $\tau_i$.
This translates to a requirement on 2-table configuration that, when partitioning a cell $\eta^i$, the cardinality of 2-tables that are not coherent with $\tau_t$ and $\tau_i$ is restricted to be $0$.
The second constraint is that, for any index $k\in\{i\mid Z_i\in\beta_t\}$, the substructure $\structure_t$ must contain at least one ground atom of the form $R_k(e_t,a)$, where $a$ is a domain element from $\domain$, to make $\structure_t$ satisfy the existential formula $\exists y:R_k(e_t,y)$.
This means that there must be at least one nonzero cardinality in the 2-table configuration such that its corresponding 2-table $\pi$ satisfies $R_k(x,y)\in\pi$.

By combining all the ingredients discussed above, we now present our sampling algorithm for the sentence $\sentence_T$ conditionally on the cell types $\eta_i$, as shown in Algorithm~\ref{alg:drsampler}.
The overall structure of the algorithm follows a recursive approach, where a recursive call  with a smaller domain and relaxed cell types is invoked at Line~33.
The algorithm terminates when the input domain contains a single element (at Line~1) or there are no existential constraints on the elements (at Line~4).
In Lines~10-23, all possible 2-table configurations are enumerated.
For each configuration, we compute its corresponding weight in Lines~13-15 and decide whether it should be sampled in Lines~16-21.
When the 2-table configuration has been sampled, we randomly partition the cells in Lines~25-32, and then update the sampled structure and the cell type of each element respectively at Line~29 and~30.
The function $\code{ExSat}(\vecg, \eta)$ at Line~12 is used to check whether the 2-table configuration $\vecg$ guarantees the validity of the sampled substructures, as discussed above.
The pseudo-code for this function is presented in Appendix~\ref{sub:exsat}.

\begin{algorithm}[!htb]
  \caption{$\code{DRSampler}(\sentence_T, \domain, \weight, \negweight, (\eta_i)_{i\in[n]})$} 
  \label{alg:drsampler}
  % \SetKwProg{generate}{Function \emph{generate}}{}{end}
  % \begin{flushleft}
  %   \textbf{INPUT:} A sentence $\sentence_T$ of the form \eqref{eq:general_sentence}, a domain $\domain=\{e_i\}_{i\in[n]}$, a weighting $(\weight, \negweight)$ and the cell type $\eta_i = (\tau_i,\beta_i)$ of each domain element $e_i$\\
  %   \textbf{OUTPUT:} A model $\mu$ of $\generalsentence$\\
  % \end{flushleft}
  \begin{algorithmic}[1]
    % \State $n\gets |\domain|$
    \If{$n = 1$}
    \State \Return $\emptyset$
    \EndIf
    \If{only the block of type $\beta =\top$ is nonempty}
      \State \Return a model $\mu$ sampled by a \ufotwo{} WMS from $\forall x\forall y: \fotwoformula(x,y)\land\bigwedge_{i\in[n]}\tau_i(e_i)$ over $\domain$ under $(\weight, \negweight)$
    \EndIf
    \State Choose $t\in[n]$; $\mu \gets \tau_t(e_t)$; $\domain'\gets\domain \setminus \{e_t\}$
    \State Get the cell configuration $\vecn = \left(n_{\eta^i}\right)_{i\in[N_c]}$ of $\left(\eta_i\right)_{i\in[n]\setminus\{t\}}$
    \State $W \gets \sumweight_{\vecn}$
    % \State $\vecweight \gets \left(\typeweight{\pi^i}\right)_{i\in N_b}$
    % \State $\forall i\in[N_c], T_i \gets \mathcal{T}_{n_{\eta^i}, N_b}$
    \For{$\left(\vecg_{\eta^i}\right)_{i\in[N_c]}\gets\code{Prod}(\mathcal{T}_{n_{\eta^1}, N_b}, \dots, \mathcal{T}_{n_{\eta^{N_c}}, N_b})$}
    \State $\vecg\gets\bigoplus_{i\in[N_c]}\vecg_{\eta^i}$
      \If{$\code{ExSat}\left(\vecg, \eta_t\right)$}
        \State Get the new cell configuration $\vecn'$ w.r.t $\vecg$ by \eqref{eq:configuration_reduction}
        \State Compute $\sumweight_{\vecn'}$ by~\eqref{eq:weight_cell_configuration}
        \State $W'\gets \sumweight_{\vecn'}\cdot \typeweight{\tau_t} \cdot \prod_{i\in[N_c]} \binom{n_{\eta^i}}{\vecg_{\eta^i}} \vecweight^{\vecg_{\eta^i}}$
        \If{$\code{Uniform}(0,1) < \frac{W'}{W}$}
          \State $\vecg^* \gets \vecg$
          \State \textbf{break}
        \Else
          \State $W \gets W - W'$
        \EndIf
      \EndIf
    \EndFor
    \State Obtain the cell partition $\{C_{\eta^i}\}_{i\in[N_c]}$ from $\left(\eta_i\right)_{i\in[n]\setminus\{t\}}$
    \For{$i\in[N_c]$}
      \State Fetch the cardinality vector $\vecg^*_{\eta^i}$ of $\eta^i$ from $\vecg^*$
      \State Randomly partition the cell $C_{\eta^i}$ into $\left\{G_{\eta^i, \pi^j}\right\}_{j\in[N_b]}$ according to $\vecg^*_{\eta^i}$
      \For{$j\in[N_b]$}
        \State $\mu \gets \mu \cup \left\{\pi^j(e_t, e)\right\}_{e\in G_{\eta^i, \pi^j}}$
        \State $\forall e_s\in G_{\eta^i, \pi^j}, \eta_s' \gets \relaxcelltype{\eta_s}{\pi^j}$
      \EndFor
    \EndFor
    \State $\mu\gets \mu \cup \code{DRSampler}(\sentence_T, \domain', \weight, \negweight, \left(\eta_i'\right)_{i\in[n-1]})$
    \State \Return $\mu$
  \end{algorithmic}
\end{algorithm}

\begin{lemma}
  \label{le:drsamplercomplexity}
  The complexity of $\code{DRSampler}(\cdot, \cdot,\cdot, \cdot)$ in Algorithm~\ref{alg:drsampler} is polynomial in the size of the input domain.
\end{lemma}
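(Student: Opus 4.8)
The plan is to bound the cost of a single (non-recursive) invocation of $\code{DRSampler}$ by a polynomial in the domain size, and then to observe that the recursion depth is linear in the domain size. Writing $T(n)$ for the running time on a domain of size $n$, the body of the algorithm makes exactly one recursive call at Line~33 on the domain $\domain' = \domain\setminus\{e_t\}$ of size $n-1$, so $T(n) = p(n) + T(n-1)$, where $p(n)$ denotes the per-call cost. If $p$ is a polynomial, then $T(n)\le \sum_{k=1}^{n} p(k)\le n\cdot p(n)$ is also a polynomial, which gives the claim. It therefore remains to show that $p(n)$ is polynomial.

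First I would dispose of the base cases (Lines~1--6): returning $\emptyset$ is constant time, and the \ufotwo{} sampling at Line~5 runs in polynomial time by Theorem~\ref{th:ufo_liftable}. The dominant work is the enumeration loop at Line~10, which ranges over the Cartesian product $\code{Prod}(\mathcal{T}_{n_{\eta^1}, N_b}, \dots, \mathcal{T}_{n_{\eta^{N_c}}, N_b})$ of $2$-table configurations. The number of iterations is $\prod_{i\in[N_c]} |\mathcal{T}_{n_{\eta^i}, N_b}|$; since $N_b$ is a constant determined only by $\sentence$, each factor $|\mathcal{T}_{n_{\eta^i}, N_b}| = \binom{n_{\eta^i}+N_b-1}{N_b-1}$ is a polynomial of degree $N_b-1$ in $n_{\eta^i}$, and as there are only $N_c$ factors (again a constant) their product is bounded by $n^{N_c(N_b-1)}$, hence polynomial in the domain size. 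This is exactly the counting argument that underlies Lemma~\ref{lemma:complexity_cell_type} for \code{OneTypeSampler}.

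Second, I would bound the cost of the loop body. The computation of $\sumweight_{\vecn'}$ at Line~14 via~\eqref{eq:weight_cell_configuration} reduces to a conditional \wfomc{} of $\sentence_T$ under a set of unary facts, which by the discussion in Section~\ref{sub:sampling_cell_types} (following the conditioning result of van den Broeck et al.) is polynomial in the domain size; the configuration reduction~\eqref{eq:configuration_reduction}, the multinomial coefficients, the weight products $\vecweight^{\vecg_{\eta^i}}$, and the validity test $\code{ExSat}$ are all clearly polynomial. The partitioning phase in Lines~24--32 touches each of the $n-1$ remaining elements a constant number of times and is likewise polynomial. Summing a polynomial amount of work over a polynomial number of loop iterations shows that $p(n)$ is polynomial, which together with the linear recursion depth established above completes the proof.

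The main obstacle is the first step: ensuring that the number of enumerated $2$-table configurations stays polynomial. This is precisely where it matters that $N_c$ and $N_b$ depend only on the fixed sentence $\sentence$ (and hence are constants from the standpoint of data complexity), so that the product of the per-cell configuration counts---each individually polynomial---remains polynomial rather than blowing up exponentially in the domain size.
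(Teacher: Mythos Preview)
Your proposal is correct and follows essentially the same approach as the paper: bound the recursion depth by $n$, argue that the per-call work is dominated by the enumeration over $\prod_{i\in[N_c]} |\mathcal{T}_{n_{\eta^i}, N_b}|$ configurations (polynomial because $N_c$ and $N_b$ are constants depending only on $\sentence$), and note that computing $\sumweight_{\vecn'}$ is polynomial via the conditional \wfomc{} argument from Section~\ref{sub:sampling_cell_types}. Your write-up is in fact more thorough than the paper's, explicitly treating the base cases and the remaining bookkeeping in the loop body, but the skeleton of the argument is the same.
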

\begin{proof}
  The algorithm $\code{DRSampler}$ is called at most $n$ times, where $n$ is the size of the domain.
  The main computation of each recursive call is for the loop, where we need to iterate over all $\prod_{i\in[N_c]} |\mathcal{T}{n_{\eta^i}, N_b}|$ possible configurations.
  The size of a configuration space $\mathcal{T}_{M, m}$ is polynomial in $M$, and thus the complexity of this loop is also polynomial in the domain size.
  The other complexity of computing $\mathcal{W}_{\vecn'}$ has been shown to be polynomial in the summation over the vector $\vecn'$, which is clearly smaller than the domain size.
\end{proof}

% Using the definition of configuration space, for every cell $C_\eta$, there are totally $|\mathcal{T}_{|C_\eta|, N_b}|$ possible $\vecg_{\eta}^{\structure_t}$.
% The number of the possible overall configurations over all cells is $\prod_{i\in[N_c]}|\mathcal{T}_{|C_{\eta^i}|, N_b}|$.

\subsubsection{A Lifted WMS for \fotwo{}}

We present our WMS for \fotwo{} in Algorithm~\ref{alg:fo2_wms}.
Given a \fotwo{} sentence $\sentence$ in SNF, the algorithm first obtains the sentence $\sentence_T$ of the general form~\eqref{eq:general_sentence}.
Then the algorithms $\code{OneTypeSampler}$ and $\code{DRSampler}$ then applied successively to sample a skeleton structure of $\generalsentence$.
It is easy to verify that the skeleton structure is also a model of $\sentence$, as it can be regarded as the output of the mapping function in the sound reduction from the WFOMS problem on $\sentence$ to $\generalsentence$.
Since both $\code{OneTypeSampler}$ and $\code{DRSampler}$ have been proved to be polynomial-time in the domain size by Lemma~\ref{lemma:complexity_cell_type} and~\ref{le:drsamplercomplexity}, the WMS in Algorithm~\ref{alg:fo2_wms} is clearly lifted.

\begin{algorithm}[!htb] 
  \caption{$\code{WMS}(\sentence, \domain, \weight, \negweight)$} 
  \label{alg:fo2_wms}
  % \SetKwProg{generate}{Function \emph{generate}}{}{end}
  \begin{flushleft}
    \textbf{INPUT:} An \fotwo{} sentence $\sentence$ of the form \eqref{eq:scott_form}, a domain $\domain=\{e_i\}_{i\in[n]}$ of size $n$, a weighting $(\weight, \negweight)$\\
    \textbf{OUTPUT:} A model $\mu$ of $\sentence$ over $\domain$\\
  \end{flushleft}
  \begin{algorithmic}[1]
    \State Construct $\sentence_T$ from $\sentence$ by~\eqref{eq:tseitin_reduction}
    \State $\forall i\in[n], \beta_i(x)\gets\bigwedge_{k\in[m]}Z_k(x)$
    \State $\generalsentence\gets\sentence_T\land\bigwedge_{i\in[n]}\beta_i(e_i)$
    \State $\left(\tau_i\right)_{i\in[n]}\gets\code{OneTypeSampler}(\generalsentence, \domain, \weight, \negweight)$
    \State $\forall i\in[n], \eta_i \gets (\beta_i, \tau_i)$
    \State $\mu\gets \code{DRSampler}(\sentence_T,\domain,\weight, \negweight,(\eta_i)_{i\in[n]})$
    \State \Return $\mu$
  \end{algorithmic}
\end{algorithm}

\begin{theorem}
  \label{th:main_result}
  The fragment \fotwo{} is domain-liftable under sampling.
\end{theorem}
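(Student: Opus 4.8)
The plan is to show that Algorithm~\ref{alg:fo2_wms} ($\code{WMS}$) is a domain-lifted weighted model sampler for an arbitrary \fotwo{} sentence, by assembling the soundness and complexity results already established. First I would reduce the input problem to the general form: by Lemma~\ref{le:snf_sound} the original WFOMS reduces soundly to one whose sentence is in SNF, and then---by setting every existential constraint $\mathcal{C}_i$ to $\bigwedge_{k\in[m]}Z_k(x)$---to the general problem on $\generalsentence$ of the form~\eqref{eq:general_sentence}, as discussed in Section~\ref{sub:generalproblem}. Since soundness is transitive and $\mathcal{P}_\sentence$ is a skeleton of the general problem, it suffices to produce a $\mathcal{P}_\sentence$-structure $\structure$ over $\domain$ with probability $\pro[\structure\mid\generalsentence]$; the mapping function of the sound reduction (projection onto $\mathcal{P}_\sentence$) then turns such a structure into a model of $\sentence$ sampled with the required probability.

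Next I would verify that the two phases realize the factorization $\pro[\structure\mid\generalsentence]=\pro[\bigwedge_{i\in[n]}\tau_i(e_i)\mid\generalsentence]\cdot\pro[\structure\mid\generalsentence\land\bigwedge_{i\in[n]}\tau_i(e_i)]$. For the first factor, $\code{OneTypeSampler}$ draws a cell configuration $\vecn$ with probability proportional to $\sumweight_\vecn\cdot\prod_{i\in[2^m]}\binom{|B_{\beta^i}|}{\vecn_{\beta^i}}$ and then partitions each block uniformly; by the symmetry of the weighting all cell partitions inducing the same configuration are equiprobable, so the resulting 1-type assignment has exactly the probability $\mathfrak{P}_1$. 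For the second factor, $\code{DRSampler}$ selects an element $e_t$, samples its substructure $\structure_t$, and recurses on $\domain'=\domain\setminus\{e_t\}$. The correctness of each recursive step rests on Lemma~\ref{lemma:modular}: the conditional WFOMS on $\recursivesentence$ reduces soundly to the smaller WFOMS on $\recursivesentence'$ over $\domain'$, so that sampling $\structure_t$ with probability $\pro[\structure_t\mid\sentence_T\land\bigwedge_i\eta_i(e_i)]$ and then recursively sampling the remainder composes to the correct conditional probability.

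The core of the correctness argument is that the sampling weight of a valid $\structure_t$ depends only on its 2-table configuration $\vecg^{\structure_t}$: by~\eqref{eq:modular_wfomc} this weight equals $\sumweight_{\vecn^{\structure_t}}\cdot\typeweight{\tau_t}\cdot\prod_{i\in[N_c]}\vecweight^{\vecg_{\eta^i}^{\structure_t}}$, and~\eqref{eq:configuration_reduction} shows the reduced cell configuration $\vecn^{\structure_t}$ is itself a function of $\vecg^{\structure_t}$. Hence the substructure can be sampled by first drawing a 2-table configuration and then partitioning the cells, exactly as $\code{DRSampler}$ does. I would additionally confirm that the validity constraints enforced by $\code{ExSat}$---coherence of each 2-table with the relevant 1-types, and at least one witnessing atom $R_k(e_t,\cdot)$ for every $Z_k(x)\in\beta_t$---restrict the enumeration to precisely the valid $\structure_t$, and that the recursion terminates correctly when $|\domain|=1$ or when only the block $\beta=\top$ survives, the latter being a \ufotwo{} instance handled by Theorem~\ref{th:ufo_liftable}.

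Finally, the complexity claim follows at once: $\code{OneTypeSampler}$ runs in time polynomial in $|\domain|$ by Lemma~\ref{lemma:complexity_cell_type}, and $\code{DRSampler}$ does so by Lemma~\ref{le:drsamplercomplexity} (at most $n$ recursive calls, each enumerating a polynomially sized configuration space and evaluating each $\sumweight$ in polynomial time via conditional WFOMC). The main obstacle is not any individual estimate but the bookkeeping that ties the two phases together: one must check that the per-step factors telescope into $\pro[\structure\mid\generalsentence]$ across the entire recursion, and that the relaxed cell types propagated at Line~30 keep each recursive subproblem in the same general form~\eqref{eq:general_sentence}, so that the structural invariants relied upon by both subroutines are preserved throughout.
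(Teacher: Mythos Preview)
Your proposal is correct and follows essentially the same approach as the paper: the paper's own proof is a single line (``directly following from the above and from Lemma~\ref{le:snf_sound}''), and you have simply spelled out in detail how the ingredients of Section~3---the sound SNF reduction (Lemma~\ref{le:snf_sound}), the general form~\eqref{eq:general_sentence}, the two-phase factorization, Lemma~\ref{lemma:modular} for recursive soundness, and Lemmas~\ref{lemma:complexity_cell_type} and~\ref{le:drsamplercomplexity} for complexity---assemble into a lifted WMS for \fotwo{}. Nothing in your write-up departs from the paper's route.
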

\begin{proof}[Proof of Theorem~\ref{th:main_result}]
  The proof is directly following from the above and from Lemma~\ref{le:snf_sound}.
\end{proof}

\begin{remark}
  We note that there are several optimizations to our WMS, e.g., heuristically selecting the domain element in $\code{DRSampler}$ so that the algorithm can quickly reach the terminal condition.
  However, the current algorithm is clear and efficient enough to prove our main result, so that we leave the discussion on some of the optimizations to Appendix~\ref{sub:optwms}.
\end{remark}

% \begin{algorithm}[!htb] 
%   \caption{$code{ExSat}(\left(\vecg_{\eta^i}\right)_{i\in N_c}, \beta)$} 
%   \label{alg:ext_sat}
%   % \SetKwProg{generate}{Function \emph{generate}}{}{end}
%   % \begin{flushleft}
%   %   \textbf{INPUT:} A sentence $\generalsentence$ of the form \eqref{eq:general_sentence}, a domain $\domain$ and a weighting $(\weight, \negweight)$\\
%   %   \textbf{OUTPUT:} A model $\mu$ of $\generalsentence$\\
%   % \end{flushleft}
%   \State $g_{\eta^i, \pi}\gets$ the cardinality of $\pi$ in $\vecg$
%   \begin{algorithmic}[1]
%   \end{algorithmic}
% \end{algorithm}

\section{A Genralization to \fotwo{} with Cardinality Constraints}

In this section, we extend our results to \fotwo{} with \textit{cardinality constraints}. 
A single cardinality constraint is a statement of the form $|P|\bowtie q$, where $\bowtie$ is a comparison operator (e.g., $=$, $\le$, $\ge$, $<$, $>$) and $q$ is a natural number. 
These constraints are imposed on the number of distinct positive ground literals in a structure $\structure$ formed by the predicate $P$. 
For example, a structure $\structure$ satisfies the constraint $|P| \le q$ if there are at most $q$ literals for $P$ that are true in $\structure$.
For illustration, we allow cardinality constraints as atomic formulas in the FO formulas, e.g., $(|E| = 2)\land (\forall x\forall y: E(x,y) \Rightarrow E(y,x))$ (its models can be interpreted as undirected graphs with exactly one edge) and the satisfaction relation $\models$ is extended naturally.

The cardinality constraints are not necessarily expressible in FO logic without grounding out the constraint over the domain, and have a strong connection to the fragment of \ctwo{}, which will be introduced in the next section.
In \cite{wangDomainLiftedSamplingUniversal2022}, the authors also extended their WMS (which was originally developed for \ufotwo{}) to handle cardinality constraints. 
However, their method was relatively straightforward whereas the extension to \fotwo{} is more complicated.

Let $\sentence$ be an \fotwo{} sentence and 
\begin{equation}
  \Upsilon := \varphi(|P_1|\bowtie q_1,\dots,|P_M|\bowtie q_M),
  \label{eq:cc}
\end{equation}
where $\varphi$ is a Boolean formula, $\{P_i\}_{i\in[M]}\subseteq \mathcal{P}_\sentence$, and $\forall i\in[M], q_i\in \nat$.
Consider the WFOMS problem on $\sentence\land\Upsilon$ over the domain $\domain$ under $(\weight, \negweight)$.
The overall structure of the sampling algorithm for $\sentence\land\Upsilon$ remains unchanged from Algorithm~\ref{alg:fo2_wms}.
The algorithm still begins with obtaining the general sentence $\generalsentence$ from $\sentence$.
Then the formula $\generalsentence\land\Upsilon$ is fed into $\code{OneTypeSampler}$ and $\code{DRSampler}$ successively to sample the $\mathcal{P}_\sentence$-structure.
Please refer to Appendix~\ref{sub:wmscc} for the detailed algorithm. 
We only describe its modifications to the original one below.

The algorithm of $\code{OneTypeSampler}$ for $\generalsentence\land\Upsilon$ is similar to Algorithm~\ref{alg:sampling_cell_types}, with the main difference being that the computation of \wfomc{} problems, specifically $\symwfomc(\generalsentence, \domain,\weight,\negweight)$ and $\mathcal{W}_\vecn$, now include cardinality constraints $\Upsilon$ in their input sentences.
To account for this change, we slightly modify the definition of $\mathcal{W}_\vecn$ in \eqref{eq:weight_cell_configuration} by taking $\sentence_T\land\bigwedge_{i\in[\widetilde{n}]}\widetilde{\eta}_i\land\Upsilon$ as input, and denote the new term by $\mathcal{W}_{\vecn,\Upsilon}$.
According to Proposition~5 in \cite{kuzelkaWeightedFirstorderModel2021}, the addition of cardinality constraints to a liftable sentence does not affect the liftability of the resulting formula (in terms of \wfomc{} problems). Therefore, the computation of $\mathcal{W}_{\vecn,\Upsilon}$ remains polynomial-time in the domain size, as the original sentence in $\mathcal{W}_\vecn$ was already proven to be liftable.

For the sampling problem conditional on the sampled cell types $\eta_i$, the domain recursive property still holds as we will show in turn.
Given a set $L$ of ground literals and a predicate $P$, let $N(P, L)$ denote the number of positive ground literals for $P$ in $L$.
Given a valid substructure $\structure_t$ of the element $e_t$, denote the 1-type of $e_t$ by $\tau_t$ as usual, let $q_i' = q_i - N(P_i, \structure_t) - N(P_i, \tau_t(e_t))$ for every $i\in[M]$, and define
\begin{equation}
  \Upsilon' = \varphi(|P_1| \bowtie q_1',\dots,|P_M|\bowtie q_M').
  \label{eq:reduced_cc}
\end{equation}
Let $\recursivesentence_{C} = \recursivesentence \land \Upsilon$ and $\recursivesentence_C' = \recursivesentence' \land \Upsilon'$, where $\recursivesentence$ and $\recursivesentence'$ are defined as \eqref{eq:recursive_sentence} and \eqref{eq:reduced_sentence} respectively.
% , and $\domain' = \domain \setminus \{e_t\}$.
Then the reduction from the WFOMS problem on $\recursivesentence$ to $\recursivesentence'$ is sound.
\begin{lemma}
  \label{lemma:ccmodular}
  If $\structure_t$ is valid w.r.t. the WFOMS of $(\bigwedge_{i\in[n]}\eta_i(e_i)\land\sentence_T\land \Upsilon,\domain,\weight, \negweight)$, i.e., $\recursivesentence_C$ is satisfiable, the reduction from the WFOMS of $(\recursivesentence_C, \domain, \weight, \negweight)$ to $(\recursivesentence_C', \domain', \weight, \negweight')$ is sound.
\end{lemma}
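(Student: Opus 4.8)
The plan is to piggyback on Lemma~\ref{lemma:modular}, whose bijective mapping $f(\structure') = \structure' \cup \structure_t \cup \tau_t(e_t)$ already realizes a sound reduction from $(\recursivesentence, \domain, \weight, \negweight)$ to $(\recursivesentence', \domain', \weight, \negweight)$. Since a cardinality constraint contributes no weight and merely restricts the admissible models, it suffices to show that this same $f$ carries models of $\recursivesentence_C' = \recursivesentence' \land \Upsilon'$ exactly onto models of $\recursivesentence_C = \recursivesentence \land \Upsilon$; the weight bookkeeping then descends verbatim from \eqref{eq:modular_wfomc} and \eqref{eq:consistent_weight}, restricted to the sub-populations satisfying $\Upsilon$ and $\Upsilon'$. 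Throughout we work, as in Lemma~\ref{lemma:modular}, at the level of $\mathcal{P}_\sentence$-structures, which is legitimate because $\{P_i\}_{i\in[M]}\subseteq\mathcal{P}_\sentence$.

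First I would establish the cardinality accounting. For any valid $\mathcal{P}_\sentence$-structure $\structure'$ over $\domain'$ and any predicate $P_i$, the positive ground $P_i$-literals of $f(\structure')$ split into those lying entirely within $\domain'$ (counted by $\structure'$), the binary literals crossing between $e_t$ and $\domain'$ (counted by $\structure_t$), and the unary/reflexive literals at $e_t$ (counted by $\tau_t(e_t)$). As these three sets of ground atoms are pairwise disjoint and exhaustive, we obtain
\[
  N(P_i, f(\structure')) = N(P_i, \structure') + N(P_i, \structure_t) + N(P_i, \tau_t(e_t)).
\]

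Next I would translate each atomic cardinality constraint. Because $q_i' = q_i - N(P_i, \structure_t) - N(P_i, \tau_t(e_t))$ and every comparison operator satisfies $a + c \bowtie b \iff a \bowtie b - c$, the accounting above gives $f(\structure') \models (|P_i| \bowtie q_i)$ iff $\structure' \models (|P_i| \bowtie q_i')$ for every $i\in[M]$. Since $\Upsilon$ and $\Upsilon'$ apply the very same Boolean combination $\varphi$ to these correspondingly-satisfied atoms, $f(\structure') \models \Upsilon$ iff $\structure' \models \Upsilon'$. Combined with the bijection of Lemma~\ref{lemma:modular} between the models of $\recursivesentence'$ and of $\recursivesentence$, this shows that $f$ restricts to a bijection between $\fomodels{\recursivesentence_C'}{\domain'}$ and $\fomodels{\recursivesentence_C}{\domain}$; the hypothesis that $\recursivesentence_C$ is satisfiable guarantees this restriction is over a nonempty domain of discourse.

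Finally, for soundness, the weight identity $\typeweight{f(\structure')} = \typeweight{\structure'} \cdot \typeweight{\structure_t} \cdot \typeweight{\tau_t}$ is untouched by $\Upsilon$, so summing it over the restricted model sets reproduces the analogue of \eqref{eq:modular_wfomc}, namely $\symwfomc(\recursivesentence_C, \domain, \weight, \negweight) = \typeweight{\structure_t} \cdot \typeweight{\tau_t} \cdot \symwfomc(\recursivesentence_C', \domain', \weight, \negweight)$, whence the conditional-probability computation of \eqref{eq:consistent_weight} carries through unchanged. I expect the only delicate step to be the cardinality bookkeeping: one must verify that every positive $P_i$-literal touching $e_t$ is counted exactly once across $\structure_t$ and $\tau_t(e_t)$, with neither overlap nor omission, so that the shift $q_i \mapsto q_i'$ precisely absorbs the contribution removed together with $e_t$.
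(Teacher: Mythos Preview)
Your proposal is correct and follows essentially the same approach as the paper: reuse the bijection $f(\structure')=\structure'\cup\structure_t\cup\tau_t(e_t)$ from Lemma~\ref{lemma:modular}, verify that it respects the correspondence $\Upsilon\leftrightarrow\Upsilon'$, and then inherit \eqref{eq:modular_wfomc} and \eqref{eq:consistent_weight} verbatim. Your write-up is in fact more explicit than the paper's, which asserts the cardinality step in a single sentence (``It follows that $\structure'=\structure\setminus\structure_t\setminus\{\tau_t(e_t)\}$ satisfies $\recursivesentence'$ and $\Upsilon'$'') without spelling out the additive decomposition $N(P_i,f(\structure'))=N(P_i,\structure')+N(P_i,\structure_t)+N(P_i,\tau_t(e_t))$ or the biconditional across comparison operators that you supply.
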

\begin{proof}[Proof of Lemma~\ref{lemma:ccmodular}]
  The proof follows the same argument for Lemma~\ref{lemma:modular}.
  The only statement that needs to be argued again is the bijection of the mapping function $f(\structure') = \structure' \cup \structure_t \cup \tau_t(e_t)$.
  Let $\mathfrak{S}^C_1$ and $\mathfrak{S}^C_2$ be the WFOMS of $(\recursivesentence_C, \domain, \weight, \negweight)$ and $(\recursivesentence'_C, \domain', \weight, \negweight)$.
  For any valid $\structure$ of $\mathfrak{S}^C_1$, $\structure$ must satisfy both $\recursivesentence$ and $\Upsilon$.
  It follows that $\structure' = \structure\setminus\structure_t\setminus\{\tau_t(e_t)\}$ satisfies $\recursivesentence'$ and $\Upsilon'$, meaning that $\structure'$ is also valid w.r.t. $\mathfrak{S}_2^C$.
  This establishes the bijection of $f$.
  The remainder of the proof, including the consistency of sampling probability, proceeds exactly the same as Lemma~\ref{lemma:modular}, specifically follows \eqref{eq:modular_wfomc} and \eqref{eq:consistent_weight}.
\end{proof}
The core structure of $\code{DRSampler}$ remains the same as the sound reduction still holds.
However, the recursive call is now made with the reduced sentence $\sentence_T\land\Upsilon'$.
The other slight modifications include:
\begin{itemize}
  \item $\mathcal{W}_{\vecn^{\structure_t}}$ in \eqref{eq:structure_weight} has been replaced with $\mathcal{W}_{\vecn^{\structure_t}, \Upsilon'}$, and 
  \item the validity check for the sampled 2-table configuration in $\code{ExSat}$ now includes an additional check for the well-definedness of the reduced cardinality constraints $\Upsilon'$, returning $\code{False}$ if any $q_i' \notin \nat$ for $i\in[M]$.
\end{itemize}

As discussed above, the extension of our sampling algorithm to handle cardinality constraints in \fotwo{} only slightly increases the complexity of the procedure. 
Furthermore, the computation of $\mathcal{W}_{\vecn,\Upsilon}$ remains polynomial-time in the domain size, meaning that the generalized algorithm is still lifted, and thus proving the liftability under sampling of \fotwo{} with cardinality constraints~\footnote{It is worth noting that the computational complexity of $\mathcal{W}_{\vecn,\Upsilon}$ is independent of the values $q_1,\dots,q_M$ in the cardinality constraints $\Upsilon$, so that the reduction on these constraints does not affect the liftability of the reduced $\mathcal{W}_{\vecn^{\structure_t},\Upsilon'}$.}.
\begin{theorem}
  Let $\sentence$ be an \fotwo{} sentence and $\Upsilon$ of the form \eqref{eq:cc}.
  Then $\sentence\land\Upsilon$ is domain-liftable under sampling.
  \label{th:ccliftability}
\end{theorem}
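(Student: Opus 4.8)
The plan is to mirror the proof of Theorem~\ref{th:main_result} verbatim, threading the cardinality constraint $\Upsilon$ through both sampling phases and checking that the two sources of polynomial-time complexity---the counting weights $\mathcal{W}_{\vecn}$ and the domain-recursive reduction---survive the addition of $\Upsilon$. First I would invoke Lemma~\ref{le:snf_sound} to reduce to the SNF case and build the general sentence $\generalsentence = \sentence_T \land \bigwedge_{i\in[n]}\beta_i(e_i)$, so that the target becomes the WFOMS on $\generalsentence\land\Upsilon$. Since $\Upsilon$ only constrains cardinalities of predicates already in $\mathcal{P}_\sentence$ and introduces no new predicates, $\mathcal{P}_\sentence$ remains a skeleton and the reduction from $\sentence\land\Upsilon$ to $\generalsentence\land\Upsilon$ is sound. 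This lets me keep the two-phase decomposition $\pro[\structure\mid\generalsentence\land\Upsilon]=\pro[\bigwedge_{i}\tau_i(e_i)\mid\generalsentence\land\Upsilon]\cdot\pro[\structure\mid\sentence_T\land\bigwedge_{i}\eta_i(e_i)\land\Upsilon]$ unchanged in form, and reduces the proof to handling each phase under the extra constraint.

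For the first phase (\code{OneTypeSampler}), I would replace every occurrence of the sampling weight $\mathcal{W}_{\vecn}$ with $\mathcal{W}_{\vecn,\Upsilon}$, defined by appending $\Upsilon$ to the conditioned sentence in \eqref{eq:weight_cell_configuration}. The only new obligation is that $\mathcal{W}_{\vecn,\Upsilon}$ is still computable in time polynomial in the domain size. Here I would lean on Proposition~5 of~\cite{kuzelkaWeightedFirstorderModel2021}: conjoining cardinality constraints to a sentence that is already liftable in the WFOMC sense preserves its liftability, and crucially the resulting complexity is independent of the numeric thresholds $q_i$. Since the conditioned sentence $\sentence_T\land\bigwedge_{i}\widetilde{\eta}_i$ was already shown liftable in the proof of Lemma~\ref{lemma:complexity_cell_type}, conjoining $\Upsilon$ keeps it liftable, and the enumerative cell-configuration sampling argument then goes through essentially word for word.

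For the second phase (\code{DRSampler}), the key structural fact is that the domain-recursive reduction remains sound, which is precisely Lemma~\ref{lemma:ccmodular}. Here I would carry the reduced constraint $\Upsilon'$ from \eqref{eq:reduced_cc}, obtained by subtracting from each threshold $q_i$ the positive $P_i$-literals already committed in the sampled substructure $\structure_t$ and in $\tau_t(e_t)$; the recursive call is then made on $\sentence_T\land\Upsilon'$ over $\domain'=\domain\setminus\{e_t\}$, with the sampling weight \eqref{eq:structure_weight} using $\mathcal{W}_{\vecn^{\structure_t},\Upsilon'}$ in place of $\mathcal{W}_{\vecn^{\structure_t}}$. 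Validity of the sampled $2$-table configurations is then enforced by augmenting \code{ExSat} with a check that rejects any configuration for which some $q_i'\notin\nat$, guaranteeing that only substructures consistent with $\Upsilon$ are ever generated.

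Finally I would reassemble the complexity argument: the recursion depth is at most $n$, each call enumerates polynomially many $2$-table configurations, and each configuration costs one evaluation of $\mathcal{W}_{\vecn',\Upsilon'}$, which is polynomial-time by the previous paragraph; combined with the soundness of all reductions this yields a lifted sampler for $\sentence\land\Upsilon$. The main obstacle I anticipate is precisely the complexity claim for $\mathcal{W}_{\vecn,\Upsilon}$: one must be certain that threading $\Upsilon$ through the conditional WFOMC does not reintroduce a dependence on the (potentially large) thresholds $q_i$. This is exactly the content of Proposition~5 of~\cite{kuzelkaWeightedFirstorderModel2021}, so the crux of the proof is correctly invoking that result rather than devising any fresh algorithmic machinery.
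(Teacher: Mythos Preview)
Your proposal is correct and takes essentially the same approach as the paper: thread $\Upsilon$ through both \code{OneTypeSampler} and \code{DRSampler}, replace $\mathcal{W}_{\vecn}$ by $\mathcal{W}_{\vecn,\Upsilon}$ and appeal to Proposition~5 of~\cite{kuzelkaWeightedFirstorderModel2021} for its polynomial-time computability, invoke Lemma~\ref{lemma:ccmodular} for soundness of the recursive step with the reduced constraints $\Upsilon'$, and augment \code{ExSat} with the well-definedness check $q_i'\in\nat$. The paper's own proof is simply ``follows from the discussion above,'' and your write-up is a faithful expansion of that discussion.
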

\begin{proof}
  The proof follows from the discussion above.
\end{proof}

% \lnote{I was thinking of whether to make it a separate section..}

\section{A Further Generalization to \sctwo{}}

With the lifted WMS for \fotwo{} with cardinality constraints, we can further extend our result to the case involving the counting quantifiers $\exists_{=k}$~\cite{gradelTwovariableLogicCounting1997}.
Here, we study the sentences of the form 
\begin{align*}
  \sentence&\land (\forall x\exists_{=k_1}y: \phi_1(x,y))\land\dots\land(\forall x\exists_{=k_{M'}}:\phi_{M'}(x,y))\\
  &\land (\exists_{=k_1'}x\forall y: \phi'_1(x,y))\land\dots\land(\exists_{=k_{M''}'}x\forall y:\phi'_{M''}(x,y)),
\end{align*}
where $\sentence$ is an \fotwo{} sentence and $\exists_{=k}$ is the counting quantifier that specifies the exact number of elements in the domain that satisfy a given formula.
For instance, a structure $\structure$ over a domain $\domain$ satisfies the sentence $\exists_{=k}x: \fotwoformula(x)$, if there are exactly $k$ distinct elements $t_1,\dots,t_k\in\domain$ such that $\structure\models\fotwoformula(t_i)$ for all $i\in[k]$.
We call this fragment \textit{two-variable logic with counting in SNF} \sctwo{}, as its extended conjunction to \fotwo{} sentences resembles SNF.
The presence of counting quantifiers significantly enhances the expressiveness of \sctwo{}, e.g., $k$-regular graphs can be encoded in \sctwo{}, as demonstrated in the introduction.
% \begin{example}
%   \label{ex:k_regular_graph}
%   The model of the sentence 
%   \begin{align*}
%     \sentence = &(\forall x\forall y: (E(x,y)\Rightarrow E(y,x))\land \neg E(x,x))\\
%     &\land (\forall x\exists_{=k} E(x,y))
%   \end{align*}
%   can be viewed as a $k$-regular graph.
%   A WMS of $\sentence$ over a domain of size $n$ under the weighting $\weight(E) = \negweight(E) = 1$ uniformly samples k-regular graphs with $n$ vertices.
%   We note that this problem is extensively studied in the combinatorical community and highly non-trivial, e.g., \cite{cooperSamplingRegularGraphs2007,gaoUniformGenerationRandom2015}.
% \end{example}

Recently, \citet{kuzelkaWeightedFirstorderModel2021} showed that the liftability of \fotwo{} can be generalized to \textit{the fragment of two-variable logic with counting} \ctwo{}, a superset of \sctwo{}, by reducing the \wfomc{} problem on \ctwo{} sentences to \fotwo{} sentences with cardinality constraints.
We demonstrate that this reduction can be also applied to the sampling problem and it is sound, when the fragment is restricted to be $\sctwo{}$.
\begin{lemma}
  For any WFOMS $\mathfrak{S} = (\Phi, \domain,\weight, \negweight)$ where $\Phi$ is a \sctwo{} sentence, there exists a WFOMS $\mathfrak{S}'=(\sentence'\land\Upsilon,\domain,\weight', \negweight')$, where $\sentence'$ is an \fotwo{} sentence, $\Upsilon$ denotes cardinality constraints of the form \eqref{eq:cc} and both $\sentence'$ and $\Upsilon$ are independent of $\domain$, such that the reduction from $\mathfrak{S}$ to $\mathfrak{S}'$ is sound.
  \label{le:ctworeduction}
\end{lemma}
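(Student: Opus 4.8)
The plan is to eliminate the counting quantifiers of $\Phi$ one at a time, each elimination being a domain-independent syntactic transformation that introduces fresh, weight-one predicates together with a single cardinality constraint, and to verify that each such step is sound in the sense of Definition~\ref{de:soundness}. Since soundness is transitive, composing these steps with the SNF transformation of Lemma~\ref{le:snf_sound} lands us at a WFOMS on an \fotwo{} sentence $\sentence'$ conjoined with cardinality constraints $\Upsilon$ of the form~\eqref{eq:cc}, both independent of $\domain$. Throughout, the mapping function is the projection $f(\mu') = \proj{\mu'}{\mathcal{P}_\Phi}$ onto the original vocabulary, and the content of the proof is to show that each fiber $f^{-1}(\mu)$ carries total weight exactly $\typeweight{\mu}$ while the overall \symwfomc{} is preserved, so that the probability identity of Definition~\ref{de:soundness} holds.

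For a constraint of the first form, $\exists_{=k}x\forall y: \extformula(x,y)$, I would introduce a fresh unary predicate $A$ with $\weight(A)=\negweight(A)=1$, append the \fotwo{} axiom $\forall x: \big(A(x)\Leftrightarrow \forall y: \extformula(x,y)\big)$ (which unfolds into one universal and one $\forall\exists$ conjunct, both in \fotwo{}), and replace the counting constraint by the cardinality constraint $|A|=k$. In every model $A$ is forced to hold exactly on the elements $a$ with $\forall y: \extformula(a,y)$, so $A$ is determined by the reduct to $\mathcal{P}_\Phi$; hence the projection $f$ is a weight-preserving bijection onto $\fomodels{\Phi}{\domain}$, and soundness is immediate, the fiber being a singleton of the same weight and $\symwfomc$ being unchanged.

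The main difficulty is the second form, $\forall x\exists_{=k}y: \extformula(x,y)$, whose ``exactly $k$ per row'' requirement is not a logical consequence of any global cardinality constraint. Here I would invoke the \wfomc{}-level reduction of~\citet{kuzelkaWeightedFirstorderModel2021}, which rewrites such a constraint using fresh predicates and a cardinality constraint, and then verify two properties that are not needed at the counting level but are essential for sampling. First, restricted to the \sctwo{} forms the reduction can be realized with \emph{non-negative} weights on the fresh predicates; this is exactly where the restriction to \sctwo{} rather than full \ctwo{} is used, since the generic \ctwo{} elimination of $\exists_{=k}$ relies on signed (inclusion--exclusion) weights, which would make the target probability in Definition~\ref{def:swfos} ill-defined. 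Second, I must check that these weights are calibrated so that, for every valid $\mu\in\fomodels{\Phi}{\domain}$, the auxiliary interpretations $\mu'$ with $f(\mu')=\mu$ have weights summing to $\typeweight{\mu}$. I expect this non-negative, fiber-weight-preserving realization to be the one genuinely delicate point.

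Finally I would assemble the pieces. Preservation of the denominator, $\symwfomc(\Phi,\domain,\weight,\negweight)=\symwfomc(\sentence'\land\Upsilon,\domain,\weight',\negweight')$, is inherited from the \wfomc{}-equivalence of the reduction, while the fiber computation above gives $\sum_{\mu': f(\mu')=\mu}\typeweight{\mu'}=\typeweight{\mu}$ for each $\mu$. Dividing the latter by the former yields precisely the identity required by Definition~\ref{de:soundness}. Transitivity of soundness then lets me chain the eliminations of all the counting quantifiers together with Lemma~\ref{le:snf_sound}, proving the claim; combined with Theorem~\ref{th:ccliftability} this would in turn deliver the liftability under sampling of \sctwo{}.
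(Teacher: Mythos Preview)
Your outline is structurally sound and, for the $\exists_{=k}x\forall y$ case, matches the paper's argument exactly (their Lemma~\ref{le:existsforallred}). The gap is in the $\forall x\exists_{=k}y$ case, where you defer to ``the \wfomc{}-level reduction of~\citet{kuzelkaWeightedFirstorderModel2021}'' and hope a non-negative, fiber-weight-preserving realization exists. This is precisely where the paper does the real work (Lemma~\ref{le:forallexistsred}), and the construction is not quite the one you anticipate.

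Concretely, the paper introduces $k$ fresh binary predicates $R_1^P,\dots,R_k^P$ (all weight~$1$), asserts that they are pairwise disjoint, that each satisfies $\forall x\exists y: R_i^P(x,y)$, that their union is $P$, and adds the single cardinality constraint $|P|=k\cdot|\domain|$. A short pigeonhole argument shows each $R_i^P$ is then forced to be functional, so the $R_i^P$'s form an \emph{ordered} $k$-partition of the $k$ witnesses of each $x$. The fiber over a model $\mu$ of the original sentence therefore has size $(k!)^{|\domain|}$, not $1$, and its total weight is $(k!)^{|\domain|}\cdot\typeweight{\mu}$, not $\typeweight{\mu}$. Your stated target---``each fiber $f^{-1}(\mu)$ carries total weight exactly $\typeweight{\mu}$ while the overall \symwfomc{} is preserved''---is not achievable here: with unit-weight fresh predicates the fiber cannot be a singleton for $k\ge 2$ (there is no domain-independent \fotwo{} way to canonically order the $k$ witnesses), and scaling the fresh weights to compensate would make them domain-dependent. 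What actually makes the reduction sound is that the blow-up factor $(k!)^{|\domain|}$ is \emph{uniform across all models} and hence cancels in the ratio defining the conditional probability. You should weaken your target accordingly: it suffices that the fiber weight equals $C\cdot\typeweight{\mu}$ for some constant $C$ independent of $\mu$ (but possibly depending on $\domain$), since Definition~\ref{de:soundness} only asks for equality of probabilities.
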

The proof follows a similar technique used in \cite{kuzelkaWeightedFirstorderModel2021}, and the details are deferred to Appendix~\ref{sub:sctworeduction}.
We note here that further generalizing this result to the general \ctwo{} sentences is infeasible, since the original reduction used in \cite{kuzelkaWeightedFirstorderModel2021} for \ctwo{} sentences introduced some negative weights on predicates.
However, it can be established that the domain-liftability under sampling of \ctwo{} can be demonstrated by directly applying our domain recursion sampling method without resorting to the reduction to cardinality constraints. 
For a more detailed discussion, please refer to Appendix~\ref{sub:ctwoliftable}.

Since \fotwo{} with cardinality constraints has been proved to be liftable under sampling, it is easy to prove the liftability under sampling of the \sctwo{} fragment.
\begin{theorem}
  The fragment of \sctwo{} is domain-liftable under sampling.
  \label{th:ctwoliftability}
\end{theorem}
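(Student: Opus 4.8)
The plan is to reduce the claim to results already established in the paper, using the soundness of the reduction guaranteed by Lemma~\ref{le:ctworeduction} together with the domain-liftability under sampling of \fotwo{} with cardinality constraints (Theorem~\ref{th:ccliftability}). The overall strategy mirrors the proof of Theorem~\ref{th:main_result}: we construct a lifted WMS for \sctwo{} by composing a sound reduction with an already-lifted sampler on the target fragment.

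First I would invoke Lemma~\ref{le:ctworeduction}: given any WFOMS $\mathfrak{S} = (\mln, \domain, \weight, \negweight)$ where $\mln$ is a \sctwo{} sentence, there exists a WFOMS $\mathfrak{S}' = (\sentence'\land\Upsilon, \domain, \weight', \negweight')$ with $\sentence'$ an \fotwo{} sentence, $\Upsilon$ cardinality constraints of the form~\eqref{eq:cc}, and both $\sentence'$ and $\Upsilon$ independent of $\domain$, such that the reduction from $\mathfrak{S}$ to $\mathfrak{S}'$ is sound. Crucially, because $\sentence'$ and $\Upsilon$ do not depend on $\domain$, the reduction is a fixed transformation from the input sentence and can be computed once, independently of the domain size.

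Next I would apply Theorem~\ref{th:ccliftability} to the reduced problem $\mathfrak{S}'$: since $\sentence'$ is an \fotwo{} sentence and $\Upsilon$ is of the form~\eqref{eq:cc}, the sentence $\sentence'\land\Upsilon$ is domain-liftable under sampling, so there is a lifted WMS $G'$ that samples models of $\mathfrak{S}'$ in time polynomial in $|\domain|$. By the soundness of the reduction (Definition~\ref{de:soundness}), there is a polynomial-time deterministic mapping $f$ from $\fomodels{\sentence'\land\Upsilon}{\domain}$ to $\fomodels{\mln}{\domain}$ that transfers the sampling distribution correctly; composing gives the sampler $G(\mln, \domain, \weight, \negweight) = f(G'(\sentence'\land\Upsilon, \domain, \weight', \negweight'))$ for the original \sctwo{} problem. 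The output distribution of $G$ matches the required WFOMS probability on $\mln$ precisely because soundness guarantees $\pro[\mu\mid\mln] = \sum_{\mu':\, f(\mu')=\mu}\pro[\mu'\mid\sentence'\land\Upsilon]$, and the running time of $G$ is polynomial since both $G'$ and $f$ run in polynomial time in the domain size while the reduction itself is domain-independent.

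The substantive work has all been discharged by the earlier results, so the only thing left is to verify that the composition preserves both correctness and polynomial-time complexity, which follows from the transitivity of soundness and the fact that the reduction of Lemma~\ref{le:ctworeduction} produces a sentence whose size depends only on $\mln$ and not on $\domain$. I do not expect a genuine obstacle here, as this theorem is a direct corollary; the real technical content lies in Lemma~\ref{le:ctworeduction} (deferred to the appendix) and in Theorem~\ref{th:ccliftability}, both of which I am entitled to assume. The proof therefore reduces to a one-line composition argument.
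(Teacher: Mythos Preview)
Your proposal is correct and takes essentially the same approach as the paper: the paper's proof is the one-liner ``The proof follows from Lemma~\ref{le:ctworeduction} and Theorem~\ref{th:ccliftability},'' and your argument simply unpacks this composition (sound reduction followed by the lifted sampler for \fotwo{} with cardinality constraints) in detail.
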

\begin{proof}
  The proof follows from Lemma~\ref{le:ctworeduction} and Theorem~\ref{th:ccliftability}.
\end{proof}
Moreover, one can further introduce additional cardinality constraints into \sctwo{} without degrading its liftability under sampling.
\begin{corollary}
  Let $\Phi$ be a \sctwo{} sentence and $\Upsilon$ of the form \eqref{eq:cc}. Then $\Phi\land\Upsilon$ is domain-liftable under sampling.
  \label{co:sctwoccliftability}
\end{corollary}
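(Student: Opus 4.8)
The plan is to combine the \sctwo{}-to-\fotwo{} reduction of Lemma~\ref{le:ctworeduction} with the closure of cardinality constraints under Boolean combination, and then invoke Theorem~\ref{th:ccliftability}. First I would apply Lemma~\ref{le:ctworeduction} to the \sctwo{} sentence $\Phi$, obtaining an \fotwo{} sentence $\sentence'$, cardinality constraints $\Upsilon_0$ of the form~\eqref{eq:cc} (both independent of $\domain$), and weights $(\weight', \negweight')$ such that the reduction from $(\Phi, \domain, \weight, \negweight)$ to $(\sentence' \land \Upsilon_0, \domain, \weight', \negweight')$ is sound, via some mapping function $f$. This $f$ is the projection onto the skeleton $\mathcal{P}_\Phi$, so it fixes the interpretation of every original predicate of $\Phi$.

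Next I would argue that the \emph{same} $f$ yields a sound reduction from $(\Phi \land \Upsilon, \domain, \weight, \negweight)$ to $(\sentence' \land \Upsilon_0 \land \Upsilon, \domain, \weight', \negweight')$. The crucial observation is that $\Upsilon$ constrains only cardinalities of predicates in $\mathcal{P}_\Phi$; since $f$ preserves the interpretation of these predicates, for every model $\mu'$ of $\sentence' \land \Upsilon_0$ we have $\mu' \models \Upsilon$ if and only if $f(\mu') \models \Upsilon$. Consequently $f$ restricts to a map from $\fomodels{\sentence' \land \Upsilon_0 \land \Upsilon}{\domain}$ onto $\fomodels{\Phi \land \Upsilon}{\domain}$, and the soundness equation of Definition~\ref{de:soundness} for $\Phi \land \Upsilon$ follows by restricting the soundness equation already known for $\Phi$ to exactly those $\mu'$ satisfying $\Upsilon$, then renormalizing. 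Because $f$ is weight-preserving, the numerators match termwise, and the two denominators $\symwfomc(\Phi \land \Upsilon, \domain, \weight, \negweight)$ and $\symwfomc(\sentence' \land \Upsilon_0 \land \Upsilon, \domain, \weight', \negweight')$ coincide.

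Then I would note that $\Upsilon_0 \land \Upsilon$ is itself a Boolean combination of atomic cardinality constraints, hence again of the form~\eqref{eq:cc}. Therefore $\sentence' \land (\Upsilon_0 \land \Upsilon)$ is an \fotwo{} sentence equipped with cardinality constraints, and Theorem~\ref{th:ccliftability} guarantees that it is domain-liftable under sampling. Transferring this liftability back along the sound reduction---soundness is transitive, and a lifted sampler for $\sentence' \land \Upsilon_0 \land \Upsilon$ composes with the polynomial-time map $f$ to give a lifted sampler for $\Phi \land \Upsilon$---establishes the claim.

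The main obstacle is the verification in the second step that the mapping function $f$ of Lemma~\ref{le:ctworeduction} preserves satisfaction of the externally imposed $\Upsilon$. This rests on the fact that the \sctwo{} reduction only introduces fresh auxiliary predicates (used to encode the counting quantifiers $\exists_{=k}$) while leaving the original vocabulary $\mathcal{P}_\Phi$ intact inside the skeleton. One must confirm, by inspecting the reduction of Appendix~\ref{sub:sctworeduction}, that the predicates $P_1, \dots, P_M$ appearing in $\Upsilon$ indeed lie in $\mathcal{P}_\Phi$ and are fixed by $f$, so that the cardinality constraint can be pulled back verbatim without interacting with the auxiliary machinery.
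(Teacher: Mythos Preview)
Your approach is correct and is precisely the argument the paper leaves implicit: reduce $\Phi$ to $\sentence'\land\Upsilon_0$ via Lemma~\ref{le:ctworeduction}, observe that the external constraint $\Upsilon$ passes through the reduction unchanged because it mentions only predicates in $\mathcal{P}_\Phi$, and then invoke Theorem~\ref{th:ccliftability} on $\sentence'\land(\Upsilon_0\land\Upsilon)$.

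One small imprecision worth tightening: you assert that the two denominators $\symwfomc(\Phi\land\Upsilon,\domain,\weight,\negweight)$ and $\symwfomc(\sentence'\land\Upsilon_0\land\Upsilon,\domain,\weight',\negweight')$ \emph{coincide}. They do not in general; inspecting the proof of Lemma~\ref{le:forallexistsred}, each model $\mu$ of the source problem has exactly $(k!)^{|\domain|}$ preimages under $f$, each of the same weight as $\mu$, so the two WFOMCs differ by that constant factor. What you actually need---and what your renormalization remark already hints at---is that the \emph{ratio} $\symwfomc(\sentence'\land\Upsilon_0,\domain,\weight',\negweight')/\symwfomc(\Phi,\domain,\weight,\negweight)$ is the same constant as $\symwfomc(\sentence'\land\Upsilon_0\land\Upsilon,\domain,\weight',\negweight')/\symwfomc(\Phi\land\Upsilon,\domain,\weight,\negweight)$. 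This follows because the fibers of $f$ over $\fomodels{\Phi\land\Upsilon}{\domain}$ are exactly the fibers over $\fomodels{\Phi}{\domain}$ restricted to those $\mu$ satisfying $\Upsilon$, and fiber sizes and weights are uniform. With that correction the soundness equation goes through and the rest of your argument stands.
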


\section{Experimental Results}
\label{sec:exp}

We conducted several experiments to evaluate the performance and correctness of our sampling algorithms.
% In this section, we present some of the preliminary results.
All algorithms were implemented in Python and  the experiments were performed on a computer with an 8-core Intel i7 3.60GHz processor and 32 GB of RAM~\footnote{The code can be found in \url{https://github.com/lucienwang1009/lifted_sampling_fo2}}.

Many sampling problems can be expressed as WFOMS problems.
Here we consider two typical ones.
\begin{itemize}
  \item \textbf{Sampling combinatorial structures}: the uniform generation of some combinatorial structures can be directly reduced to a WFOMS, e.g., the uniform generation of \textit{graphs with no isolated vertices} and \textit{$k$-regular graphs} in Examples~\ref{ex:non-isolated_graph} and the introduction.
  We added four more combinatorial sampling problems to these two for evaluation: \textit{functions}, \textit{functions w/o fix-points} (i.e., the functions $f$ satisfying $f(x) \neq x$), \textit{permutations} and \textit{permutations w/o fix-points}.
  The details of these problems are described in Appendix~\ref{sub:expsettings}.
  \item \textbf{Sampling from MLNs}: our algorithms can be also applied to sample possible worlds from MLNs.
  % An MLN is a finite set of tuples $(\weight, \formula)$, where $\weight$ is either a real number or $\infty$, and $\formula$ is a formula in first-order logic.
  An MLN defines a distribution over structures (i.e., possible worlds in SRL literature), and its respective sampling problem is to randomly generate possible worlds according to this distribution.
  There is a standard reduction from the sampling problem of an MLN to a WFOMS problem (see Append~\ref{sub:expsettings} and also \cite{wangDomainLiftedSamplingUniversal2022}).
  We used two MLNs in our experiments: 1) A variant of the classic \textit{friends-smokers} MLN with the constraint that every person has at least one friend:
    \begin{align*}
      \{(\infty, \neg fr(x,x)), (\infty, fr(x,y)\Rightarrow fr(y,x)), (0, sm(x)),\\
      (0.2, fr(x,y)\land sm(x)\Rightarrow sm(y)), (\infty, \exists y: fr(x,y))\}.
    \end{align*}
    2) The \textit{employment} MLN used in \cite{vandenbroeck2014Proc.FourteenthInt.Conf.Princ.Knowl.Represent.Reason.}:
    \begin{equation*}
      \{(1.3, \exists y: workfor(x,y)\lor boss(x))\},
    \end{equation*}
    which states that with high probability, every person either is employed by a boss or is a boss.
  The details about the reduction from sampling from MLNs to WFOMS and the corresponding WFOMS problems of these two MLNs can be found in Appendix~\ref{sub:expsettings}.
\end{itemize}

\subsection{Correctness}

We first examine the correctness of our implementation on the uniform generation of combinatorial structures over small domains, where exact sampling is feasible via enumeration-based techniques; 
we choose the domain size of $5$ for evaluation.
To serve as a benchmark, we implemented a simple ideal uniform sampler, denoted by IS, by enumerating all the models and then drawing samples uniformly from these models.
For each combinatorial structure encoded into an \fotwo{} sentence $\sentence$, a total of $100 \times |\fomodels{\sentence}{\domain}|$ models were generated from both IS and our WMS.
Figure~\ref{fig:uniformity} depicts the model distribution produced by these two algorithms---the horizontal axis represents models numbered lexicographically, while the vertical axis represents the generated frequencies of models.
The figure suggests that the distribution generated by our WMS is indistinguishable from that of IS.
Furthermore, a statistical test on the distributions produced by WMS was performed, and no statistically significant difference from the uniform distribution was found.
The details of this test can be found in Appendix~\ref{sub:expresults}.
% \lnote{The Jensen-Shannon distance is token from Unigen's paper. I feel like it's more easy to understand than DKW inequality.}
% In particular, the Jensen-Shannon distances between the distributions from IS and WMS are $0.047$, $0.049$, $0.050$, $0.045$ and $0.051$ respectively.

\begin{figure}
  \DoubleColumn
  \centerline{\includegraphics[width=.49\textwidth]{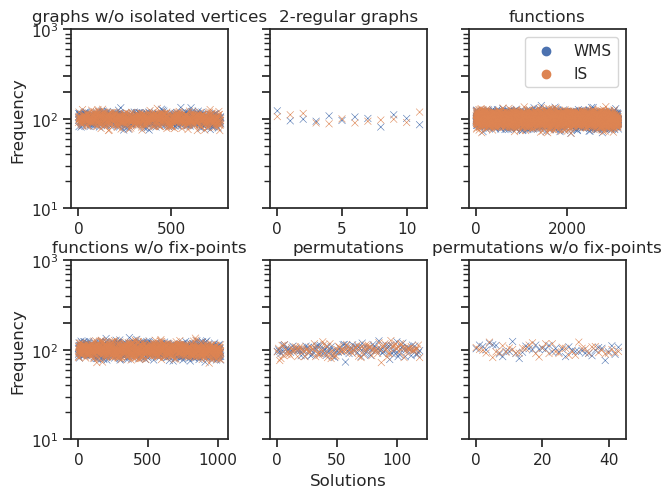}}
  \DoubleColumnEnd
  \SingleColumn
  \centerline{\includegraphics[width=.69\textwidth]{figs/uniformity.png}}
  \SingleColumnEnd
  \caption{Uniformity comparison between an ideal sampler (IS) and our WMS.}
  \label{fig:uniformity}
\end{figure}

\begin{figure}
  \begin{subfigure}[b]{0.49\textwidth}
      \centerline{
        \includegraphics[width=\textwidth]{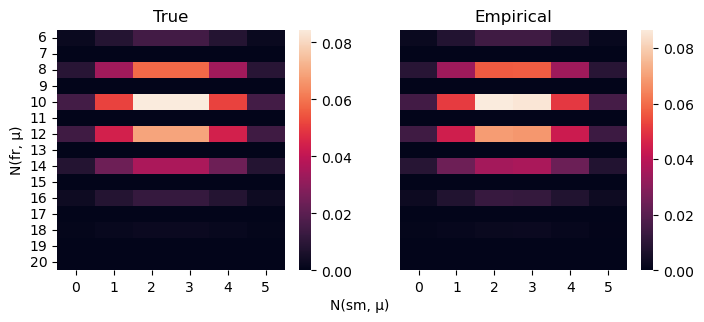}
      }
      \caption{friends-smokers}
      \label{fig:fr-sm}
  \end{subfigure}
  \begin{subfigure}[b]{0.49\textwidth}
      \centerline{
        \includegraphics[width=\textwidth]{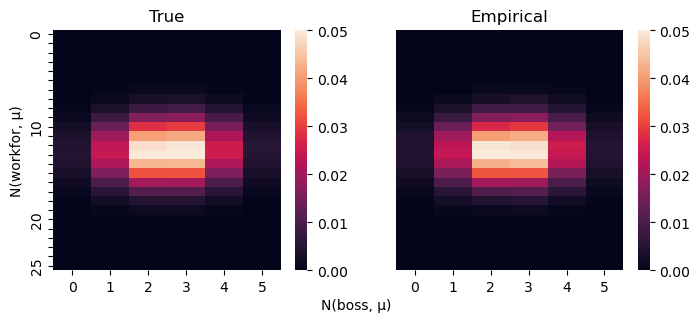}
      }
      \caption{employment}
      \label{fig:employ}
  \end{subfigure}
  \caption{Conformity testing for the count distribution of MLNs.}
  \label{fig:mlnconformity}
\end{figure}

For sampling problems from MLNs, enumerating all the models is infeasible even for a domain of size $5$, e.g., there are $2^{2^5 + 5} = 2^{37}$ models in the employment MLN.
That is why we test the \textit{count distribution} of vocabulary for these two MLNs.
Instead of specifying the probability of each model, the count distribution only tells us how probably a certain number of predicates are interpreted to be true in the models.
An advantage of testing count distributions is that they can be efficiently computed for our MLNs.
Please refer to~\cite{kuzelkaWeightedFirstorderModel2021} for more details about count distributions.
We also note that the conformity of count distribution is a necessary condition for the correctness of algorithms.
We keep the domain size to be $5$ and sampled $10^5$ models from friends-smokers and employment MLNs respectively.
The empirical distributions of count-statistics, along with the true count distributions, are shown in Figure~\ref{fig:mlnconformity}.
% Recall that the notation $N(P,\mu)$ indicates the number of positive ground literals for the predicate $P$ in the model $\mu$.
It is easy to check the conformity of the empirical distribution to the true one from the figure.
% We also computed the Jensen-Shannon distances between the empirical and true distributions for these two MLNs, resulting in $0.008$ and $0.013$ respectively.
The statistical test was also performed on the count distribution, and the results confirm the conclusion drawn from the figure (also see Appendix~\ref{sub:expresults}).

\subsection{Performance}

To evaluate the performance, we compared our weighted model samplers with Unigen~\cite{chakrabortyScalableNearlyUniform2013,soosTintedDetachedLazy2020}, the state-of-the-art approximate sampler for Boolean formulas.
A WFOMS problem can be reduced to a sampling problem of the Boolean formula by grounding the input sentence over the given domain.
Since Unigen only works for uniform sampling, we employed the technique in~\cite{chakrabortyWeightedUnweightedModel2015} to encode the weighting function in the WFOMS problem into a Boolean formula.

\begin{figure}[!tb]
   \centering
   \includegraphics[width=.95\textwidth]{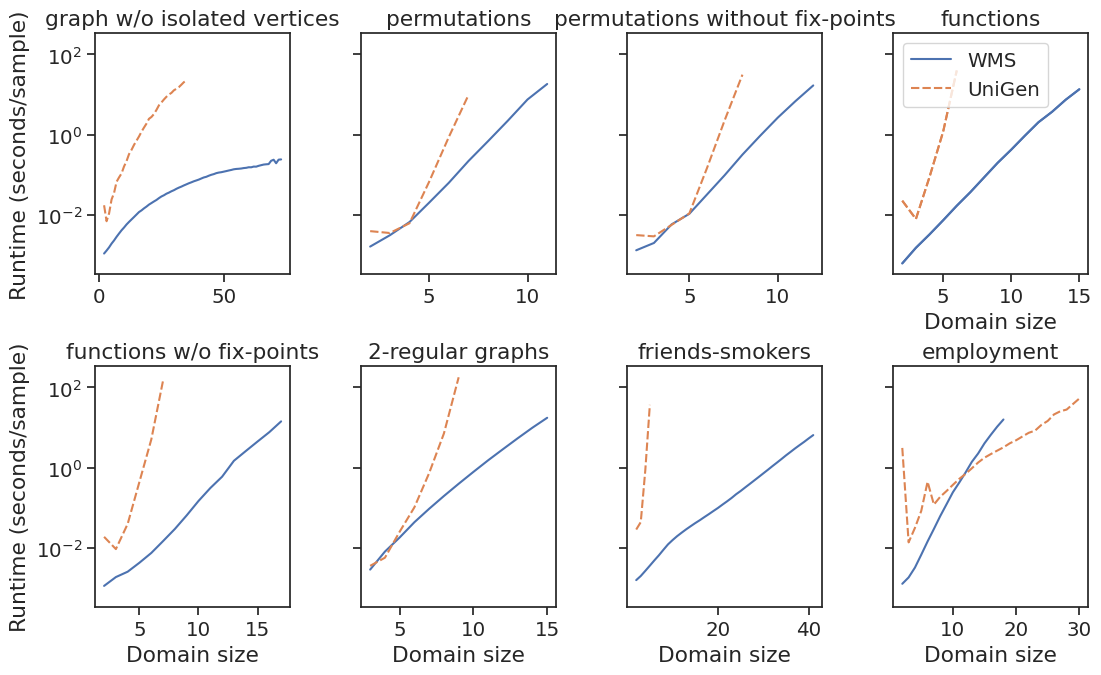}
   \caption{Performance of WMS versus UniGen.}
   \label{fig:performance}
\end{figure}

For each sampling problem, we randomly generated $1000$ models by our WMS and Unigen respectively and computed the average sampling time of one model.
The performance comparison is shown in Figure~\ref{fig:performance}.
In most cases, our approach is much faster than UniGen.
The exception in the employment MLN, where UniGen performed better than WMS, is likely due to the simplicity of this specific instance for its underlying SAT solver.
This coincides with the theoretical result that our WMS is polynomial-time in the domain size, while UniGen usually needs amounts of expensive SAT calls on the grounding formulas.

\section{Conclusion and Future Work}

In this paper, we prove the domain-liftability under sampling of \fotwo{} by presenting a novel and efficient approach to its symmetric weighted first-order model sampling problems.
The result is further extended to the fragment of \sctwo{} with the presence of counting constraints.
The widespread applicability of WFOMS renders the proposed approach a promising candidate to serve as a universal paradigm for a plethora of sampling problems.

A potential avenue for further research is to expand the methodology presented in this paper to encompass more expressive first-order languages. 
Specifically, the utilization of the domain recursion scheme employed in this study could be extended beyond the confines of the \fotwo{} and \sctwo{}, as its analogous counterpart in WFOMC has been demonstrated to be effective in proving the domain-liftability of the fragments $\mathbf{S}^2\fotwo{}$ and $\mathbf{S}^2\mathbf{RU}$~\cite{kazemiDomainRecursionLifted2017}.

In addition to extending the input logic, other potential directions for future research include incorporating elementary axioms, such as tree axiom~\cite{vanbremenLiftedInferenceTree2021} and linear order axiom~\cite{toth2022ArXivPrepr.ArXiv221101164}, as well as more general weighting functions that involve negative weights. 
However, it is important to note that these extensions would likely require a more advanced and nuanced approach than the one proposed in this paper, and may present significant challenges.

Finally, the lower complexity bound of WFOMS is also an interesting open problem.
A direct implication from the infeasibility of WFOMC in ~\cite{jaegerLowerComplexityBounds2015} suggests that there is unlikely for an (even approximate) lifted WMS to exist for full first-order logic. 
However, the establishment of a tighter lower bound for fragments of FO, such as $\mathbf{FO}^3$, remains an unexplored and challenging area that merits further investigation.

\section*{Acknowledgement}

The authors would like to thank the anonymous reviewers for their helpful comments. Yuanhong Wang and Juhua Pu are supported by the National Key R\&D Program of China (2021YFB2104800) and the National Science Foundation of China (62177002). Ond\v{r}ej Ku\v{z}elka's work is supported by the Czech Science Foundation project 20-19104Y and partially also 23-07299S (most of the work was done before the start of the latter project)

\clearpage

\appendix

\section{Appendix}

\subsection{Scott Normal Forms}
\label{sub:snf}

We briefly describe the transformation of \fotwo{} formulas to SNF and prove the soundness of its corresponding reduction on the WFOMS problems.
The process is well-known, so we only sketch the related details.

Let $\sentence$ be a sentence of \fotwo{}.
To put it into SNF, consider a subformula $\varphi(x) = Qy: \phi(x, y)$, where $Q\in\{\forall, \exists\}$ and $\phi$ is quantifier-free.
Let $A_\varphi$ be a fresh unary predicate\footnote{If $\varphi(x)$ has no free variables, e.g., $\exists x: \phi(x)$, the predicate $A_\varphi$ is nullary.} and consider the sentence
\begin{equation*}
  \forall x: (A_\varphi(x) \Leftrightarrow (Qy: \phi(x,y)))
\end{equation*}
which states that $\varphi(x)$ is equivalent to $A_\varphi(x)$.
Let $Q'$ denote the dual of $Q$, i.e., $Q' = \{\forall,\exists\}\setminus \{Q\}$, this sentence can be seen equivalent to
\begin{align*}
  \sentence' := &\forall x Qy: (A_\varphi(x) \Rightarrow \phi(x, y)) \\
  &\land \forall x Q'y: (\phi(x,y) \Rightarrow A_\varphi(x)).
\end{align*}

Let 
\begin{equation*}
  \sentence'' = \sentence' \land \sentence[\varphi(x)/A_\varphi(x)],
\end{equation*} where $\sentence[\varphi(x)/A_\varphi(x)]$ is obtained from $\sentence$ by replacing $\varphi(x)$ with $A_\varphi(x)$.
For any domain $\domain$, every model of $\sentence''$ over $\domain$ can be mapped to a unique model of $\sentence$ over $\domain$.
The bijective mapping function is simply the projection $\proj{\cdot}{\mathcal{P}_\sentence}$.
Let both the positive and negative weights of $A_\varphi$ be $1$ and denote the new weighting functions as $\weight'$ and $\negweight'$.
It is clear that the reduction from $(\sentence, \domain, \weight, \negweight)$ to $(\sentence'', \domain, \weight', \negweight')$ is sound.
Repeat this process from the atomic level and work upwards until the sentence is in SNF.
The whole reduction remains sound due to the transitivity of soundness.

\subsection{A Sound Reduction from \sctwo{} to \fotwo{} with Cardinality Constraints}
\label{sub:sctworeduction}

In this section, we show the sound reduction from a WFOMS problem on \sctwo{} sentence to a WFOMS problem on \fotwo{} sentence with cardinality constraints.

We first need the following two lemmas.
\begin{lemma}
  Let $\sentence$ be a first-order logic sentence, and let $\domain$ be a domain.
  Let $\Phi$ be a first-order sentence with cardinality constraints, defined as follows:
  \begin{align*}
    \Pi := &(|P|=k\cdot |\domain|)\\
    &\land (\forall x\forall y: P(x,y)\Leftrightarrow (R_1^P(x,y)\lor \dots\lor R_k^P(x,y)))\\
    &\land \bigwedge_{i\in[k]}(\forall x\exists y: R_i^P(x,y))\\
    &\land \bigwedge_{i,j\in[k]: i\neq j} (\forall x\forall y: \neg R_i^P(x,y)\lor \neg R_j^P(x,y)),
  \end{align*}
  where $R_i^P$ are auxiliary predicates not in $\mathcal{P}_\sentence$ with weight $\weight(R_i^P) = \negweight(R_i^P) = 1$.
  Then the reduction from the WFOMS $(\sentence\land \forall x\exists_{=k} y: P(x,y), \domain, \weight, \negweight)$ to $(\sentence\land\Pi, \domain, \weight, \negweight)$ is sound.
  \label{le:forallexistsred}
\end{lemma}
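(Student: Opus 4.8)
The plan is to verify the two requirements of Definition~\ref{de:soundness} directly, taking the mapping function $f$ to be the projection that forgets the auxiliary predicates $R_1^P,\dots,R_k^P$; that is, for a model $\mu'$ of $\sentence\land\Pi$ over $\domain$ I set $f(\mu')=\proj{\mu'}{\mathcal{P}}$, where $\mathcal{P}=\mathcal{P}_\sentence\cup\{P\}$ is the vocabulary of $\sentence\land\forall x\exists_{=k}y:P(x,y)$. This $f$ is manifestly deterministic and computable in polynomial time. Note this is \emph{not} the bijective skeleton case, since $P$ does not determine the $R_i^P$, so I must track the fibre multiplicity.

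The heart of the argument is a counting observation showing that the constraints in $\Pi$ force each $R_i^P$ to behave like a total function with values distinct across $i$. The pairwise-disjointness clauses $\forall x\forall y:\neg R_i^P(x,y)\lor\neg R_j^P(x,y)$ together with the equivalence $P(x,y)\Leftrightarrow\bigvee_i R_i^P(x,y)$ give $|P|=\sum_{i\in[k]}|R_i^P|$ in every model, since each true $P$-atom is witnessed by exactly one $R_i^P$. The clauses $\forall x\exists y:R_i^P(x,y)$ give $|R_i^P|\ge|\domain|$ for each $i$, so $\sum_i|R_i^P|\ge k|\domain|$; the cardinality constraint $|P|=k|\domain|$ then forces equality, whence $|R_i^P|=|\domain|$ for every $i$, and combined with $\forall x\exists y:R_i^P(x,y)$ this means each $R_i^P$ assigns exactly one $y$ to each $x$. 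Disjointness forces these $k$ chosen values to be pairwise distinct for fixed $x$, so $\{y:P(x,y)\}$ has exactly $k$ elements. Hence $\proj{\mu'}{\mathcal{P}}\models\forall x\exists_{=k}y:P(x,y)$; and since $\mu'\models\sentence$ with $\mathcal{P}_\sentence\subseteq\mathcal{P}$, also $\proj{\mu'}{\mathcal{P}}\models\sentence$. Thus $f$ maps into $\fomodels{\sentence\land\forall x\exists_{=k}y:P(x,y)}{\domain}$.

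Conversely, given any $\mu\models\sentence\land\forall x\exists_{=k}y:P(x,y)$, the set $\{y:P(x,y)\}$ has exactly $k$ elements for each $x$, and extending $\mu$ to a model of $\sentence\land\Pi$ amounts to distributing these $k$ elements among $R_1^P,\dots,R_k^P$, one per predicate --- an arbitrary ordering, giving $k!$ choices for each $x$ independently, hence exactly $(k!)^{|\domain|}$ preimages (so $f$ is surjective). Because $\weight(R_i^P)=\negweight(R_i^P)=1$, every preimage $\mu'$ satisfies $\typeweight{\mu'}=\typeweight{\mu}$, so $\sum_{f(\mu')=\mu}\typeweight{\mu'}=(k!)^{|\domain|}\typeweight{\mu}$; summing over all $\mu$ gives $\symwfomc(\sentence\land\Pi,\domain,\weight,\negweight)=(k!)^{|\domain|}\,\symwfomc(\sentence\land\forall x\exists_{=k}y:P(x,y),\domain,\weight,\negweight)$.

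The soundness identity then falls out of the uniform multiplicity: for each $\mu$,
\begin{equation*}
  \sum_{f(\mu')=\mu}\pro[\mu'\mid\sentence\land\Pi]=\frac{(k!)^{|\domain|}\typeweight{\mu}}{(k!)^{|\domain|}\,\symwfomc(\sentence\land\forall x\exists_{=k}y:P(x,y))}=\pro[\mu\mid\sentence\land\forall x\exists_{=k}y:P(x,y)],
\end{equation*}
which is exactly the condition of Definition~\ref{de:soundness}. I expect the only delicate point to be the counting observation of the second paragraph --- the chain of (in)equalities $k|\domain|\le\sum_i|R_i^P|=|P|=k|\domain|$ that pins each $R_i^P$ down to a function with pairwise-distinct values --- whereas the weight preservation and the $(k!)^{|\domain|}$ fibre count are routine, since the auxiliary predicates carry unit weight and the reassignments decouple across domain elements.
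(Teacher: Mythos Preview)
Your proposal is correct and follows essentially the same route as the paper: the same projection map $f=\proj{\cdot}{\mathcal{P}_\sentence\cup\{P\}}$, the same counting argument that the constraints in $\Pi$ force each $R_i^P$ to be a function with pairwise-distinct values (your clean chain $k|\domain|\le\sum_i|R_i^P|=|P|=k|\domain|$ is in fact tidier than the paper's proof-by-contradiction), the same $(k!)^{|\domain|}$ fibre count, and the same cancellation to obtain the soundness identity.
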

\begin{proof}
  Let $f(\cdot) = \proj{\cdot}{\mathcal{P}_\sentence\cup\{P\}}$ be a mapping function.
  We first show that $f$ is from $\fomodels{\sentence\land\Pi}{\domain}$ to $\fomodels{\sentence\land\forall x\exists_{=k}: P(x,y)}{\domain}$: if $\structure\models \sentence\land \Pi$ then $f(\structure)\models\sentence\land\forall x\exists_{=k}y: P(x,y)$.
  
  The sentence $\Pi$ means that for every $c_1, c_2\in\domain$ such that $P(c_1,c_2)$ is true, there is exactly one $i\in[k]$ such that $R_i^P(c_1, c_2)$ is true.
  Thus we have that $\sum_{i\in[k]} |R_i^P| = |P| = k\cdot|\domain|$, which together with $\bigwedge_{i\in[k]}\forall x\exists y: R_i^P(x,y)$ implies that $|R_i^P| = k$ for $i\in[k]$.
  We argue that each $R_i^P$ is a function predicate in the sense that $\forall x\exists_{=1}y: R_i^P(x,y)$ holds in any model of $\sentence\land\Pi$.
  Let us suppose, for contradiction, that $(\forall x\exists y: R_i^P(x,y))\land (|R_i^P| = k)$ holds but there is some $a\in\domain$ such that $R_i^P(a, b)$ and $R_i^P(a,b')$ are true for some $b \neq b'\in\domain$.
  We have $|\{(x,y)\in\domain^2\mid R_i^P(x,y)\land x\neq a\}| \ge |\domain| - 1$ by the fact $\forall x\exists y: R_i^P(x,y)$. 
  It follows that $|R_i^P| \ge |\{(x,y)\in\domain^2\mid R_i^P(x,y)\land x\neq a\}| + 2 > |\domain|$, which leads to a contradiction.
  Since all of $R_i^P$ are function predicates, it is easy to check $\forall x\exists_{=k}y: P(x,y)$ must be true in any model $\mu$ of $\sentence\land\Pi$, i.e., $f(\mu)\models \sentence\land\forall x\exists_{=k}y: P(x,y)$.

  To finish the proof, one can easily show that, for every model $\mu\in\fomodels{\sentence\land\forall x\exists_{=k}y: P(x,y)}{\domain}$, there are exactly $(k!)^{|\domain|}$ models $\mu'\in\fomodels{\sentence\land\Pi}{\domain}$ such that $f(\mu') = \mu$.
  The reason for this is that 1) if, for any $a\in\domain$, we permute $b_1, b_2, \dots, b_k$ in $R_1^P(a, b_1), R_2^P(a, b_2), \dots, R_k^P(a, b_k)$ in the model $\mu'$, we get another model of $\sentence\land\Pi$, and 2) up to these permutations, the predicates $R_i^P$ in $\mu'$ are determined uniquely by $\mu$.
  Finally, the weights of all these $\mu'$s are the same as those of $\mu$, and we can write
  \DoubleColumn
  \begin{align*}
    &\sum_{\substack{\mu'\in\fomodels{\sentence\land\Pi}{\domain}:\\f(\mu')=\mu}} \pro[\mu'\mid\sentence\land\Pi]\\
    &\qquad =\frac{\sum_{\substack{\mu'\in\fomodels{\sentence\land\Pi}{\domain}:\\f(\mu')=\mu}} \mu'}{\symwfomc(\sentence\land\Pi,\domain,\weight, \negweight)}\\
    &\qquad =\frac{(k!)^{|\domain|}\cdot \typeweight{\mu}}{(k!)^{|\domain|}\cdot \symwfomc(\sentence\land\forall x\exists_{=k}y: P(x,y), \domain, \weight, \negweight)}\\
    &\qquad =\frac{\typeweight{\mu}}{\symwfomc(\sentence\land\forall x\exists_{=k}y: P(x,y), \domain, \weight, \negweight)}\\
    &\qquad =\pro[\mu\mid\sentence\land\forall x\exists_{=k}y: P(x,y)],
  \end{align*}
  \DoubleColumnEnd
  \SingleColumn
  \begin{align*}
    \sum_{\substack{\mu'\in\fomodels{\sentence\land\Pi}{\domain}:\\f(\mu')=\mu}} \pro[\mu'\mid\sentence\land\Pi] &= \frac{\sum_{\substack{\mu'\in\fomodels{\sentence\land\Pi}{\domain}:\\f(\mu')=\mu}} \mu'}{\symwfomc(\sentence\land\Pi,\domain,\weight, \negweight)}\\
    & =\frac{(k!)^{|\domain|}\cdot \typeweight{\mu}}{(k!)^{|\domain|}\cdot \symwfomc(\sentence\land\forall x\exists_{=k}y: P(x,y), \domain, \weight, \negweight)}\\
    & =\frac{\typeweight{\mu}}{\symwfomc(\sentence\land\forall x\exists_{=k}y: P(x,y), \domain, \weight, \negweight)}\\
    & =\pro[\mu\mid\sentence\land\forall x\exists_{=k}y: P(x,y)],
  \end{align*}
  \SingleColumnEnd
  which completes the proof.
\end{proof}

\begin{lemma}
  Let $\sentence$ be a first-order logic sentence, $\domain$ be a domain, and $P$ be a predicate.
  Then the WFOMS $(\sentence\land\forall_{=k}\forall y: P(x,y), \domain, \weight, \negweight)$ can be reduced to $(\sentence\land(|U| = k)\land (\forall x: U(x)\Leftrightarrow (\forall y: P(x,y))), \domain, \weight, \negweight)$, where $U$ is an auxiliary unary predicate with weight $\weight(U) = \negweight(U) = 1$, and the reduction is sound.
  \label{le:existsforallred}
\end{lemma}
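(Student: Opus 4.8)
The plan is to follow the template of the proof of Lemma~\ref{le:forallexistsred}, but to exploit a feature that makes the present case strictly simpler: the auxiliary predicate $U$ is \emph{fully determined} by the interpretation of $P$, so the reduction will be an exact weight-preserving bijection rather than a many-to-one map. Concretely, I would take the mapping function to be the projection $f(\cdot) = \proj{\cdot}{\mathcal{P}_\sentence\cup\{P\}}$ that forgets the interpretation of $U$, write $\Pi' := (|U| = k)\land(\forall x: U(x)\Leftrightarrow(\forall y: P(x,y)))$ for the target constraint, and set $\mathcal{M}_1 = \fomodels{\sentence\land\exists_{=k}x\forall y: P(x,y)}{\domain}$ and $\mathcal{M}_2 = \fomodels{\sentence\land\Pi'}{\domain}$. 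The goal is to verify Definition~\ref{de:soundness} for $f$ viewed as a map $\mathcal{M}_2\to\mathcal{M}_1$.

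I would carry out four steps in order. First, I would check that $f$ indeed lands in $\mathcal{M}_1$: in any $\structure\in\mathcal{M}_2$ the equivalence axiom forces $U(a)$ to hold exactly when $\forall y: P(a,y)$ holds, so the constraint $|U| = k$ says that precisely $k$ domain elements satisfy $\forall y: P(x,y)$, which is the semantics of $\exists_{=k}x\forall y: P(x,y)$; hence $f(\structure)\models\sentence\land\exists_{=k}x\forall y: P(x,y)$. Second, I would establish bijectivity: given any $\mu\in\mathcal{M}_1$, the equivalence axiom leaves no freedom in interpreting $U$ (set $U(a)$ true iff $P(a,b)$ holds for every $b\in\domain$), and this unique extension automatically satisfies $|U| = k$ because $\mu$ witnesses exactly $k$ elements with $\forall y: P(x,y)$; thus $f^{-1}(\mu)$ is a single structure of $\mathcal{M}_2$. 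Third, since $\weight(U)=\negweight(U)=1$, every ground literal of $U$ contributes weight $1$, so $\typeweight{f(\mu')} = \typeweight{\mu'}$ for all $\mu'\in\mathcal{M}_2$, and in particular $\symwfomc(\sentence\land\Pi', \domain, \weight, \negweight) = \symwfomc(\sentence\land\exists_{=k}x\forall y: P(x,y), \domain, \weight, \negweight)$. Fourth, I would conclude that the soundness sum collapses to a single term reading $\pro[\mu\mid\sentence\land\exists_{=k}x\forall y: P(x,y)] = \pro[f^{-1}(\mu)\mid\sentence\land\Pi']$, which follows by dividing the equal numerators by the equal normalizing constants.

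The only point requiring genuine care — rather than real difficulty — is the second step: one must confirm that $|U| = k$ transcribes the counting quantifier $\exists_{=k}$ exactly and that $U$ carries no independent degree of freedom beyond what $P$ dictates. This determinacy is precisely what distinguishes the present reduction from Lemma~\ref{le:forallexistsred}: there the function predicates $R_i^P$ admitted $(k!)^{|\domain|}$ permutations, producing a many-to-one correspondence whose multiplicity had to be cancelled; here the bijectivity of $f$ is immediate, so no combinatorial multiplicity factor appears and the weight bookkeeping is trivial.
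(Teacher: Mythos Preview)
Your proposal is correct and is precisely the straightforward argument the paper has in mind; the paper's own proof consists of the single sentence ``The proof is straightforward.'' Your four-step verification via the projection $f(\cdot)=\proj{\cdot}{\mathcal{P}_\sentence\cup\{P\}}$, together with the observation that the equivalence axiom makes $U$ fully determined (hence $f$ is a weight-preserving bijection), is exactly the intended route.
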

\begin{proof}
  The proof is straightforward.
\end{proof}

\begin{proof}[Proof of Lemma~\ref{le:ctworeduction}]
  We can first get rid of all formulas of the form $\exists_{=k}x \forall y: P(x,y)$ by repeatedly using Lemma~\ref{le:existsforallred}.
  Then we can use Lemma~\ref{le:forallexistsred} repeatedly to eliminate the formulas of the form $\forall x\exists_{=k} y: P(x,y)$.
  The whole reduction is sound due to the transitivity of soundness.
\end{proof}

\subsection{Applying Domain Recursion Scheme on \ctwo{} is Possible}
\label{sub:ctwoliftable}

The \ctwo{} sentences that we need to handle are of the form
\begin{equation}
  \sentence \land \bigwedge_{k\in[q]}(\forall x: A_k(x) \Leftrightarrow (\exists_{=m_k} y: R_k(x,y))),
  \label{eq:ctwonromal}
\end{equation}
where $\sentence$ is a \fotwo{} sentence, each $R_k(x,y)$ is an atomic formula, and each $A_k$ is an auxiliary Tseitin predicate.
Any WFOMS problem on \ctwo{} can be reduced to a new one, whose input sentence is of the above form and $\max_{k\in[q]} m_k \le |\domain|$, by the following steps:
\begin{itemize}
  \item Convert each counting-quantified formula of the form $\exists_{\ge m} y: \extformula(x,y)$ to $\neg (\exists_{\le m-1} y: \extformula(x,y))$.
  \item Decompose each $\exists_{\le m} y: \extformula(x,y)$ into $(\forall y: \neg \extformula(x,y)) \land \bigvee_{i\in[m]} (\exists_{=i} y: \extformula(x,y))$.
  \item Replace each subformula $\exists_{=m} y: \extformula(x,y)$, where $m > |\domain|$, with $\code{False}$.
  \item Starting from the atomic level and working upwards, replace any subformula $\exists_{=m} \extformula(x,y)$, where $\extformula(x,y)$ is a formula that does not contain any counting quantifier, with $A(x)$; and append $\forall x\forall y: R(x,y) \Leftrightarrow \extformula(x,y)$ and $\forall x: A(x)\Leftrightarrow (\exists_{=m}y: R(x,y))$, where $R$ is an auxiliary binary predicate, to the original sentence.
\end{itemize}
It is easy to check that the reduction presented above is sound and independent of the domain size if the domain size is greater than the maximum counting parameter $m$ in the input sentence.\footnote{This condition does not change the data complexity of the problem, as all the counting parameters in the sentence are considered constants but not the input of the problem.}

We first sample the 1-types of each element from the sentence \eqref{eq:ctwonromal} so that all the predicates $A_k$ will be eliminated.
The resulting WFOMS is then defined on the following sentence:
\begin{equation*}
  \sentence_0 \land \bigwedge_{k\in[q]} \left(\bigwedge_{e\in\domain_k^\exists} \exists_{=m_k} y: R_k(e, y)\land\bigwedge_{e\in\domain_k^\nexists} \neg (\exists_{=k}y: R_k(e,y))\right),
\end{equation*}
where $\sentence_0$ is the simplified sentence of $\sentence$ by replacing all its unary literals with their truth values, $\domain_k^\exists$ contains precisely the elements with positive sampled literals $A_k(e)$, and $\domain_k^\nexists = \domain \setminus \domain_k^\exists$.

We need to consider a more general WFOMS problem to apply our domain recursion scheme.
For each counting quantified formula $\exists_{=k} y: R_k(x,y)$, we introduce $2 m_k$ new unary predicates $Z^\exists_{k, 1}, Z^\exists_{k, 2},\dots, Z^\exists_{k, m_k}, Z^\nexists_{k, 1}, Z^\nexists_{k, 2}, \dots, Z^\nexists_{k, m_k}$, and append the conjunction of
\begin{equation*}
  \forall x: \left(Z^\exists_{k, t}(x) \Leftrightarrow (\exists_{=t} y: R_k(x,y))\right) \land \left(Z^\nexists_{k, t}(x)\Leftrightarrow \neg (\exists_{=t} y: R_k(x,y))\right)
\end{equation*}
over $t\in[m_k]$ to $\sentence_0$, resulting in a new sentence $\sentence_1$.
The more general WFOMS is then defined on
\begin{equation}
  \sentence_1 \land \bigwedge_{i\in[n]} \nu_i(e_i),
  \label{eq:ctwogeneral}
\end{equation}
where each $\nu_i(x)$ is a quantifier-free conjunction over a subset of $\{Z_{k, t}^\exists(x)\}_{t\in[m_k]}\cup \{Z_{k, t}^\nexists(x)\}_{t\in[m_k]}$.
It is easy to check that the original WFOMS of \eqref{eq:ctwonromal} is reducible to the more general WFOMS problem, and the reduction is sound and independent of the domain size.

We show that the domain recursion scheme is still applicable to the WFOMS of \eqref{eq:ctwogeneral}.
We only provide an intuition here while leaving the details for the future version of this paper.
Additionally, we hope to discover a more practical and efficient solution in the future that would introduce fewer unary predicates, despite the current approach being domain-lifted.

The intuition is that we can view $\nu_i(x)$ as the ``block type'' similar to what we have done in the WMS of \fotwo{}.
Then the domain recursion strategy is applied, first sampling the substructure of an element, and then updating each block type accordingly.
The updated block types can still be represented by the unary predicates $Z_{k,t}^\exists$ and $Z_{k,t}^\nexists$, and the new sampling problem is reducible to a new WFOMS of the general form.
Following the similar argument for sampling \fotwo{}, the corresponding WMS for \ctwo{} is also lifted, which means that the full fragment of \ctwo{} is liftable under sampling.

\subsection{Missing Details of Experiments}

\subsubsection{Experiment Settings}
\label{sub:expsettings}

\paragraph{Sampling Combinatorial Structures}

The corresponding WFOMS problems for the uniform generation of combinatorial structures used in our experiments are presented as follows. 
The weighting functions $\weight$ and $\negweight$ map all predicates to $1$.
\begin{itemize}
  \item Functions: 
  \begin{equation*}
    \forall x\exists_{=1} y: f(x,y).
  \end{equation*}
  \item Functions w/o fix points:
  \begin{equation*}
    (\forall x\exists_{=1}y: f(x,y))\land (\forall x: \neg f(x,x)).
  \end{equation*}
  \item Permutations:
  \begin{equation*}
    (\forall x\exists_{=1}y: Per(x,y)) \land (\forall y\exists_{=1}x: Per(x,y)).
  \end{equation*}
  \item Permutation without fix-points:
  \DoubleColumn
  \begin{align*}
    &(\forall x\exists_{=1}y: Per(x,y)) \land (\forall y\exists_{=1}x: Per(x,y))\\
    &\land (\forall x: \neg Per(x,x)).
  \end{align*}
  \DoubleColumnEnd
  \SingleColumn
  \begin{equation*}
    (\forall x\exists_{=1}y: Per(x,y)) \land (\forall y\exists_{=1}x: Per(x,y))\land (\forall x: \neg Per(x,x)).
  \end{equation*}
  \SingleColumnEnd
\end{itemize}

\paragraph{Sampling from MLNs}

An MLN is a finite set of weighted first-order formulas $\{(\weight_i, \formula_i)\}_{i\in[m]}$, where each $\weight_i$ is either a real-valued weight or $\infty$, and $\formula_i$ is a first-order formula.
Let $\mathcal{P}$ be the vocabulary of $\formula_1,\formula_2,\dots,\formula_m$.
An MLN $\Phi$ paired with a domain $\domain$ induces a probability distribution over $\mathcal{P}$-structures (also called possible worlds):
\begin{equation*}
  p_{\mln, \domain}(\world) := \begin{cases}
    \frac{1}{Z_{\mln,\domain}}\exp\left(\sum_{(\formula, \weight)\in\mln_\real}\weight\cdot \#(\formula, \world)\right) & \textrm{if } \world\models \mln_\infty\\
    0 & \textrm{otherwise}
  \end{cases}
\end{equation*}
where $\mln_\real$ and $\mln_\infty$ are the real-valued and $\infty$-valued formulas in $\mln$ respectively, and $\#(\formula, \world)$ is the number of groundings of $\formula$ satisfied in $\world$.
The sampling problem on an MLN $\mln$ over a domain $\domain$ is to randomly generate a possible world $\world$ according to the probability $p_{\mln, \domain}(\world)$.

The reduction from the sampling problems on MLNs to WFOMS can be performed as follows.
For every real-valued formula $(\formula_i, \weight_i)\in\mln_\real$, where the free variables in $\formula_i$ are $\vecx$, we introduce a novel auxiliary predicate $\xi_i$ and create a new formula $\forall \vecx: \xi_i(\vecx)\Leftrightarrow \formula_i(\vecx)$.
For formula $\formula_i$ with infinity weight, we instead create a new formula $\forall \vecx: \formula_i(\vecx)$.
Denote the conjunction of the resulting set of sentences by $\sentence$, and set the weighting function to be $\weight(\xi_i)=\exp(\weight_i)$ and $\negweight(\xi_i) = 1$, and for all other predicates, we set both $\weight$ and $\negweight$ to be $1$.
Then the sampling problem on $\mln$ over $\domain$ is reduced to the WFOMS $(\sentence, \domain, \weight, \negweight)$.

By the reduction above, we can write the two MLNs used in our experiments to WFOMS problems.
The weights of predicates are all set to be $1$ unless otherwise specified.
\begin{itemize}
  \item Friends-smokers MLN: the reduced sentence is
  \begin{align*}
    &(\forall x: \neg fr(x,x)\land sm(x))\\
    &\land (\forall x\forall y: fr(x,y)\Leftrightarrow fr(y,x))\\
    &\land (\forall x\forall y: \xi(x,y)\Leftrightarrow (fr(x,y)\land sm(x)\Rightarrow sm(y)))\\
    &\land (\forall x\exists y: fr(x,y)),
  \end{align*}
  and the weight of $\xi$ is set to be $\weight(\xi) = \exp(0.2)$.
  \item Employment MLN: the corresponding sentence is
  \begin{align*}
    \forall x: \xi(x)\Leftrightarrow (\exists y: workfor(x,y)\lor boss(x)),
  \end{align*}
  and the weight of $\xi$ is set to be $\exp(1.3)$.
\end{itemize}

\subsubsection{More Experimental Results}
\label{sub:expresults}

\paragraph{The Kolmogorov–Smirnov Test}
% \label{subsub:kstest}

We utilized the Kolmogorov-Smirnov (KS) test~\cite{massey1951kolmogorov} to validate the conformity of the (count) distributions produced by our algorithm to the reference distributions.
The KS test used here is based on the multivariate Dvoretzky–Kiefer–Wolfowitz (DKW) inequality recently proved by \cite{naaman2021tight}.

Let $\mathbf{X}_1 = (X_{1i})_{i\in[k]}, \mathbf{X}_2 = (X_{2i})_{i\in[k]},\dots,\mathbf{X}_n = (X_{ni})_{i\in[k]}$ be $n$ real-valued independent and identical distributed multivariate random variables with cumulative distribution function (CDF) $F(\cdot)$.
Let $F_n(\cdot)$ be the associated empirical distribution function defined by
\begin{equation*}
  F_n(\vecx) := \frac{1}{n}\sum_{i\in[n]} \indicator_{X_{i1}\le x_1,X_{i2}\le x_2,\dots, X_{ik}\le x_k}, \qquad \vecx\in\real^k.
\end{equation*}
The DKW inequality states
\begin{equation}
  \pro\left[\sup_{\vecx\in\real^k}|F_n(\vecx) - F(\vecx)| > \epsilon\right] \ge (n+1)ke^{-2n\epsilon^2}
  \label{eq:dkw}
\end{equation}
for every $\epsilon, n, k > 0$.
When the random variables are univariate, i.e., $k=1$, we can replace $(n+1)k$ in the above probability bound by a tighter constant $2$.

\begin{table}[!htb]
  \caption{The Kolmogorov-Smirnov Test}
  \label{ta:kstest}
  \centering
  \begin{tabular}{|c|c|c|}
    \hline
    Problem                      & Maximum deviation & Upper bound \\ \hline \hline
    graphs w/o isolated vertices & 0.0036            & 0.0049      \\ \hline
    2-regular graphs             & 0.0065            & 0.0069      \\ \hline
    functions                    & 0.0013            & 0.0024       \\ \hline
    functions w/o fix-points     & 0.0027            & 0.0042      \\ \hline
    permutations                 & 0.0071            & 0.0124      \\ \hline
    permutations w/o fix-points  & 0.019            & 0.02      \\ \hline
    friends-smokers              & 0.0021            & 0.0087      \\ \hline
    employmenet                  & 0.0030            & 0.0087      \\ \hline
  \end{tabular}
\end{table}

In the KS test, the null hypothesize is that the samples $\mathbf{X}_1, \mathbf{X}_2,\dots,\mathbf{X}_n$ are distributed according to some reference distribution, whose CDF is $F(\cdot)$.
Then by \eqref{eq:dkw}, with probability $1-\alpha$, the maximum deviation $\sup_{\vecx\in\real^k}|F_n(\vecx) - F(\vecx)|$ between empirical and reference distributions is bounded by $\epsilon = \sqrt{\ln(k(n+1)/\alpha)/2n}$ ($\sqrt{\ln(2 / \alpha) / 2n}$ for the univariate case).
If the actual value of the maximum deviation is larger than $\epsilon$, we can reject the null hypothesis at the confidence level $\alpha$.
Otherwise, we cannot reject the null hypothesis, i.e., the empirical distribution of the samples is not statistically different from the reference one.
In our experiments, we choose $\alpha = 0.05$ as a significant level.

For the uniform generation of combinatorial structures, we assigned each model a lexicographical number and treated the model index as a random variable with a discrete uniform distribution.
For the sampling problems of MLNs, we test their count distributions against the true count distributions.
Table~\ref{ta:kstest} shows the maximum deviation between the empirical and reference cumulative distribution functions, along with the upper bound set by the DKW inequality. 
As shown in Table~\ref{ta:kstest}, all maximum deviations are within their respective upper bounds.
Therefore, we cannot reject any null hypotheses, i.e., there is no statistically significant difference between the two sets of distributions.

\subsection{Missing Details of WMS}

\subsubsection{Optimizations for WMS}
\label{sub:optwms}

There exist several optimizations to make it more practical. 
Here, we present some of them that are used in our implementation.
\begin{itemize}
  \item The complexity of $\code{DRSampler}$ heavily depends on the recursion depth.
  In our implementation, when selecting a domain element $e_t$ for sampling its substructure, we always chose the element with the ``strongest'' existential constraint that contains the most Tseitin atoms $Z_k(x)$.
  It would help $\code{DRSampler}$ fast reach the condition that the existential constraint for all elements is $\top$.
  In this case, $\code{DRSampler}$ will invoke the more efficient WMS for \ufotwo{} to sample the remaining substructures.
  \item Let $\mathcal{P}_\exists$ be the union of vocabularies of the existentially quantified formulas
  \begin{equation*}
      \mathcal{P}_\exists := \mathcal{P}_{\extformula_1(x,y)} \cup \mathcal{P}_{\extformula_2(x,y)} \cup \dots \cup \mathcal{P}_{\extformula_m(x,y)}. 
  \end{equation*}
  We further decomposed the sampling probability $\pro[\structure\mid\sentence_T\land \bigwedge_{i\in[n]}\eta_i(e_i)]$ into
  \begin{equation*}
    \pro\left[\structure\mid\sentence_T\land\bigwedge_{i\in[n]}\eta_i(e_i)\land\structure_\exists\right]\cdot \pro\left[\structure_\exists\mid\sentence_T\land \bigwedge_{i\in[n]}\eta_i(e_i)\right],
  \end{equation*}
  where $\structure_\exists$ is a $\mathcal{P}_\exists$-structure over $\domain$.
  It decomposes the conditional sampling problem of $\structure$ into two subproblems---one is to sample $\structure_\exists$ and the other sample the remaining substructures conditional on $\structure_\exists$.
  The advantage of this decomposition is that the latter subproblem can be reduced into a sampling problem on a \ufotwo{} sentence, since all existentially-quantified formulas have been satisfied with $\structure_\exists$.
  The first subproblem for sampling $\structure_\exists$ can be solved by a similar algorithm of $\code{DRSampler}$.
  In this algorithm, the 2-tables used to partition cells are now defined over $\mathcal{P}_\exists$, whose size is exponentially smaller than the one in the original algorithm.
  As a result, the enumeration of 2-table configurations in Algorithm~\ref{alg:drsampler} will be exponentially faster.
  \item We cached $\mathcal{W}_\vecn$, the weight $\typeweight{\tau^i}$ of all 1-types and the weight $\typeweight{\pi^i}$ of all 2-tables, which are widely used in our WMS. 
\end{itemize}

\subsubsection{The Function $\code{ExSat}(\cdot, \cdot)$}
\label{sub:exsat}

The pseudo-code of $\code{ExSat}$ is presented in Algorithm~\ref{alg:exsat}.

\begin{algorithm}[H] 
  \caption{ExSat($\vecg, \eta$)} 
  \label{alg:exsat}
  \begin{algorithmic}[1]
  \State Decompose $\vecg$ into $\{g_{\eta^i,\pi^j}\}_{i\in[N_c],j\in[N_b]}$
  \State $(\beta, \tau)\gets (\eta)$
  \State // Check the coherence of 2-tables
  \For{$i\in[N_u]$}
    \For{$j\in[N_b]$}
        \State // $\tau(\eta^i)$ is the 1-type in $\eta^i$
        \If{$\pi^j$ is not coherent with $\tau(\eta^i)$ and $\tau$ and $g_{\eta^i, \pi^j} > 0$}
            \State \Return \code{False}
        \EndIf
    \EndFor
  \EndFor
  \State // Check the satisfaction of existentially-quantified formulas
  \State $\forall j\in[N_b], h_{\pi^j}\gets\sum_{i\in[N_c]} g_{\eta^i, \pi^j}$
  \For{$Z_k(x)\in\beta$}
    \For{$j\in[N_b]$}
        \If{$R_k(x,y)\in\pi^j$ and $h_{\pi^j} > 0$}
            \State \Return \code{True}
        \EndIf
    \EndFor
  \EndFor
  \State \Return \code{False}
  \end{algorithmic}
\end{algorithm}

\subsubsection{A Lifted WMS for \fotwo{} with Cardinality Constraints}
\label{sub:wmscc}

We present the modified sampling algorithm for \fotwo{} with cardinality constraints.
The changes from the original WMS for \fotwo{} are highlighted by the blue lines.

\begin{algorithm}[!htb] 
  \caption{$\code{WMS}(\sentence, \Upsilon, \domain, \weight, \negweight)$}
  % \SetKwProg{generate}{Function \emph{generate}}{}{end}
  \begin{flushleft}
    \textbf{INPUT:} An \fotwo{} sentence $\sentence$ of the form \eqref{eq:scott_form}, a set of cardinality constraints $\Upsilon$ of the form~\eqref{eq:cc}, a domain $\domain=\{e_i\}_{i\in[n]}$ of size $n$, a weighting $(\weight, \negweight)$\\
    \textbf{OUTPUT:} A model $\mu$ of $\sentence\land\Upsilon$ over $\domain$\\
  \end{flushleft}
  \begin{algorithmic}[1]
    \State Construct $\sentence$ to $\sentence_T$ by~\eqref{eq:tseitin_reduction}
    \State $\forall i\in[n], \beta_i(x)\gets\bigwedge_{k\in[m]}Z_k(x)$
    \State $\generalsentence\gets\sentence_T\land\bigwedge_{i\in[n]}\beta_i(e_i)$
    \State \textcolor{blue}{$\left(\tau_i\right)_{i\in[n]}\gets\code{OneTypeSampler}(\generalsentence, \Upsilon, \domain, \weight, \negweight)$}
    \State $\forall i\in[n], \eta_i\gets (\beta_i, \tau_i)$
    \State \textcolor{blue}{$\mu\gets \code{DRSampler}(\sentence_T, \Upsilon, \domain,\weight, \negweight,(\eta_i)_{i\in[n]})$}
    \State \Return $\mu$
  \end{algorithmic}
\end{algorithm}

\begin{algorithm}[!htb] 
  \caption{OneTypeSampler($\generalsentence, \Upsilon, \domain, \weight, \negweight$)} 
  \label{alg:ccsampling_cell_types}
  \begin{algorithmic}[1]
  \State \textcolor{blue}{$W\gets \symwfomc(\generalsentence\land\Upsilon, \domain, \weight, \negweight)$}
  \State Obtain the blocks $B_{\beta^1}, B_{\beta^2}, \dots, B_{\beta^{2^m}}$ from $\generalsentence$
%   \State // $\code{Prod}(\dots)$ is the Cartesian product
  \For{$\left(\vecn_{\beta^i}\right)_{i\in[2^m]}\in\code{Prod}\left(\mathcal{T}_{|B_{\beta^1}|, N_u}, \dots, \mathcal{T}_{|B_{\beta^{2^m}}|, N_u}\right)$}
    \State $\vecn \gets \bigoplus_{i\in[2^m]} \vecn_{\beta^i}$
    % \State // Compute the weighted sum over all models with the cell configuration
    \State \textcolor{blue}{Compute $\sumweight_{\vecn, \Upsilon}$}
    % \State // $\binom{|B_{\beta^t}|}{\vecn_{\beta^t}}$'s are multinomial coefficients
    \State \textcolor{blue}{$W' \gets \sumweight_{\vecn,\Upsilon} \cdot \prod_{t=1}^{2^m}\binom{|B_{\beta^t}|}{\vecn_{\beta^t}}$}
    \If{$\code{Uniform}(0, 1) < \frac{W'}{W}$}
      \State $\vecn^* \gets \vecn$
      \State \textbf{break}
    \Else
      \State $W \gets W - W'$
    \EndIf
  \EndFor
  \For{$i\in[2^m]$}
    \State Fetch the cell configuration $\vecn^*_{\beta^i}$ in $\beta^i$ from $\vecn^*$
    \State Randomly partition $B_{\beta^i}$ into $\{C_{\beta^i, \tau^j}\}_{j\in[N_u]}$ according to $\vecn^*_{\beta^i}$
    \For{$j\in[N_u]$}
      \State Assign the 1-type $\tau^j$ to all elements in $C_{\beta^i, \tau^j}$ 
    \EndFor
  \EndFor
  \end{algorithmic}
\end{algorithm}

\begin{algorithm}[!htb]
  \caption{$\code{DRSampler}(\sentence_T, \Upsilon, \domain, \weight, \negweight, (\eta_i)_{i\in[n]})$} 
  \label{alg:ccdrsampler}
  % \SetKwProg{generate}{Function \emph{generate}}{}{end}
  % \begin{flushleft}
  %   \textbf{INPUT:} A sentence $\sentence_T$ of the form \eqref{eq:general_sentence}, a domain $\domain=\{e_i\}_{i\in[n]}$, a weighting $(\weight, \negweight)$ and the cell type $\eta_i = (\tau_i,\beta_i)$ of each domain element $e_i$\\
  %   \textbf{OUTPUT:} A model $\mu$ of $\generalsentence$\\
  % \end{flushleft}
  \begin{algorithmic}[1]
    % \State $n\gets |\domain|$
    \If{$n = 1$}
    \State \Return $\emptyset$
    \EndIf
    \If{only the block of type $\beta =\top$ is nonempty}
      \State \textcolor{blue}{\Return a model $\mu$ sampled by a \ufotwo{} WMS from $\forall x\forall y: \fotwoformula(x,y)\land\bigwedge_{i\in[n]}\tau_i(e_i)\land\Upsilon$ over $\domain$ under $(\weight, \negweight)$}
    \EndIf
    \State Choose $t\in[n]$; $\mu \gets \tau_t(e_t)$; $\domain'\gets\domain \setminus \{e_t\}$
    \State Get the cell configuration $\vecn = \left(n_{\eta^i}\right)_{i\in[N_c]}$ of $\left(\eta_i\right)_{i\in[n]\setminus\{t\}}$
    \State \textcolor{blue}{$W \gets \sumweight_{\vecn, \Upsilon}$}
    \State $\forall i\in[N_c], T_i \gets \mathcal{T}_{n_{\eta^i}, N_b}$
    \For{$\left(\vecg_{\eta^i}\right)_{i\in[N_c]}\gets\code{Prod}(\mathcal{T}_{n_{\eta^1}, N_b}, \dots, \mathcal{T}_{n_{\eta^{N_c}}, N_b})$}
    \State $\vecg\gets\bigoplus_{i\in[N_c]}\vecg_{\eta^i}$
      \If{\textcolor{blue}{$\code{ExSat}\left(\vecg, \eta_t, \Upsilon\right)$}}
        \State Get the new cell configuration $\vecn'$ w.r.t. $\vecg$ by \eqref{eq:configuration_reduction}
        \State \textcolor{blue}{Get the new cardinality constraints $\Upsilon'$ w.r.t. $\vecg$ by \eqref{eq:reduced_cc}}
        \State \textcolor{blue}{Compute $\sumweight_{\vecn', \Upsilon}$}
        \State \textcolor{blue}{$W'\gets \sumweight_{\vecn', \Upsilon}\cdot \typeweight{\tau_t} \cdot  \prod_{i\in[N_c]} \binom{n_{\eta^i}}{\vecg_{\eta^i}}\cdot \mathbf{w}^{\vecg_{\eta^i}}$}
        \If{$\code{Uniform}(0,1) < \frac{W'}{W}$}
          \State $\vecg^* \gets \vecg$
          \State \textbf{break}
        \Else
          \State $W \gets W - W'$
        \EndIf
      \EndIf
    \EndFor
    \State Obtain the cell partition $\{C_{\eta^i}\}_{i\in[N_c]}$ from $(\eta_i)_{i\in[n]\setminus\{t\}}$
    \For{$i\in[N_c]$}
      \State Fetch the cardinality vector $\vecg^*_{\eta^i}$ of $\eta^i$ from $\vecg^*$
      \State Randomly partition the cell $C_{\eta^i}$ into $\left\{G_{\eta^i, \pi^j}\right\}_{j\in[N_b]}$ according to $\vecg^*_{\eta^i}$
      \For{$j\in[N_b]$}
        \State $\mu \gets \mu \cup \left\{\pi^j(e_t, e)\right\}_{e\in G_{\eta^i, \pi^j}}$
        \State $\forall e_s\in G_{\eta^i, \pi^j}, \eta_s' \gets \relaxcelltype{\eta_s}{\pi^j}$
      \EndFor
    \EndFor
    \State \textcolor{blue}{Obtain the reduced cardinality constraints $\Upsilon'$ w.r.t. $\vecg^*$ by~\eqref{eq:reduced_cc}}
    \State \textcolor{blue}{$\mu\gets \mu \cup \code{DRSampler}(\sentence_T, \Upsilon', \domain', \weight, \negweight, \left(\eta_i'\right)_{i\in[n-1]})$}
    \State \Return $\mu$
  \end{algorithmic}
\end{algorithm}

\clearpage

\bibliographystyle{IEEEtranN}
\bibliography{SRL}

\end{document}